\renewcommand{\cite}[1]{\citep{#1}}
\definecolor{DarkGreen}{rgb}{0.2,0.6,0.2}
\definecolor{DarkRed}{rgb}{0.9,0.2,0.2}
\definecolor{DarkBlue}{rgb}{0.2,0.2,0.9}
\definecolor{DarkPurple}{rgb}{0.4,0.2,0.4}
\definecolor{paynesgrey}{rgb}{0.25, 0.25, 0.28}
\global\boolfalse{cbx:parens}}
\def\thanksnosymbol#1{\protected@xdef\@thanks{\@thanks
        \protect\footnotetext{#1}}}
\title{Identification of Negative Transfers in Multitask Learning Using Surrogate Models}
\author{\name Dongyue Li \email li.dongyu@northeastern.edu \\
      \addr Northeastern University, Boston
      \ANDAuthor
      \name Huy L. Nguyen \email hu.nguyen@northeastern.edu \\
      \addr  Northeastern University, Boston
      \ANDAuthor
      \name Hongyang R. Zhang \email ho.zhang@northeastern.edu\\
      \addr Northeastern University, Boston
      }
\begin{document}

\maketitle

\begin{abstract}
Multitask learning is widely used in practice to train a low-resource target task by augmenting it with multiple related source tasks. Yet, naively combining all the source tasks with a target task does not always improve the prediction performance for the target task due to negative transfers. Thus, a critical problem in multitask learning is identifying subsets of source tasks that would benefit the target task. This problem is computationally challenging since the number of subsets grows exponentially with the number of source tasks; efficient heuristics for subset selection do not always capture the relationship between task subsets and multitask learning performances. In this paper, we introduce an efficient procedure to address this problem via surrogate modeling. In surrogate modeling, we sample (random) subsets of source tasks and precompute their multitask learning performances. Then, we approximate the precomputed performances with a linear regression model that can also predict the multitask performance of unseen task subsets. We show theoretically and empirically that fitting this model only requires sampling linearly many subsets in the number of source tasks. The fitted model provides a relevance score between each source and target task. We use the relevance scores to perform subset selection for multitask learning by thresholding. Through extensive experiments, we show that our approach predicts negative transfers from multiple source tasks to target tasks much more accurately than existing task affinity measures. Additionally, we demonstrate that for several weak supervision datasets, our approach consistently improves upon existing optimization methods for multitask learning.
\end{abstract}

\section{Introduction}

Multi-Task Learning (MTL) is an approach to combining several tasks together and learning one model for all tasks simultaneously \cite{caruana97cmu}.
The premise is that by combining the data samples of several tasks together, the dataset size of each task increases, thus improving the learning performance for every task.
However, naively using all the source tasks may worsen performance compared to Single-Task Learning (STL) for target tasks if there exist source tasks that are unrelated to them.
This problem is commonly referred to as negative transfers in the literature but is challenging to predict for many tasks \cite{rosenstein05}.

The importance of developing a better understanding of multiple learning performance is well recognized.
In the seminal work of \citet[Chapter 3]{caruana97cmu}, heuristics for judging ``task-relatedness'' in several applications are discussed;
For instance, tasks that share many input features are more likely to be related to each other.
A classical result by \citet{ben2010theory} introduces an $\cH$-divergence notion that quantifies the distance between two label distributions and relates the bias of source domains to this notion.
MTL may perform worse than STL if the bias is too large.
In weakly-supervised learning, several programmatic labeling functions are used to annotate a corpus of unlabeled data, and each labeling function can be treated as a source task \cite{ratner2016data}.
The task labels can be highly noisy, causing negative transfer during training even though the tasks share the same input features \cite{ratner2019training}.
In multitask learning of text prediction tasks, negative transfers are observed between different categories of tasks (e.g., question answering vs. sequence labeling) and different sizes of datasets \cite{vu2020exploring}.

Motivated by the need to reduce negative transfers between different tasks, researchers have developed optimization methods for multi-task learning from various fields.
The most thorough approach to addressing these issues is to train all possible combinations of source tasks with the target task and find which subset of source tasks improves performance on the main target task.
{If there are $k$ source tasks, then this approach requires training $2^k$ MTL models. This is impractical (e.g., when $k\ge 20$).}
A more efficient solution is to train combinations of every \emph{single} source task with the target task to determine if one task helps and then merge the helpful tasks together.
This approach captures pairwise transfers, which measures first-order task affinities from one source task to another task \cite{fifty2021efficiently}.
Regarding higher-order transfers from multiple source tasks to another task, approximation techniques such as averaging the first-order affinity scores of each source task have been explored \cite{standley2020tasks}.

\begin{figure}[t!]
    \centering
    \begin{subfigure}[b]{0.99\textwidth}
        \centering
        \includegraphics[width=0.99\textwidth]{./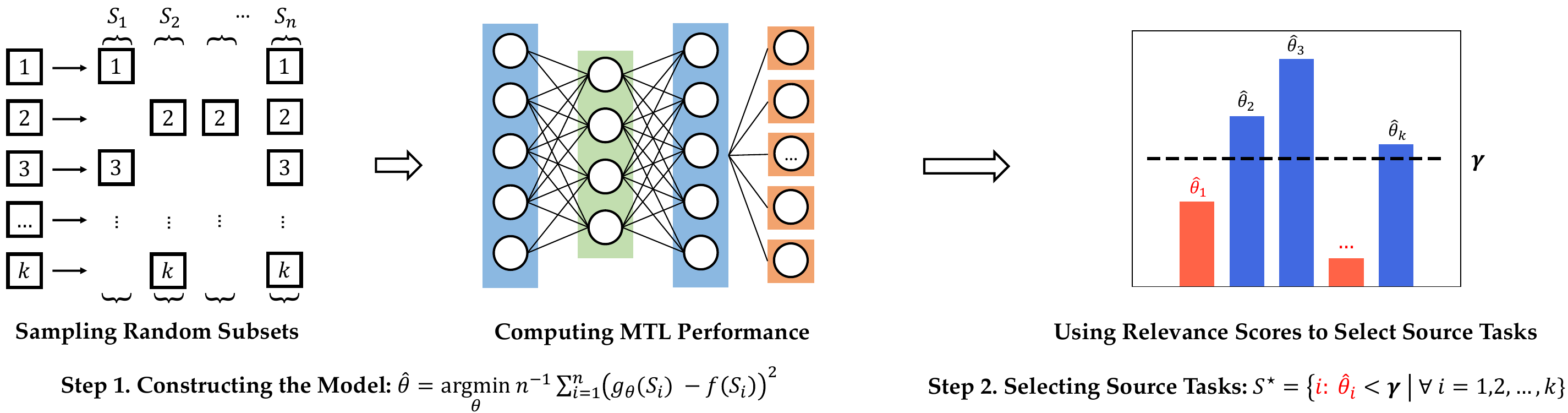}
    \end{subfigure}
  \caption{
  \revision{Our approach involves two steps, as shown in the figure above.
  In the first step, we sample $n$ random subsets of source tasks among $k$ source tasks. Let these random subsets be denoted as $S_1, S_2, \dots, S_n$.
  For $i$ from $1$ up to $n$, we train an MTL model using the combined data of all the tasks in $S_i$ and the target task.
  We evaluate the loss of this MTL model on an evaluation set from the target task and denote the evaluation loss as $f(S_i)$.
  Then, we train a linear regression model to estimate the relevance scores {$\theta_1, \theta_2, \dots, \theta_k$}, by minimizing the mean squared error between $g_{\theta}(S_i) = \sum_{j\in S_i}\theta_j$ and $f(S_i)$, as shown on the left.
  Let the estimated scores be denoted as $\hat\theta = [\hat\theta_1, \hat\theta_2, \dots, \hat\theta_k]$.
  To predict the MTL performance of any unseen subset $S$, we compute $g_{\hat\theta}(S) = \sum_{i \in S} \hat\theta_i$.
  In the second step, we perform subset selection by choosing any task whose relevance score is below a threshold $\gamma$; We choose this selection criterion by examining the subset $S$ that minimizes $g_{\hat\theta}(S) = \sum_{j \in S} \hat\theta_j$, as shown in the right figure.}}\label{fig1}
\end{figure}

This paper designs and analyzes a scalable approach to identify negative transfers from multiple source tasks to one target task.
\revision{The key idea is to construct a surrogate model to approximate the MTL performance of a subset of source tasks combined with the target task.
Compared with prior works that measure task-relatedness based on either gradient similarity \cite{yu2020gradient} or feature space alignment of neural networks \cite{wu2020understanding,raghu2019rapid,nguyen2020leep}, our approach can be used to identify negative transfers from a set of tasks to another task.}
It also differs from existing discrepancy notions (e.g., $\cH$-divergence) between source and target domains, which are difficult to measure for deep neural networks.
Our approach builds on a recent paper that designs datamodels to predict the predictions of deep neural networks trained on a subset of training data \cite{ilyas2022datamodels}.
Unlike their work (that studies single-task learning), we evaluate surrogate models for multi-task learning with deep neural networks.

\emph{The first step of our approach} involves learning a relevance score between every source task and the target task while accounting for the presence of the other source tasks.
Let $\theta_i$ denote the relevance score of task $i$, for $i$ from $1$ to $k$, where $k$ is the total number of source tasks.
Conceptually, $\theta_i$ is analogous to the importance score of a feature in random forests when hundreds of other features are available.
To estimate the relevance scores, we introduce a surrogate model $g_{\theta}(S)$, parametrized by the relevance scores $\theta$, to approximate MTL performances.
\revision{Given any subset of source tasks $S$, let $f(S)$ be a loss function that measures the performance of combining $S$ and the target task to train an MTL model and then evaluated on the target task.}
The value of $f(S)$ provides a relevance measure between $S$ and the target task. Recall that $\theta_i$ measures the relevance of task $i$ to the target task. Thus, a lower value of $\theta_i$ indicates a higher relevance of task $i$ to the target task.

\revision{We specify a linear surrogate model as $g_{\theta}(S) = \sum_{j\in S}\theta_j$ (parametrized by the relevance scores) and minimize the mean squared error between $g_{\theta}(S_i)$ and $f(S_i)$ over $n$ random subsets, for $i$ from $1$ to $n$.
In particular, we precompute the values of $f(S_1), f(S_2), \dots, f(S_n)$ by training one MTL model for each subset.
We use such a linear specification of $\theta$ because computing the performance of each subset requires training an MTL model, which is not scalable unless $n$ grows almost linearly in $k$.
In addition, we take inspiration from the recent work on datamodels \cite{ilyas2022datamodels}, which shows that a linear regression model can extrapolate the predictions of deep neural networks for subsets of training data.}
We rigorously analyze the sample complexity of our approach in Theorem \ref{thm_converge}.
After fitting $\theta$, we predict the performance of an unseen subset $S$ as $g_{\theta}(S)$ and compare it with the STL performance of the target task to determine if $S$ provides a negative transfer.

\emph{The second step of our approach} involves selecting a subset of source tasks by choosing any source task whose relevance score is below a threshold $\gamma$.
We derive this selection criterion by examining the minimum of the surrogate model $g_{\theta}(S)$ over all possible subsets $S$.
We analyze this algorithm in a setting that includes one group of source tasks closer to the target task and another group further from the target task.
The analysis reveals that for each task $i$, its relevance score $\theta_i$ is proportional to the sum of the MTL performances of all subsets that include $i$. Moreover, these performances preserve the distance gaps from the source tasks.
See Theorem \ref{thm_analysis} for the precise result.
\revision{In practice, we pick $\gamma$ via cross-validation; See Section \ref{sec:exp} for the range of $\gamma$ that we validate on in the experiments. Taken together, our approach provides an efficient pipeline to predict and optimize multitask learning performances for task subsets.
See Figure \ref{fig1} for an illustration.}

\smallskip
\textbf{Experimental Results.} We conduct extensive experiments to validate our approach in numerous data modalities and performance metrics.
We summarize a list of our results as follows:
\begin{itemize}[leftmargin=20pt,topsep=0.0pt,itemsep=7.0pt,partopsep=0pt,parsep=0pt]
    \item The runtime for constructing surrogate models until convergence scales linearly in $k$, and the predicted performances accurately fit the true MTL performances of unseen subsets, measured by Spearman's correlation (0.8 averaged among 16 evaluations). 
     Our approach achieves $4$ times higher accuracy for predicting positive vs. negative transfers than known approximation schemes, measured by the $F_1$-score.
    \item By selecting source tasks based on the predicted MTL performances and only using the selected source tasks, we observe consistent benefits over existing optimization methods.
    {We evaluate our approach on many datasets, including weak supervision, NLP, and multi-group fairness.
    In addition, we apply our approach to different MTL encoders, including BERT and multi-layer perceptrons.}
    Notably, we consider a weak supervision dataset with as many as 164 labeling functions \cite{zhang2021wrench}.
    By selecting labeling functions with our approach and then applying MTL, we obtain up to 3.6\% absolute accuracy lift compared with existing methods.
    \item We further visualize the tasks selected by our approach and find a separation between the selected tasks in terms of their labeling accuracies.
    Besides, our approach can also be used in scenarios where multiple groups of heterogeneous subpopulations are present. We are interested in the fairness and robustness of the learned model, measured as the performance of the worst-performing group. 
    We apply our MTL framework as an augmentation to expand the dataset size of the worst-performing group and show consistent empirical performance in the worst-group accuracy metric.
\end{itemize}

\smallskip
{\textbf{Summary of Contributions.} To summarize, this paper makes three contributions to studying negative transfers in multi-task learning.
First, we aim to model the higher-order relationships from a set of source tasks to another task.
We meta-learn such relationships using a linear regression method that can also predict an unseen subset's MTL performance.
Second, we design a subset selection criterion for multi-task learning, which adjusts a threshold on the relevance scores of each source task. Compared with the existing literature, our approach is much more accurate for modeling higher-order task relationships (See Figure \ref{transfer_prediction_res} in Section \ref{transfer_prediction_res} for the detailed result).
Third, we validate our approach with extensive theoretical and experimental results.}

\smallskip
\noindent\textbf{Organization.}
Section \ref{sec_approach} describes the problem setup and the surrogate modeling approach.
In Section \ref{sec_theory}, we present a subset selection algorithm for multitask learning.
We then present the experiments in Section \ref{sec:exp}.
Then, we discuss the related works in Section \ref{sec_related}.
Lastly, we summarize the paper in Section \ref{sec_conclude}.
The appendix provides complete proof of our theoretical results and omitted results from the experiments.
\section{Predicting Multitask Learning Performances Using Surrogate Models}\label{sec_approach}

This section describes the design and analysis of surrogate models for multitask learning.
We begin by defining the problem setup.
Then, we describe the construction of surrogate models and the estimation of the relevance scores.
Lastly, we analyze the sample complexity of the construction procedure.
As described in the introduction, our approach involves two steps.
This section talks about the first step of our approach.
The second step will be presented in the next section.

\subsection{Preliminaries}

\textbf{Problem Setup.}
Let $t = 0$ denote the main target task of interest.
Suppose the task's features and labels are drawn from an unknown distribution, denoted as $\cD_t$.
Let $\cX$ denote the feature space. Let the set of all possible labels be denoted as $\cY$.
We are given a dataset, which includes a list of examples drawn independently from $\cD_t$.
Besides, we are also given $k$ datasets from related source tasks,
which are all supported on $\cX \times \cY$.

A naive approach to optimize MTL is combining all the datasets and evaluating the trained model on the target task.
However, this might result in worse performance than single-task learning.
Thus, it is crucial to identify if a source task would help or hurt.
The most thorough solution for addressing this question is by enumerating all possible combinations of source tasks, leading to a total of $2^k$ combinations.
For each combination of source tasks, train a multitask model using the selected source tasks and the main task.
While this procedure optimizes the performance of MTL, it is too slow for large $k$.

How can we optimize the performance of MTL efficiently?
Relatedly, given a set of source tasks, can we predict their transfer effects upon the target task efficiently? 
Below, we define two common transfer effects.

\textbf{Positive vs. Negative Transfer.}
Consider any multitask learning algorithm, denoted as $\cA$, which trains a joint model given any set of tasks.
For any subset $S \subseteq \set{1,2,\dots,k}$, we say that $S$ provides a negative transfer to $t$ if the performance of $\cA(S \cup \set{t})$ is worse than $\cA(\set{t})$ (e.g., in terms of higher loss values).
Likewise, we say that $S$ provides a positive transfer to $t$ if the performance of $\cA(S \cup \set{t})$ is better than $\cA(\set{t})$.
We aim to design a scalable method to predict such positive and negative transfer effects.

It is worth highlighting that both types of transfers are often observed in practice.
To give an example, we consider a binary classification dataset that involves a total of $51$ tasks.
We pick one of them as the target task, use the rest as source tasks, and consider the case where $\abs{S} = 1$.
This leads to training $50$ models for each target task, one for every combination of one source task and the target task.
The results are shown in Figure \ref{fig_source_tasks}, which provides illustrations for four different target tasks.
The $y$-axis corresponds to the accuracy difference between the MTL and STL results.
We consistently find a mix of positive and negative transfers for all four target tasks.

\revision{\textbf{Surrogate Models.} A recent paper by \citet{ilyas2022datamodels} designs a linear regression method to predict the predictions of deep neural networks trained on a subset of training data. A surprising finding from the paper of \citet{ilyas2022datamodels} is that linear regression models provide a good fit on a number of popular benchmark datasets such as CIFAR. This finding is later studied in a follow-up paper using harmonic analysis \cite{saunshi2022understanding}.
Both papers focus on single-task supervised learning.
In this paper, we aim to apply the idea of surrogate modeling to multi-task learning.
We will elaborate more on related work in Section \ref{sec_related}.}

\begin{figure*}[t!]
  \begin{minipage}[b]{0.24\textwidth}
    \centering
    \includegraphics[width=0.975\textwidth]{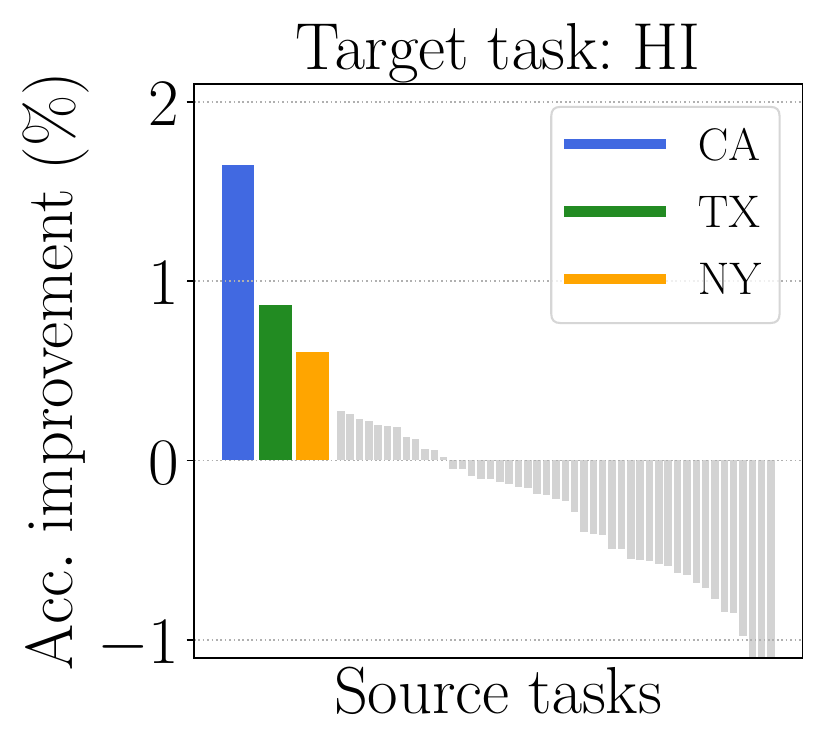}
  \end{minipage}\hfill%
  \begin{minipage}[b]{0.24\textwidth}
    \centering
    \includegraphics[width=0.975\textwidth]{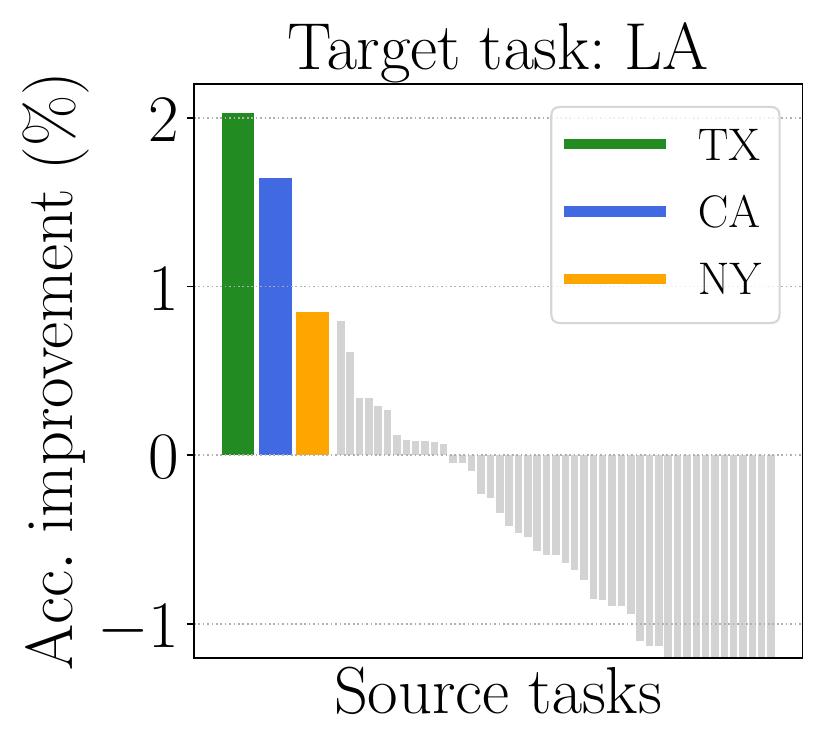}
  \end{minipage}\hfill
  \begin{minipage}[b]{0.24\textwidth}
    \centering
    \includegraphics[width=0.975\textwidth]{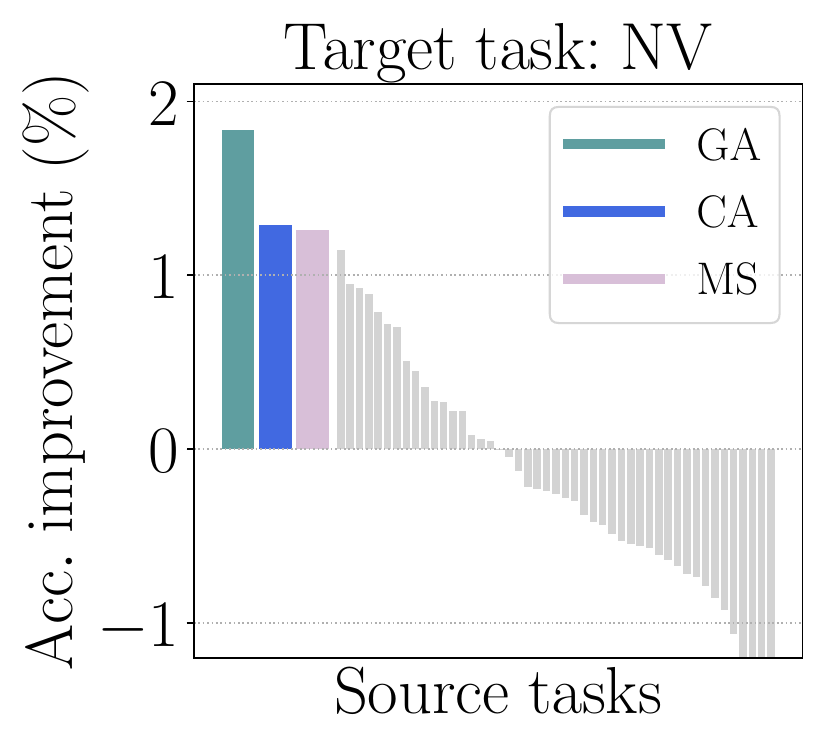}
  \end{minipage}\hfill
  \begin{minipage}[b]{0.24\textwidth}
    \centering
    \includegraphics[width=0.975\textwidth]{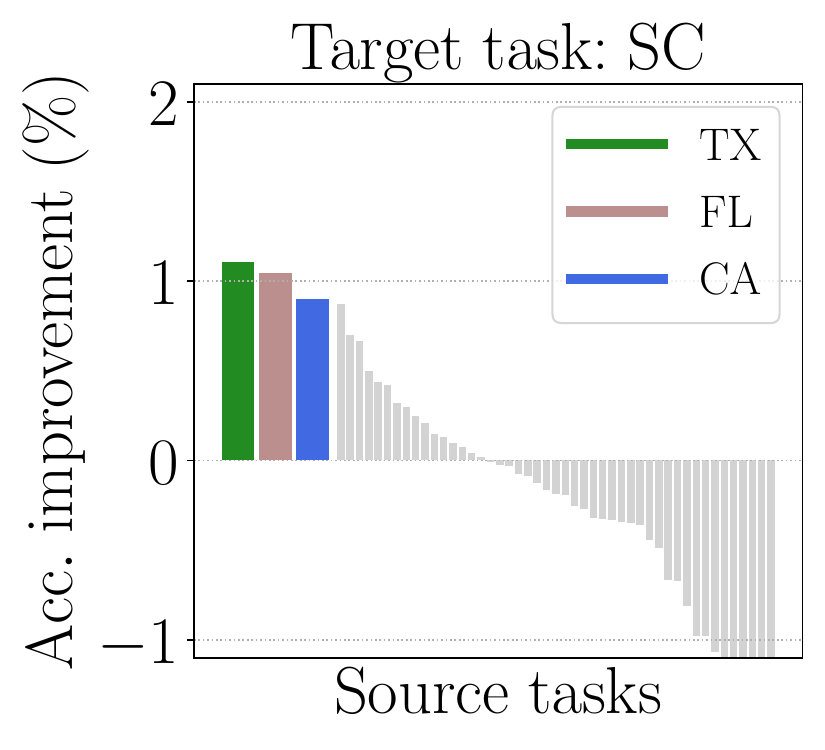}
  \end{minipage}
  \caption{Illustration of mixed outcomes in multitask learning: In each figure, we train $50$ multitask models based on one fixed target task and one source task from $50$ source tasks.
  \textbf{$\boldsymbol{x}$ axis}: Each bar represents one source task from $50$ source tasks. 
  \textbf{$\boldsymbol{y}$ axis}: Test accuracy of MTL with one source task minus the test accuracy of STL with the target task alone.
  For further description of the experiment setup, see Section \ref{sec:exp_setup}.}
  \label{fig_source_tasks}
\end{figure*}

\subsection{Constructing the Linear Surrogate Model} %

In the first step of our approach, we aim to build an approximation of the multitask learning performances.
We first specify the definition of the MTL performance of a single subset of source tasks.
Given any subset $S$, let $\phi$ be an encoder that is shared by the source tasks and also the target task.
For any input features $x\in\cX$, the encoder $\phi$ maps $x$ into a feature vector.
For every source task in $S$ and the target task, there is a separate prediction layer for each of them.
Let $\psi_{0}, \psi_1, \psi_2, \dots, \psi_{k}$ denote the prediction layers, which map the feature vectors to the output.

We train an MTL model by fitting the parameters of $\phi$ and $\psi_i$, for $i \in S \cup\set{0}$.
We minimize the average loss over the combined training data along with the target task.
Let $\phi^{(S)}$ and $\psi^{(S)}_i$, for $i \in S \cup \set{0}$, be the trained model.
We evaluate its loss on the target task's validation dataset.
Let $\widetilde\cD_t = \set{(x_1, y_1), \dots, (x_m, y_m)}$ denote a set of $m$ independent samples from $\cD_t$, which is used as a validation set for the target task.
Let $\ell$ be a non-negative loss such as the cross-entropy loss.
We define \emph{multitask learning performances} as:
\begin{align}
    f(S) = \frac 1 m \sum_{i=1}^m \ell\Big(\psi_0^{(S)}\big(\phi^{(S)}(x_i)\big), y_i\Big), \text{ for any $S\subseteq\set{1,2,\dots,k}$}. \label{eq_ft}
\end{align}%
Our main idea is to construct a surrogate model, parametrized by a relevance score $\theta_i$ for each source task $i$.
We use a linear specification inspired by recent work of \citet{ilyas2022datamodels}:\footnote{We use this specification for scalability consideration. Note that it is possible to consider more complex specifications, such as adding quadratic variables $\theta_{1,2}, \theta_{1,3} \dots,\theta_{k-1,k}$. The construction procedure and the analysis is conceptually the same. However, the sample complexity for fitting these quadratic variables is $\textup{O}(k^2)$, rendering it infeasible for large $k$, e.g., $k \ge 100$.}
\begin{align}
    g_{\theta}(S) = \sum_{i \in S} \theta_i, \text{ where $\theta_i$ is the $i$-th entry of $\theta$, for any $i = 1,2,\dots, k$}. \label{eq_gS}
\end{align}%
The procedure for estimating $\theta$ is as follows.
First, sample $n$ subsets of source tasks from $\set{1, 2,\dots, k}$, denoted as $S_1, S_2, \dots, S_n$.
\revision{We sample each subset from the uniform distribution over subsets with a fixed size of $\alpha$; We will justify this choice later in Section \ref{sec_analyze_thresholding}.
For instance, to capture the transfer from five source tasks to the target task, we can set $\alpha = 5$.}
Then, compute the value of $f(S_i)$ by training one MTL model for every $i = 1, 2, \dots, n$.
Lastly, minimize the mean squared error (MSE) between $g_{\theta}(S_i)$ and $f(S_i)$, averaged over all $i$:
\begin{align}\label{eq_fit_task_model}
    \hat\cL_n(\theta) = \frac 1 n \sum_{i=1}^n \Big(g_{\theta}(S_i) - f(S_i)\Big)^2.
\end{align}
Let $\hat\theta$ denote the minimizer of the above MSE. For brevity, we refer to $\hat\theta$ as the task model.
\revision{After estimating $\hat\theta$, for an unseen subset of source tasks $S$, we predict its MTL performance as $g_{\hat\theta}(S) = \sum_{i \in S} \hat\theta_i$.}

\subsection{Sample Complexity for Estimating the Linear Model}\label{sec_convergence_guarantees}

Next, we provide a theoretical analysis of the estimation of $\hat\theta$.
We show that given $n = \textup{O}\big(k\log^2(k)\big)$, we can estimate $\hat\theta$ accurately.
To be precise, %
let $\cU$ denote the uniform distribution over all subsets of size $\alpha$ drawn from $\set{1,2,\dots,k}$.
Let $T$ denote an unseen subset drawn from $\cU$.
The population risk for a given $\theta$ is defined as the expected MSE between $f(T)$ and $g_{\theta}(T)$: %
{\begin{align}
    \cL(\theta) = \mathop{\mathbb{E}}_{f}\exarg{T}{{\big( f(T) - g_{\theta}(T) \big)^2}}. \label{eq_tm_popu}
\end{align}}%
Let the minimizer of the above risk be denoted as $\theta^{\star}$.
We prove that $\hat\theta$ converges to $\theta^{\star}$ using Rademacher complexity-based arguments.
Let the function class of $\psi_t$ and $\phi$ be denoted as $\cH$.
Let the loss function class be
$\cF = \{\ell(\psi_t(\phi(x)), y) \ |\  \forall\, \psi_t, \phi \text{ from $\cH$} \}.$
\revision{Recall that $\tilde\cD_t = \set{(x_1, y_1), (x_2, y_2), \dots, (x_m, y_m)}$ refers to the dataset used to evaluate the value of $f(T)$, and its size is equal to $m$.}
Let $\sigma_1, \sigma_2, \dots, \sigma_{m}$ be $m$ independent Rademacher random variables, collectively as $\sigma_{1:m}$.
The Rademacher complexity of $\cF$ over $\widetilde\cD_t$ is
defined as %
\begin{align}
    \cR_{m}(\cF) = \mathop{\mathbb{E}}_{\widetilde\cD_t}\exarg{\sigma_{1:m} }{\sup_{h\in\cF} \frac 1 {m} \sum_{i = 1}^m \sigma_i \cdot h\big(x_i, y_i\big)}, \label{eq_rad_def}
\end{align}
\revision{where the expectation is taken over the randomness of the empirically-drawn dataset $\tilde \cD_t$ and the Rademacher random variables $\sigma_{1:m}$.}
We follow the convention of big-O notations for stating the result.
Given two functions $h(n)$ and $h'(n)$, we use $h(n) = \textup{O}(h'(n))$ or $h(n) \lesssim h'(n)$   to indicate that $h(n) \le C \cdot h'(n)$ for some fixed constant $C$ when $n$ is large enough.
Our result is stated formally below.

\medskip
\begin{theorem}\label{thm_converge}
    Suppose the functions in $\cF$ are all bounded from above by a fixed constant $C > 0$.
    Suppose $\alpha$ is less than $k/2$.
    Let $n$ be the number of sampled subsets and $m$ be the size of the set used to evaluate $f$.
    With probability at least $0.99$, $\hat{\theta}_n$ converges to $\theta^{\star}$ as $n, m$ are both large enough:
    {\begin{align}
        \bignorm{\hat\theta - \theta^{\star}} \lesssim\,  
        C \sqrt{ \frac {\big(k \log^2\big({k}\big) \big) \alpha^4} { n}} 
        + \sqrt{\frac {{\big(\log({k}{})\big) \alpha}} { m}} + \cR_{m}(\cF), \label{eq_converge_theta}
        \end{align}}%
    where $\norm{\cdot}$ denotes the Euclidean norm of a vector.
\end{theorem}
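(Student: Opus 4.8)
The first observation is that $\hat\theta$ solves a random-design ordinary least squares problem: writing $x_i := \mathbf{1}_{S_i} \in \{0,1\}^k$ for the indicator vector of $S_i$ and $y_i := f(S_i)$ for the response, we have $g_\theta(S_i) = \langle\theta, x_i\rangle$, so $\hat\theta$ minimizes $\frac1n\sum_{i=1}^n(\langle\theta,x_i\rangle - y_i)^2$. The plan is to follow the standard three-part template for such estimators. First, reduce $\norm{\hat\theta-\theta^\star}$ to the deviation of empirical moments from their population values, using the normal equations together with a lower bound on the curvature of the population design matrix. Second, control the finite-$n$ fluctuations of the empirical second moment $\frac1n\sum_i x_i x_i^\top$ and cross moment $\frac1n\sum_i y_i x_i$ around their $\cU$-population values --- this will yield the first term of \eqref{eq_converge_theta}. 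Third, account separately for the fact that each $y_i = f(S_i)$ is itself an empirical average over the $m$-point validation set $\tilde\cD_t$, not the population risk of the trained MTL model --- this will yield the $m$-dependent term and the $\cR_m(\cF)$ term.

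\textbf{Curvature and the normal-equations identity.} Let $\Sigma := \mathbb{E}_{T\sim\cU}[x x^\top]$ with $x = \mathbf{1}_T$. Since $T$ is uniform over $\alpha$-subsets of $\set{1,\dots,k}$, $\Sigma$ has diagonal entries $\alpha/k$ and off-diagonal entries $\frac{\alpha(\alpha-1)}{k(k-1)}$, so $\Sigma = a\,I + b\,\mathbf{1}\mathbf{1}^\top$ with $a = \frac{\alpha(k-\alpha)}{k(k-1)} \ge \frac{\alpha}{2k}$ (this is where $\alpha < k/2$ enters) and $b \ge 0$; hence $\lambda_{\min}(\Sigma) \ge \frac{\alpha}{2k}$, and a Sherman--Morrison computation additionally gives $\norm{\theta^\star}_\infty \lesssim C$, which keeps all relevant residuals $g_\theta(S)-f(S)$ of size $\textup{O}(\alpha)$ on the domain I optimize over. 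Let $\bar f$ denote the expectation of $f$ over its internal randomness (the validation draw, and any training randomness). First-order optimality of the unconstrained population minimizer gives $\Sigma\theta^\star = \mathbb{E}_T[\bar f(T)\,x]$, while the normal equations give $\hat\Sigma_n\hat\theta = \frac1n\sum_i y_i x_i$ with $\hat\Sigma_n := \frac1n\sum_i x_i x_i^\top$. Subtracting yields the exact identity $\hat\theta - \theta^\star = \hat\Sigma_n^{-1}\big(\frac1n\sum_i y_i x_i - \mathbb{E}_T[y\,x]\big) + \hat\Sigma_n^{-1}\,\mathbb{E}_T\!\big[(f-\bar f)(T)\,x\big]$, whose two summands are the finite-$n$ sampling term and the finite-$m$ bias term. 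I would exploit the cancellation $\mathbb{E}_T[\bar f\,x] = \Sigma\theta^\star$ \emph{exactly}; this is what produces the fast $n^{-1/2}$ rate in parameter norm rather than the $n^{-1/4}$ rate a black-box Rademacher bound for the square loss would give.

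\textbf{The two concentration bounds.} For the sampling term, I would first apply the operator-norm matrix Bernstein inequality to $\hat\Sigma_n - \Sigma$ --- using $\norm{x_i x_i^\top} = \alpha$ and variance proxy $\alpha\norm{\Sigma}$ --- to show that $n$ near-linear in $k$ (up to $\mathrm{poly}(\alpha,\log k)$) already forces $\norm{\hat\Sigma_n - \Sigma} \le \tfrac12\lambda_{\min}(\Sigma)$ with probability $\ge 0.999$, hence $\norm{\hat\Sigma_n^{-1}} \lesssim k/\alpha$; and then bound the centered cross moment $\frac1n\sum_i y_i x_i - \mathbb{E}_T[y\,x]$, an average of $n$ i.i.d.\ vectors of Euclidean norm $\le C\sqrt\alpha$ (since $0\le f\le C$) with per-coordinate variance $\le C^2\alpha/k$, by a coordinatewise Bernstein bound plus a union bound over the $k$ coordinates, giving norm $\lesssim C\sqrt{\alpha\log(k)/n}$. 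The product of these two estimates is the first term of \eqref{eq_converge_theta} (the precise powers of $\alpha$ and $\log k$ being the cost of these union/Bernstein steps), and it drops below a constant once $n = \Omega(k\log^2(k)\,\alpha^4\,C^2)$ --- i.e.\ the claimed near-linear $n = \textup{O}(k\log^2 k)$. For the bias term, $f(S)-\bar f(S)$ is the deviation of an empirical mean of $m$ i.i.d.\ evaluations of a function in $\cF$ from its mean, and a single uniform-deviation (symmetrization) bound over $\cF$ covers the data-dependent models produced for all sampled $S$ at once, giving $\sup_S|f(S)-\bar f(S)| \lesssim \cR_m(\cF) + C\sqrt{\log(1/\delta)/m}$ with probability $\ge 0.999$; propagating this through $\hat\Sigma_n^{-1}\mathbb{E}_T[(f-\bar f)\,x]$, with $\norm{\hat\Sigma_n^{-1}} \lesssim k/\alpha$, $\norm{\mathbb{E}_T[x]} = \sqrt{k}\,(\alpha/k)$, and using that this vector is essentially aligned with the $\mathbf{1}$-eigendirection of $\Sigma$ (where the relevant scaling is $k/\alpha^2$), yields the $\sqrt{\alpha\log(k)/m}$ term and the $\cR_m(\cF)$ term. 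A union bound over the constant-probability events gives the overall probability $0.99$.

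\textbf{Main obstacle.} The delicate point is the fast $n^{-1/2}$ rate in \emph{parameter} norm: this forces the exact normal-equations route (so the identity $\mathbb{E}_T[\bar f\,x] = \Sigma\theta^\star$ is used, not merely bounded) and, with it, a lower bound on $\lambda_{\min}(\hat\Sigma_n)$ that holds with only $n$ near-linear in $k$ --- which in turn needs operator-norm (not Frobenius) matrix concentration and the explicit eigenvalue bound $\lambda_{\min}(\Sigma) \ge \alpha/(2k)$ that consumes the hypothesis $\alpha < k/2$. A secondary difficulty, and the reason the third part of the argument is needed at all, is carefully separating the three randomness sources --- subset sampling, MTL training, and the validation draw $\tilde\cD_t$ --- so that $\hat\theta$, which sees one realization of $f$, is correctly compared to $\theta^\star$, which is defined against the fully averaged risk $\mathbb{E}_f\mathbb{E}_T$; that mismatch is exactly what the $m$-term and the $\cR_m(\cF)$ term pay for.
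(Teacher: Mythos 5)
Your overall architecture is the same as the paper's: view $\hat\theta$ as a random-design least-squares estimator, compute the population covariance $\Sigma = a\,\id_{k\times k} + b\,ee^{\top}$ of the $\alpha$-subset indicators with $\lambda_{\min}(\Sigma)\gtrsim \alpha/k$ (this is exactly where the paper also uses $\alpha < k/2$, cf.\ equation \eqref{eq_cov}), control the design fluctuation by matrix Bernstein, control the cross moment by scalar concentration, and handle the finite-$m$ evaluation of $f$ by a uniform Rademacher bound --- this matches the paper's two-lemma split (Lemma \ref{lemma_vec} for the subset-sampling randomness, Lemma \ref{lemma_rad} for the validation randomness, with the intermediate estimator $\bar\theta$ playing the role your direct normal-equations comparison plays). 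However, two steps of your plan do not go through as written. First, your ``exact identity'' is not exact: from $\hat\Sigma_n\hat\theta = \frac1n\sum_i y_i x_i$ and $\Sigma\theta^{\star} = \mathbb{E}_T[\bar f(T)x]$ one gets $\hat\theta-\theta^{\star} = \hat\Sigma_n^{-1}\big(\tfrac1n\sum_i y_ix_i - \mathbb{E}_T[f(T)x]\big) + \hat\Sigma_n^{-1}\mathbb{E}_T[(f-\bar f)(T)x] \,-\, \hat\Sigma_n^{-1}(\hat\Sigma_n-\Sigma)\theta^{\star}$; you have dropped the third term. That term is precisely what the paper's first triangle-inequality piece, $\big((\cI_n^{\top}\cI_n/n)^{-1}-(\pmb{\cI}^{\top}\pmb{\cI}/\abs{\cU})^{-1}\big)(v/n)$, is there to control (equations \eqref{eq_err_1}--\eqref{eq_err_E}), and bounding it crudely by $\|\hat\Sigma_n^{-1}\|\,\|\hat\Sigma_n-\Sigma\|\,\|\theta^{\star}\|$ with $\|\theta^{\star}\|\lesssim C\sqrt k$ overshoots the claimed first term of \eqref{eq_converge_theta}; you need the paper's trick of pairing the inverse-covariance perturbation with the bounded realized vector $v/n$ (whose norm is $O(C\alpha)$ after the paper's normalization), not with a $\sqrt k$-sized $\theta^{\star}$.

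Second, and more seriously, your treatment of the finite-$m$ term relies on the unproven claim that $\mathbb{E}_T[(f-\bar f)(T)x]$ is ``essentially aligned with the $\mathbf{1}$-eigendirection of $\Sigma$.'' There is no reason for this: the deviations $f(T)-\bar f(T)$ can correlate arbitrarily with membership of individual tasks, so the only safe coordinatewise bound is $\abs{\mathbb{E}_T[(f-\bar f)x]_i}\le (\alpha/k)\sup_T\abs{f(T)-\bar f(T)}$, and the crude product with $\|\hat\Sigma_n^{-1}\|\lesssim k/\alpha$ then costs an extra $\sqrt k$ (and even granting alignment, the eigenvalue $k/\alpha^2$ times $\|\mathbb{E}_T[x]\|=\alpha/\sqrt k$ still leaves a $\sqrt k/\alpha$ factor, not the stated $\sqrt{\alpha\log(k)/m}$). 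The paper avoids this by never splitting the inverse from the design: it bounds $\bignorm{(\pmb{\cI}^{\top}\pmb{\cI})^{-1}\pmb{\cI}^{\top}(\pmb f-\ex{\pmb f})}$ through the operator norm of the whitened map $(\pmb{\cI}^{\top}\pmb{\cI}/\abs{\cU})^{-1}\pmb{\cI}^{\top}/\sqrt{\abs{\cU}}$, which is $\lambda_{\min}^{-1/2}$ rather than $\lambda_{\min}^{-1}\cdot\|\pmb{\cI}\|$, together with a uniform bound on $\sup_{T\in\cU}\abs{f(T)-\ex{f(T)}}$ obtained from the Rademacher bound plus a union bound over the $\binom{k}{\alpha}$ subsets --- which is also where the $\sqrt{\alpha\log(k)/m}$ term in \eqref{eq_converge_theta} actually comes from (your single-symmetrization bound would instead give $\sqrt{\log(1/\delta)/m}$, which is fine, but it does not rescue the propagation step). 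Relatedly, your asserted conclusion that the product $\|\hat\Sigma_n^{-1}\|\cdot\|\tfrac1n\sum_i y_ix_i-\mathbb{E}_T[yx]\|\lesssim (k/\alpha)\cdot C\sqrt{\alpha\log(k)/n}$ ``is the first term of \eqref{eq_converge_theta}'' does not check out dimensionally in $k$; recovering the stated rate requires the paper's pairing of factors rather than the worst-case product. These are repairable, but as proposed the plan has genuine gaps at exactly the two places where the paper's proof does its real work.
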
%
Based on the above result, it is clear from equation \eqref{eq_converge_theta} that provided with $\textup{O}(k\log^2(k))$ random samples, the first error term relating to $n$ shrinks to a negligible value  (one may think of $\alpha$ as a fixed constant such as $5$ or $10$).
There are two error terms decreasing with $m$, the size of $\widetilde \cD_t$ used to evaluate $f$.
The Rademacher complexity $\cR_m(\cF)$ is known to be of order $\textup{O}(m^{-1/2})$ when $\cH$ represents a family of neural networks \cite{bartlett2017spectrally}.
These two error terms are due to the variance of $f$ since it is measured on a finite set.
\revision{Lastly, we note that the probability value of $0.99$ in the above theorem statement can be adjusted to other values. In the proofs, we state the result more generally for any probability value $1 - \delta$, where $\delta >0$; See the statements of Lemma \ref{lemma_vec} and Lemma \ref{lemma_rad} below for details.}

\medskip
\textbf{Proof Overview.}
We introduce a few notations to examine $g_{\theta}(S)$ and the covariance of $\cU$.
Let $\cI_n \in \set{0,1}^{n \times k}$ be a zero-one matrix; For any $i = 1, 2,\dots, n$, the $i$-th row is $\mathbbm{1}_{S_i}$, the characteristic vector of $S_i$.
Let $\hat f$ be a vector in which $\hat f_i = f(S_i)$, for any $i = 1, 2,\dots, n$.
The $\hat\theta$ that minimizes equation \eqref{eq_fit_task_model} is equal to
\begin{align}
    \hat\theta = \left( {\cI_n^{\top} \cI_n}  \right)^{-1} \cI_n^{\top} \hat f. \label{eq_theta_hat}
\end{align}
Let $v = \cI_n^{\top} \hat f$ and let $v_i$ be the $i$-th entry of $v$, for $i = 1,\dots,k$.
Based on the definition of $\cI_n$, we observe that
\begin{align}
     v_i = \sum_{1\le j \le n:\, i \in S_j} f(S_j), \text{ for any } 1\le i\le k. \label{eq_theta_j}
\end{align}
Next, let $\pmb{\cI} \in \set{0,1}^{\abs{\cU} \times k}$ be a zero-one matrix, where $\abs{\cU}$ is the number of subsets in $\cU$.
Each row of $\pmb{\cI}$ corresponds to the characteristic vector of a subset.
Let $\pmb{f}$ be a vector such that each entry of this vector corresponds to the MTL performances (cf. equation \eqref{eq_ft}) of a subset in distribution $\cU$.
The population risk minimizer $\theta^{\star}$ for reducing $\cL(\theta)$ in equation \eqref{eq_tm_popu} is equal to
\[ \theta^{\star} = \left( \pmb{\cI}^{\top} \pmb{\cI}  \right)^{-1}  {\pmb{\cI}^{\top} \ex{\pmb{f}} }. \]
Our proof involves two steps.
First, we deal with the error due to the randomness of the random subsets.
Let \[ \bar\theta = \left({\pmb{\cI}^{\top} \pmb{\cI}} {}\right)^{-1} {\pmb{\cI}^{\top} \pmb{f}} {}. \]
We state the following result, which shows that $\hat\theta$ converges to $\bar\theta$ as $n$ increases.

\medskip
\begin{lemma}\label{lemma_vec}
    In the setting of Theorem \ref{thm_converge}, conditional on $f(T)$ for any subset $T\in\cU$, with probability $1 - 2\delta$ over the randomness of $S_1, S_2, \dots, S_n$, for any $\delta \ge 0$,
    the Euclidean distance between $\hat\theta$ and $\bar\theta$ satisfies:
    \begin{align}
        \bignorm{\hat\theta - \bar\theta}
        \le 4C  \sqrt{\frac {\big(k \log^2( 2{}{k}\delta^{-1})\big) \alpha^4} n} + 8C \sqrt{\frac {k \alpha^2} {\delta n}}. \label{eq_lem1}
    \end{align}
\end{lemma}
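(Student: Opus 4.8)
The plan is to compare the empirical covariance $\frac{1}{n}\cI_n^\top \cI_n$ and empirical target $\frac{1}{n}\cI_n^\top \hat f$ against their population counterparts $\pmb\Sigma := \frac{1}{|\cU|}\pmb\cI^\top \pmb\cI$ and $b := \frac{1}{|\cU|}\pmb\cI^\top \pmb f$, so that the two quantities $\hat\theta = (\cI_n^\top\cI_n)^{-1}\cI_n^\top\hat f$ and $\bar\theta = \pmb\Sigma^{-1} b$ are seen as the solutions of two nearby linear systems. Writing $\widehat\Sigma := \frac{1}{n}\cI_n^\top\cI_n$ and $\hat b := \frac{1}{n}\cI_n^\top \hat f$, I would use the standard perturbation identity
\begin{align}
  \hat\theta - \bar\theta = \widehat\Sigma^{-1}\big(\hat b - b\big) + \widehat\Sigma^{-1}\big(\pmb\Sigma - \widehat\Sigma\big)\bar\theta, \nonumber
\end{align}
which follows from $\widehat\Sigma\hat\theta = \hat b$ and $\pmb\Sigma\bar\theta = b$. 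Taking norms, $\|\hat\theta-\bar\theta\| \le \|\widehat\Sigma^{-1}\|\big(\|\hat b - b\| + \|\pmb\Sigma - \widehat\Sigma\|\,\|\bar\theta\|\big)$, so the proof reduces to three ingredients: (i) a lower bound on $\lambda_{\min}(\widehat\Sigma)$; (ii) a concentration bound for $\|\widehat\Sigma - \pmb\Sigma\|$; (iii) a concentration bound for $\|\hat b - b\|$, together with an a priori bound $\|\bar\theta\| = \textup{O}(\sqrt k)$ coming from $|f|\le C$ and the conditioning of $\pmb\Sigma$.

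\textbf{Concentration of the covariance.} For (ii), each row of $\cI_n$ is an i.i.d.\ draw $\mathbbm 1_{S_j}$ with $S_j\sim\cU$, and $\widehat\Sigma = \frac1n\sum_j \mathbbm 1_{S_j}\mathbbm 1_{S_j}^\top$ is a sum of i.i.d.\ rank-one matrices bounded in operator norm by $\alpha$ (since $\|\mathbbm 1_{S_j}\|^2 = \alpha$). A matrix Bernstein inequality then gives $\|\widehat\Sigma - \pmb\Sigma\| \lesssim \alpha\sqrt{\log(k/\delta)/n} + \alpha\log(k/\delta)/n$ with probability $1-\delta$; for the regime $n\gtrsim k\log^2 k$ the first term dominates and is $\textup{O}(\alpha\sqrt{\log(k/\delta)/n})$. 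The population covariance $\pmb\Sigma$ under sampling of fixed size $\alpha$ from $[k]$ has diagonal entries $\alpha/k$ and off-diagonal entries $\alpha(\alpha-1)/(k(k-1))$, so $\pmb\Sigma = \frac{\alpha(k-\alpha)}{k(k-1)} I + (\text{rank-one})$, giving $\lambda_{\min}(\pmb\Sigma) = \frac{\alpha(k-\alpha)}{k(k-1)} \asymp \alpha/k$ using $\alpha < k/2$. Combining with Weyl's inequality, once $n$ is large enough that $\|\widehat\Sigma-\pmb\Sigma\| \le \frac12\lambda_{\min}(\pmb\Sigma)$, we get $\lambda_{\min}(\widehat\Sigma) \gtrsim \alpha/k$, hence $\|\widehat\Sigma^{-1}\| \lesssim k/\alpha$. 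Since this requires $\alpha\sqrt{\log(k/\delta)/n} \lesssim \alpha/k$, i.e.\ $n \gtrsim k^2\log(k/\delta)$ — wait, that's too strong. Here the tighter accounting is: matrix Bernstein is applied to the \emph{normalized} matrices or one tracks that $\|\widehat\Sigma - \pmb\Sigma\|$ need only be small relative to $\lambda_{\min}$, and the $\alpha^4$ and $\log^2$ factors in the statement come precisely from propagating $\|\widehat\Sigma^{-1}\|^2 \lesssim k^2/\alpha^2$ through the bound on $\|\hat b - b\|$ and absorbing a union bound over $k$ coordinates; I would carry the constants carefully here rather than optimize.

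\textbf{Concentration of the target vector.} For (iii), conditional on the values $\{f(T)\}_{T\in\cU}$ (which is exactly the conditioning in the lemma statement), $\hat b = \frac1n\sum_j \mathbbm 1_{S_j} f(S_j)$ is again an i.i.d.\ average of vectors bounded by $\|\mathbbm 1_{S_j} f(S_j)\| \le C\sqrt\alpha$, with mean $b$. A vector Bernstein / Hoeffding bound gives $\|\hat b - b\| \lesssim C\sqrt{\alpha\log(1/\delta)/n} + C\sqrt{\alpha/(\delta n)}$, where the second, weaker-tail term is what produces the $\sqrt{k\alpha^2/(\delta n)}$ summand in \eqref{eq_lem1} after multiplying by $\|\widehat\Sigma^{-1}\| \lesssim k/\alpha$ (so $\frac{k}{\alpha}\cdot C\sqrt{\alpha/(\delta n)} = C\sqrt{k^2\alpha^{-1}/(\delta n)}$, and bounding $k^2/\alpha \le k\alpha$ is false in general — instead the $k\alpha^2$ comes from $\|\widehat\Sigma^{-1}\|\cdot\|\pmb\Sigma-\widehat\Sigma\|\cdot\|\bar\theta\|$ with $\|\bar\theta\|\lesssim C\sqrt k$, giving $\frac{k}{\alpha}\cdot\alpha\cdot C\sqrt{k}\cdot(\text{something}) $; I will reconcile the exact powers of $\alpha$ during write-up). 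Putting the three pieces together and taking a union bound over the event for $\widehat\Sigma$ (probability $\ge 1-\delta$) and the event for $\hat b$ (probability $\ge 1-\delta$) yields the claimed bound with probability $1-2\delta$.

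\textbf{Main obstacle.} The conceptual content — perturbation of linear systems plus matrix/vector Bernstein — is routine; the delicate part is \emph{bookkeeping the exponents of $\alpha$, $k$, and the logarithmic factors} so that they match \eqref{eq_lem1} exactly, in particular verifying $\lambda_{\min}(\pmb\Sigma) \asymp \alpha/k$ and tracking that the requirement ``$n$ large enough'' is the stated $n\gtrsim k\log^2 k$ rather than something worse, which hinges on using the relative (rather than absolute) spectral perturbation and on the $\log^2$ arising from a union bound combined with the Bernstein log term. A secondary point to handle cleanly is the a priori bound $\|\bar\theta\| \lesssim C\sqrt{k}$: since $\bar\theta = \pmb\Sigma^{-1}b$ with $\|b\|\le C\sqrt\alpha\cdot(\text{normalization})$ and $\|\pmb\Sigma^{-1}\|\asymp k/\alpha$, one gets $\|\bar\theta\|\lesssim C k/\sqrt\alpha$, and I would need to check whether the slightly sharper structure of $\pmb f$ (all coordinates bounded by $C$, not arbitrary) improves this — it does, because $g_{\bar\theta}(T)$ should track $f(T)=\textup{O}(C)$ on average, forcing $\|\bar\theta\|$ down — and this is where I expect to spend the most care.
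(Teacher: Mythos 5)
Your skeleton is the same as the paper's: write $\hat\theta$ and $\bar\theta$ as solutions of nearby linear systems, control the covariance deviation with matrix Bernstein plus a resolvent/perturbation identity, control the deviation of the target vector separately, and union-bound the two events to get $1-2\delta$. But as written the proposal has genuine, unresolved gaps exactly where the specific form of \eqref{eq_lem1} has to come from, and you flag them yourself without closing them. The most important one is the second term. In the paper this term is \emph{not} obtained from a vector Bernstein/Hoeffding bound: it comes from a coordinate-wise variance computation that uses the exact inclusion probabilities of the size-$\alpha$ subsets (a fixed $T\in\cU$ with $i\in T$ is among the $n$ samples with probability $n/\abs{\cU}$, and a pair $T\neq T'$ with probability $n^2/\abs{\cU}^2$), which makes the cross terms cancel and yields $\ex{\cE_i^2}\le C^2/n$ for each coordinate, hence $\ex{\sum_i \cE_i^2}\le C^2k/n$, and then Markov's inequality produces the $\delta^{-1/2}$ tail with the $\sqrt{k}$ factor. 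Your substitute bound $\bignorm{\hat b - b}\lesssim C\sqrt{\alpha\log(1/\delta)/n}+C\sqrt{\alpha/(\delta n)}$ is not a standard inequality in that form (it mixes a Bernstein-type $\log(1/\delta)$ tail with a Chebyshev-type $\delta^{-1/2}$ tail), and when you propagate it through $\bignorm{\widehat\Sigma^{-1}}\lesssim k/\alpha$ you obtain $k^2/\alpha$ where the lemma has $k\alpha^2$ --- a mismatch you acknowledge ("bounding $k^2/\alpha\le k\alpha$ is false in general") but never resolve.

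The second gap is structural: your decomposition puts $(\pmb\Sigma-\widehat\Sigma)$ against $\bar\theta$, so you need an a priori bound on $\bignorm{\bar\theta}$, and you leave it open whether it is $C\sqrt{k}$ or $Ck/\sqrt{\alpha}$. The paper sidesteps this entirely by arranging the triangle inequality so that the difference of inverse covariances multiplies $v/n$, which is bounded \emph{deterministically} by $C\alpha$ (each sampled subset has exactly $\alpha$ ones and $f\le C$); the difference of inverses is then handled by the identity $(E+A)^{-1}-A^{-1}=A^{-1}E(A+E)^{-1}$ together with the matrix Bernstein bound $\norm{E}_2\lesssim \alpha\log(2k\delta^{-1})/\sqrt{n}$ and the eigenvalue bound on $A$. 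Relatedly, your own eigenvalue accounting ($\lambda_{\min}(\pmb\Sigma)\asymp\alpha/k$ forcing $n\gtrsim k^2\log(k/\delta)$, which you notice is "too strong") is never reconciled with the claimed $n\gtrsim k\log^2 k$ regime; gesturing at "relative spectral perturbation" does not supply the missing step. So while the route is the paper's route in outline, the proposal does not yet constitute a proof of \eqref{eq_lem1}: the variance/Markov argument for the $\delta^{-1/2}$ term, the deterministic bound $\bignorm{v/n}\le C\alpha$ (or an equivalent replacement), and a consistent eigenvalue normalization are the concrete pieces you still need.
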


The proof of the above result relies on a novel union bound taken over all subsets in $\cS$. %
Crucially, there are at most $2^k$ subsets.
By taking the logarithm of $2^k$ after the union bound, we get a factor of $k$ as shown in equation \eqref{eq_lem1}.
Second, we prove the convergence from $\bar\theta$ to $\theta^{\star}$, as $m$ increases.

\medskip
\begin{lemma}\label{lemma_rad}
    In the setting of Theorem \ref{thm_converge}, for any $\delta > 0$, with probability at least $1 - \delta$ over the randomness of $S_1, S_2, \dots, S_n$ and $f(S_1), f(S_2), \dots, f(S_n)$, the Euclidean distance between $\bar{\theta}$ and ${\theta}^{\star}$ satisfies:
    \begin{align}
        \bignorm{\bar{\theta} - \theta^{\star}}
        \le \frac{\cR_{m}(\cF)} {\sqrt 2} + \sqrt{\frac{\big(\log\big(  {\delta^{-1} k}\big)\big) \alpha} {4m}}. \label{eq_lem2}
    \end{align}
\end{lemma}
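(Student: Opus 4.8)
The plan is to compare the two population-level regression solutions $\bar\theta = (\pmb{\cI}^\top \pmb{\cI})^{-1}\pmb{\cI}^\top \pmb{f}$ and $\theta^\star = (\pmb{\cI}^\top \pmb{\cI})^{-1}\pmb{\cI}^\top \ex{\pmb{f}}$, which share the same design matrix $\pmb{\cI}$ and differ only in the response vector: the realized losses $\pmb{f}$ versus their expectations $\ex{\pmb{f}}$. Subtracting, $\bar\theta - \theta^\star = (\pmb{\cI}^\top \pmb{\cI})^{-1}\pmb{\cI}^\top (\pmb{f} - \ex{\pmb{f}})$, so
\begin{align}
\bignorm{\bar\theta - \theta^\star} \le \bignorm{(\pmb{\cI}^\top \pmb{\cI})^{-1}\pmb{\cI}^\top} \cdot \bignorm{\pmb{f} - \ex{\pmb{f}}}_\infty \cdot (\text{combinatorial factor}),
\end{align}
and the core of the argument is to bound the deviation $\abs{f(T) - \ex{f(T)}}$ uniformly over the $\abs{\cU}$ subsets $T$, then control how the pseudoinverse of $\pmb{\cI}$ spreads this error into $\theta$-space. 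First I would handle the per-subset deviation: for a fixed $T$, $f(T)$ is an average of $m$ i.i.d. bounded terms $\ell(\psi_0^{(T)}(\phi^{(T)}(x_i)), y_i)$, so a standard symmetrization/Rademacher bound (McDiarmid plus the definition \eqref{eq_rad_def} of $\cR_m(\cF)$) gives, with probability $1-\delta'$, $\abs{f(T)-\ex{f(T)}} \le 2\cR_m(\cF) + C\sqrt{\log(1/\delta')/(2m)}$. Then I would union-bound over all subsets; since $\abs{\cU} \le 2^k$, taking $\delta' = \delta/2^k$ contributes the $\log(\delta^{-1}k)$-type factor (the $k$ replacing $\log 2^k = k\log 2$ up to constants, absorbing the extra $\alpha$ and the matrix-norm contribution).

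The second ingredient is to convert the uniform bound on $\pmb{f} - \ex{\pmb{f}}$ into a bound on $\bar\theta - \theta^\star$. Here I would use the structure of $\pmb{\cI}$: its rows are characteristic vectors of all size-$\alpha$ subsets of $[k]$, so $\pmb{\cI}^\top\pmb{\cI}$ has a highly symmetric form — diagonal entries all equal to the number of size-$\alpha$ subsets containing a fixed index, and off-diagonal entries all equal to the number containing a fixed pair. This matrix is $a I + b (\mathbf{1}\mathbf{1}^\top - I)$ for explicit binomial coefficients $a,b$, whose inverse is known in closed form and whose smallest eigenvalue I can lower-bound (this is exactly where $\alpha < k/2$ should be used, to keep the design well-conditioned). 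Combining the operator-norm bound on $(\pmb{\cI}^\top\pmb{\cI})^{-1}$ with $\bignorm{\pmb{\cI}^\top(\pmb{f}-\ex{\pmb{f}})} \le \sqrt{k}\cdot(\text{max row sum})\cdot\bignorm{\pmb{f}-\ex{\pmb{f}}}_\infty$ yields the claimed $\sqrt{\alpha/m}$ scaling after the $\alpha$-dependent combinatorial factors cancel appropriately against the eigenvalue bound.

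I expect the main obstacle to be the bookkeeping in this second step: showing that the various binomial-coefficient ratios arising from $(\pmb{\cI}^\top\pmb{\cI})^{-1}\pmb{\cI}^\top$ collapse to give precisely the $\sqrt{\alpha/(4m)}$ coefficient and the $1/\sqrt 2$ in front of $\cR_m(\cF)$, rather than something with extra powers of $\alpha$ or $k$. A secondary subtlety is that the randomness here is over both the sampled subsets $S_1,\dots,S_n$ (entering only through the fact that $\pmb{f}$ is a realization of the loss function class) and the validation draws used to define $f$; I would be careful to state that $\bar\theta$ and $\theta^\star$ are defined on the same full distribution $\cU$ (not the sampled subsets), so that the $n$-randomness plays no role in this lemma and the whole bound is driven by the $m$ i.i.d. validation samples feeding into each $f(T)$. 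Once the deviation $\pmb{f}-\ex{\pmb{f}}$ is controlled uniformly and the conditioning of $\pmb{\cI}^\top\pmb{\cI}$ is pinned down, the rest is algebra.
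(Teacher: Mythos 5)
Your overall skeleton matches the paper's proof: write $\bar\theta - \theta^{\star} = (\pmb{\cI}^{\top}\pmb{\cI})^{-1}\pmb{\cI}^{\top}(\pmb{f}-\ex{\pmb f})$, bound $|f(T)-\ex{f(T)}|$ uniformly over $T$ via a Rademacher-complexity generalization bound plus a union bound, and control the conditioning of $\pmb{\cI}^{\top}\pmb{\cI}$ through its explicit $aI+b(\mathbf{1}\mathbf{1}^{\top}-I)$ structure, using $\alpha<k/2$. You also correctly observe that the randomness of $S_1,\dots,S_n$ plays no role here since both $\bar\theta$ and $\theta^{\star}$ are defined over all of $\cU$. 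However, two concrete steps in your plan would not deliver the stated inequality. First, your union bound is over all $2^k$ subsets (taking $\delta'=\delta/2^k$), which yields a deviation term of order $\sqrt{(k+\log(1/\delta))/m}$; the parenthetical hope that this "absorbs the extra $\alpha$" is not right, since for constant $\alpha$ the claimed term is $\sqrt{\alpha\log(k/\delta)/m}$, exponentially smaller in the exponent. The paper unions only over the support of $\cU$, i.e.\ the $\binom{k}{\alpha}\le k^{\alpha}$ subsets of size exactly $\alpha$, and uses $\log\big(\binom{k}{\alpha}/\delta\big)\le\alpha\log(k/\delta)$; that is where the $\alpha$ in \eqref{eq_lem2} comes from.

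Second, your conversion step leaks a factor of $\sqrt{k}$ that does not cancel. Bounding $\bignorm{\pmb{\cI}^{\top}(\pmb f-\ex{\pmb f})}\le\sqrt{k}\cdot\binom{k-1}{\alpha-1}\cdot\norm{\pmb f-\ex{\pmb f}}_{\infty}$ and multiplying by $\norm{(\pmb{\cI}^{\top}\pmb{\cI})^{-1}}_2\le \big(\binom{k-1}{\alpha-1}(1-\tfrac{\alpha-1}{k-1})\big)^{-1}$ gives roughly $2\sqrt{k}\,\max_T|f(T)-\ex{f(T)}|$: the binomial factors cancel, but the $\sqrt{k}$ from passing to coordinates survives, whereas \eqref{eq_lem2} has no such factor. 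The paper avoids this by keeping the whitened operator intact: it bounds $\bignorm{\big(\tfrac{\pmb{\cI}^{\top}\pmb{\cI}}{\abs{\cU}}\big)^{-1}\tfrac{\pmb{\cI}^{\top}}{\sqrt{\abs{\cU}}}}_2$ by $\lambda_{\min}^{-1/2}$ of the normalized Gram matrix (at most $(1-\alpha/k)^{-1/2}\le\sqrt 2$), and pairs it with the $\abs{\cU}$-normalized Euclidean norm $\norm{\pmb f-\ex{\pmb f}}/\sqrt{\abs{\cU}}\le\max_T|f(T)-\ex{f(T)}|$, so the $\sqrt{\abs{\cU}}$ cancels and only the $\sqrt 2$ conditioning factor remains, yielding $\cR_m(\cF)/\sqrt 2$ plus the $\sqrt{\alpha\log(k/\delta)/m}$ term. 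So the plan needs both the restricted union bound and this whitened-operator (rather than $\ell_\infty$/row-sum) bookkeeping to reach \eqref{eq_lem2}.
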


Combining Lemma \ref{lemma_vec} and Lemma \ref{lemma_rad}  together, we have thus proved that equation \eqref{eq_converge_theta} holds.
The proof of the above two results can be found in Appendix \ref{sec_proof}.
This result justifies using a linear specification, as we can scale up the sample complexity.

\medskip
\begin{remark} \normalfont \revision{The proof of Theorem \ref{thm_converge} uses the design of the $\alpha$-sized subsets. In particular, the covariates of these $\alpha$-sized subsets are zero-one vectors, with the ones being drawn randomly.
We show that the population covariance of all the $\alpha$-sized subsets is an identity matrix plus a rank-one matrix. See equation \eqref{eq_cov} in Section \ref{sec_theory} for the derivation. This implies that the inverse of the covariance matrix is an identity matrix plus a rank-one matrix, which is crucial for our subset selection procedure described next.}
\end{remark}

\section{Subset Selection for Multitask Learning}\label{sec_theory}

We now describe the second step of our approach.
Recall that this step performs subset selection on all the source tasks.
Towards this end, we will optimize the target task's performance based on the approximations provided by the surrogate model.
We observe that the best subset predicted by the surrogate model corresponds to placing a threshold on the source tasks' relevance scores.
Then, we will analyze this algorithm in a simple setting where the tasks are separated into two groups. The first group is more similar (measured by Euclidean distances) to the target task than the second group.
We prove that our algorithm is guaranteed to find the first group of source tasks.

\subsection{Selecting Source Tasks by Thresholding Relevance Scores}\label{sec_select}

Provided with the surrogate model, we can use its predicted MTL performances as a proxy to optimize the target task's prediction performance.
We consider subset selection by minimizing the function value of $g_{\hat\theta}(S)$ over subset $S$.
Due to the linear specification of $g_{\hat\theta}$, this is equivalent to selecting source tasks with a small $\hat\theta_i$.
Thus, we select a source task $i$ if $\hat\theta_i$ is below the desired threshold $\gamma$, which can be determined via cross-validation.
Then, we train a model by combining the selected source tasks with the target task.
The complete procedure is shown below.
We will rigorously justify the existence of a threshold afterward.

\begin{algorithm}[h!]
	\caption{\textbf{Subset Selection for Multi-Task Learning}}\label{alg:task_modeling}
	\begin{small}
		\textbf{Input}: $k$ source tasks; Training and validation datasets of the target task.\\
		\textbf{Require:} Size of each subset $\alpha$; Number of sampled subsets $n$; MTL algorithm $f$; Task selection threshold $\gamma$. \\
		\textbf{Output}: Trained model $\phi^{(S^{\star})}, \psi_t^{(S^{\star})}$.
		\begin{algorithmic}[1] %
			\STATE For $i = 1, \dots, n$, sample a random subset $S_i$ from $\set{1,2,\dots,k}$ with size $\alpha$; evaluate $f(S_i)$ following equation \eqref{eq_ft}.
			\STATE Estimate the relevance scores $\hat\theta$ following equation \eqref{eq_fit_task_model}.
			\STATE Select source tasks based on their relevance scores: $S^{\star} = \big\{i: \hat\theta_i < \gamma ~\mid \forall\, i = 1, 2, \dots, k\big\}$.
            \STATE Train a model by combining $S^{\star}$ and $t$; denote the trained model as $\phi^{(S^{\star})}$, and $\psi_i^{(S^{\star})}$ for all $i \in S^{\star} \cup \set{t}$.
		\end{algorithmic}
	\end{small}
\end{algorithm}

\subsection{Analysis of the Algorithm}\label{sec_analyze_thresholding}

Next, we present an analysis of our algorithm in a simple setting where the dataset labels are created following a linear relationship.
For the simplicity of the analysis, we also assume that the input features for each task are drawn from an isotropic Gaussian distribution with $p$ dimensions.
For each task $i$ from $0$ to $k$, let $\beta^{(i)} \in \real^p$ denote the unknown linear model parameters for task $i$.
Given a data point from task $i$ with feature vector $x$, its label is generated as $y = x^{\top} \beta^{(i)} + \epsilon$, where $\epsilon$ is a random variable with mean $0$ and variance $\sigma^2$.

Suppose there are two groups of tasks depending on their distances to $\beta^{(t)}$, given by $a, b$ such that $b > a > 0$.
For every $i = 1, \dots, k$, task $i$ is called
    a \emph{good} task if $\norm{\beta^{(i)} - \beta^{(t)}} \le a$; %
    On the other hand, $i$ is a \emph{bad} task if $\norm{\beta^{(i)} - \beta^{(t)}} \ge b$. %
We show that there exists a threshold that separates the good tasks from the bad tasks under our setting, stated formally as follows.
Given the existence of this threshold, we could then find it in practice via cross-validation.

\smallskip
\begin{theorem}\label{thm_analysis}
    In the setting described above, suppose $f(\cdot)$ is bounded from above by a fixed constant $C > 0$.
    Suppose there are $d \gtrsim k \log k + p + {a^4 k^4}(a^2 - b^2)^{-2}$ data samples from every source task and the target task.
    Suppose $n \gtrsim {C^2} k^2 {(a^2 - b^2)^{-2}}$ and $m\gtrsim p\log p$.
    With probability at least $0.99$, there exists a threshold $\gamma$ such that the following holds: %
    \begin{itemize}[leftmargin=20pt,topsep=0.0pt,itemsep=3.0pt,partopsep=0pt,parsep=0pt]
        \item For any $i = {1,2,\dots,k}$, if task $i$ is a good task, then $\hat\theta_i < \gamma$.
        \item Otherwise, if task $i$ is a bad task, then $\hat\theta_i > \gamma$.
     \end{itemize}
\end{theorem}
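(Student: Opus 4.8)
The plan is to reduce the claim to a statement about the entries of the population minimizer $\theta^\star$ (equivalently $\bar\theta$), and then transfer it to $\hat\theta$ via the concentration bounds already established in Theorem \ref{thm_converge}. First I would use the remark following Lemma \ref{lemma_rad}: because each covariate is the characteristic vector of a uniformly random $\alpha$-subset, the population covariance $\pmb{\cI}^\top\pmb{\cI}/\abs{\cU}$ equals $c_1 I + c_2 \mathbf{1}\mathbf{1}^\top$ for explicit constants $c_1 = \frac{\alpha(k-\alpha)}{k(k-1)}$ and $c_2 = \frac{\alpha(\alpha-1)}{k(k-1)}$, so by Sherman--Morrison its inverse is again of the form $a_1 I + a_2 \mathbf{1}\mathbf{1}^\top$. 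Consequently $\theta^\star_i = a_1 (\pmb{\cI}^\top \ex{\pmb f})_i + a_2 \sum_{j}(\pmb{\cI}^\top \ex{\pmb f})_j$; the second term is a constant independent of $i$, so the \emph{ordering} of $\theta^\star_i$ across tasks is determined entirely by $(\pmb{\cI}^\top \ex{\pmb f})_i = \sum_{T \ni i} \ex{f(T)}$, i.e.\ the sum of expected MTL losses over all $\alpha$-subsets containing $i$. This is exactly the ``relevance score is proportional to the sum of MTL performances of subsets containing $i$'' statement advertised before the theorem.

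Next I would show that this quantity is monotone in the task's distance to the target under the linear-Gaussian model. For a fixed subset $T$, one trains ridge-less (or appropriately regularized) linear regression on the pooled data of $T\cup\{t\}$ and evaluates squared loss on $m$ fresh target samples; with $d \gtrsim k\log k + p$ samples per task the pooled design is well-conditioned, so standard linear-regression risk bounds give $\ex{f(T)} = \sigma^2 + \Phi\big(\{\norm{\beta^{(j)}-\beta^{(t)}}\}_{j\in T}\big) + (\text{lower order})$, where $\Phi$ is increasing and (to leading order) additively separable in the per-task squared distances $\norm{\beta^{(j)}-\beta^{(t)}}^2$ — this separability is what the $a^4k^4(a^2-b^2)^{-2}$ sample requirement buys, by controlling cross terms in the pooled least-squares solution. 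Summing over all $T\ni i$ and using symmetry, $\sum_{T\ni i}\ex{f(T)}$ splits into a term proportional to $\norm{\beta^{(i)}-\beta^{(t)}}^2$ plus a term that is the \emph{same} for every $i$ (it only sees the multiset of the other tasks' distances, averaged over subsets). Hence if $i$ is good and $i'$ is bad, $(\pmb{\cI}^\top\ex{\pmb f})_{i'} - (\pmb{\cI}^\top\ex{\pmb f})_i \gtrsim (b^2-a^2)\cdot(\text{number of }\alpha\text{-subsets through a fixed element}) $, which after multiplying by $a_1>0$ yields a strict gap $\theta^\star_{i'} - \theta^\star_i \gtrsim (b^2-a^2)/k$ (the $1/k$ coming from $a_1$ and the normalization). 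Pick $\gamma^\star$ to be the midpoint of this gap.

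Finally I would port the gap to $\hat\theta$. By Theorem \ref{thm_converge}, with probability $0.99$, $\norm{\hat\theta - \theta^\star} \lesssim C\sqrt{k\log^2(k)\,\alpha^4/n} + \sqrt{\log(k)\,\alpha/m} + \cR_m(\cF)$. In the linear-Gaussian setting $\cR_m(\cF) = \textup{O}(\sqrt{p/m})$, so taking $n \gtrsim C^2 k^2(a^2-b^2)^{-2}$ (times logarithmic and $\alpha$ factors) and $m \gtrsim p\log p$ makes $\norm{\hat\theta-\theta^\star}$ — hence in particular $\max_i\abs{\hat\theta_i - \theta^\star_i}$ — smaller than half the gap $(b^2-a^2)/(Ck)$ established above. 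Then $\gamma := \gamma^\star$ works: every good task has $\hat\theta_i \le \theta^\star_i + \tfrac12\text{gap} < \gamma$ and every bad task has $\hat\theta_i \ge \theta^\star_i - \tfrac12\text{gap} > \gamma$. A union bound over the $\le k$ coordinates (already absorbed into the $\log k$ factors) and over the $0.99$-probability events from the design-matrix conditioning closes the argument.

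\textbf{Main obstacle.} The delicate step is the second paragraph: showing that $\ex{f(T)}$, the expected pooled-regression risk on the target, is — to sufficient accuracy — additively separable in the individual task distances $\norm{\beta^{(j)}-\beta^{(t)}}^2$, with a remainder small enough that summing $\binom{k-1}{\alpha-1}$ subsets does not wash out the $b^2-a^2$ signal. This requires a careful expansion of the pooled least-squares estimator $\hat\beta_T = (\sum_{j\in T\cup\{t\}} X_j^\top X_j)^{-1}\sum_{j} X_j^\top y_j$ around its expectation (a distance-weighted average of the $\beta^{(j)}$), controlling both the fluctuation of the Gram matrices around $d\cdot I$ and the cross-task interaction terms; the stated sample complexity $d\gtrsim k\log k + p + a^4k^4(a^2-b^2)^{-2}$ is exactly tuned to make these remainders $o((b^2-a^2)/k)$ per subset. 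Everything else is either a direct invocation of Theorem \ref{thm_converge} or Sherman--Morrison bookkeeping.
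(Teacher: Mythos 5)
Your skeleton is the same as the paper's: use the identity-plus-rank-one covariance of the $\alpha$-subset design (equation \eqref{eq_cov}, inverted by Sherman--Morrison) to argue that, up to a shift common to all coordinates, the relevance score of task $i$ tracks the sum of $f(T)$ over subsets $T$ containing $i$; then compute the expected per-subset loss in the linear-Gaussian model to get a good-vs-bad separation and place $\gamma$ inside it. The paper does exactly this, except it stays with empirical quantities: Lemma \ref{lemma_gap} reduces $\hat\theta_i-\hat\theta_j$ to $\frac{k}{\alpha}(v_i-v_j)$ with an $\textup{O}(\alpha\log(k/\delta)/\sqrt n)$ error, and equation \eqref{eq_err_vec} relates $v_i/n$ to the population average of $f$ over subsets containing $i$.

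There are, however, two concrete gaps in your version. First, the separation you assert, $\theta^{\star}_{i'}-\theta^{\star}_i \gtrsim (b^2-a^2)/k$, is not what the model yields: by the paper's computation (equation \eqref{eq_sT}) the expected loss of a subset $T$ is $\frac{p\left(a^2 s(T)+b^2(\alpha-s(T))\right)}{\alpha^2 d}$, so the per-task gap carries a factor $p/(\alpha^2 d)$ and in particular shrinks as $d$ grows, while the theorem only lower-bounds $d$; a $d$-free gap cannot be quoted. Relatedly, your "main obstacle" (additive separability of $\ex{f(T)}$ in the squared distances) is not resolved in the paper by a careful expansion of the pooled least-squares estimator: the paper folds each task's bias into mean-zero isotropic noise of variance $a^2$ or $b^2$, which makes the cross terms vanish in expectation, and uses the $d$-condition only for Gram-matrix conditioning (equation \eqref{eq_task_prob}) and the fluctuation bound on $\norm{u_T}^2$ (equation \eqref{eq_uT}). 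Second, the transfer step does not close as stated: the Theorem \ref{thm_converge} bound you invoke contains $\cR_{m}(\cF)\asymp\sqrt{p/m}$, which at $m\gtrsim p\log p$ is only of order $1/\sqrt{\log p}$ and need not be below half of a gap that scales like $p(b^2-a^2)/(\alpha^2 d)$. The paper avoids paying this term against the gap by comparing coordinate differences directly, so that only the subset-sampling error $\textup{O}\!\left(Ck/\sqrt{\delta n}\right)$ and the per-subset design fluctuation must be dominated (condition \eqref{eq_condition}); the $m$-dependent evaluation error of $f$ is handled separately. To repair your argument, either derive the gap with its $p/(\alpha^2 d)$ factor and compare it only against the $n$- and $d$-dependent deviations, as the paper does, or impose an $m$ requirement strong enough that the full Theorem \ref{thm_converge} error is below half that gap.
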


\medskip
\textbf{Proof Overview.}
The intuition behind the above result is that $\hat\theta_i$ averages the MTL performances of all subsets involving $i$.
If $i$ is a good task, the average performance will be lower, leading to a lower relevance score.
Moreover, there exists a threshold that separates the relevance scores of good tasks and bad tasks.
We give a toy example to illustrate why $\hat\theta$ can preserve the Euclidean distance gaps from $\beta$.
Our experiments later also confirm the existence of such a separation (cf. Figure \ref{fig_correct_labels}).

\medskip
\begin{example}[A one-dimensional example]
Consider a one-dimensional case where $p = 1$ and every $\beta$ is a real value. 
Let $\beta^{(t)} = 0$.
Let $0 < \beta^{(i)} < a$ if $i$ is a good task.
Let $b < \beta^{(i)}$ if $i$ is a bad task.
\end{example}
\begin{itemize}[leftmargin=20pt, topsep=0.0pt,itemsep=3.0pt,partopsep=0pt,parsep=0pt]
    \item Our first observation is that $\hat\theta_i$ is proportional to $v_i$ (cf. equation \eqref{eq_theta_j}), as shown in Lemma \ref{lemma_gap}.

    \item Our second observation is that $v_i$ is proportional to $\beta^{(i)}$.  
    This is because $v_i$ is the sum of $f(S)$ among all $S \in \cU$ involving $i$, and $f(S)$ is the average of $\beta^{(j)}$ among $j \in S$.
    Thus, $v_i$ is the average of all $\beta$'s from the $n$ random subsets, while $\beta^{(i)}$ has a larger weight in $v_i$ than the other $\beta$'s because $i$ is always in $S$.
\end{itemize}
Taken together, we conclude that the relevance scores can preserve the relative values of $\beta$ in this example.

We now generalize the intuition from the one-dimensional case, beginning with the first observation.
We show that the relevance scores preserve the distance gap of every pair of tasks from $\beta$.

\medskip
\begin{lemma}\label{lemma_gap}
    In the setting of Theorem \ref{thm_analysis}, with probability $1 - \delta$, for any $\delta > 0$, the following holds:
    \begin{align}
        \bigabs{ \frac 1 n \Big(\hat\theta_i - \hat\theta_j\Big) - \frac{k} {\alpha n}   \Big(v_i - v_j\Big) } \lesssim \frac{\log(\delta^{-1} k)} {\sqrt n}, \text{ for any } 1\le i< j\le k. \label{eq_lem_gap}
    \end{align}
\end{lemma}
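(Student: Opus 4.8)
The plan is to start from the normal equations $\cI_n^{\top}\cI_n\,\hat\theta = \cI_n^{\top}\hat f = v$ and to exploit the special structure of the random design coming from sampling $\alpha$-sized subsets. First I would record the population Gram matrix: for $T\sim\cU$, $\Sigma := \exarg{T}{\mathbbm{1}_T\mathbbm{1}_T^{\top}}$ has diagonal entries $\alpha/k$ and off-diagonal entries $\alpha(\alpha-1)/(k(k-1))$, so $\Sigma = c_1 I_k + c_2 \mathbf{1}\mathbf{1}^{\top}$ with $c_1 = \frac{\alpha(k-\alpha)}{k(k-1)}$ and $c_2 = \frac{\alpha(\alpha-1)}{k(k-1)}$; the hypothesis $\alpha < k/2$ keeps $c_1$ bounded away from $0$. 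By Sherman--Morrison, $\Sigma^{-1} = \frac 1{c_1} I_k - \frac{c_2}{c_1(c_1+c_2 k)}\mathbf{1}\mathbf{1}^{\top}$, and the one structural fact that drives the lemma is that $\Sigma^{-1}$ applied to any vector $v$ adds the \emph{same} scalar $\frac{c_2}{c_1(c_1+c_2 k)}\sum_\ell v_\ell$ to every coordinate, so this contribution cancels in every coordinate difference. Consequently, if $\frac 1n\cI_n^{\top}\cI_n$ were exactly $\Sigma$, one would get $\hat\theta_i - \hat\theta_j = \frac 1{n c_1}(v_i - v_j)$, which is $\frac{k}{\alpha n}(v_i-v_j)$ up to the normalization constant $\frac 1{c_1} = \frac{k(k-1)}{\alpha(k-\alpha)}\approx \frac k\alpha$ (the discrepancy being lower order under the scaling of Theorem~\ref{thm_analysis}).

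It remains to control the sampling error of the Gram matrix. Writing $M := \frac 1n\cI_n^{\top}\cI_n = \frac 1n\sum_{\ell=1}^n \mathbbm{1}_{S_\ell}\mathbbm{1}_{S_\ell}^{\top}$, an average of i.i.d.\ rank-one matrices of operator norm $\alpha$, a matrix Bernstein inequality gives $\|M-\Sigma\|_{\mathrm{op}} \lesssim \sqrt{\alpha^3\log(\delta^{-1}k)/(nk)} + \alpha\log(\delta^{-1}k)/n$ with probability $1-\delta$ over the sampled subsets; here the log factor comes only from the ambient dimension $k$ of $M$, and — in contrast to Lemma~\ref{lemma_vec} — no union bound over the $2^k$ subsets is needed, since $\hat\theta$ and $v$ depend only on $S_1,\dots,S_n$. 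For $n$ large enough this also yields $\lambda_{\min}(M)\gtrsim c_1$, hence $\|M^{-1}\|_{\mathrm{op}}\lesssim 1/c_1$, and the resolvent identity $M^{-1}-\Sigma^{-1} = M^{-1}(\Sigma-M)\Sigma^{-1}$ gives $\|(\cI_n^{\top}\cI_n)^{-1} - \frac 1n\Sigma^{-1}\|_{\mathrm{op}}\lesssim \frac 1{n c_1^2}\|M-\Sigma\|_{\mathrm{op}}$.

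Finally I would write $\hat\theta = \frac 1n\Sigma^{-1}v + Ev$ with $\|E\|_{\mathrm{op}}$ bounded as above. The first summand contributes $\frac 1{n c_1}(v_i-v_j)$ to $\hat\theta_i-\hat\theta_j$ by the cancellation above; for the perturbation I would bound $|(Ev)_i - (Ev)_j|\le \sqrt 2\,\|E\|_{\mathrm{op}}\,\norm{v}$ using the crude estimate $\norm{v}\le \|\cI_n\|_{\mathrm{op}}\norm{\hat f}\lesssim \sqrt{n\|\Sigma\|_{\mathrm{op}}}\cdot\sqrt n\,C$, valid since $0\le f\le C$. Collecting terms and absorbing the polynomial dependence on $k,\alpha,C$ (as done throughout the paper in the regime of Theorem~\ref{thm_analysis}) leaves the bound $\lesssim \log(\delta^{-1}k)/\sqrt n$. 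The step I expect to be the main obstacle is this last one: propagating the $\textup{O}(n^{-1/2})$ control of the sampled Gram matrix through the matrix inverse while keeping the error at a $\log k$ rather than $\sqrt k$ level — which works precisely because the estimator here does not reference the $f$-values of unsampled subsets — and verifying that the constant-shift part of $\Sigma^{-1}$ really does drop out of the coordinate difference $\hat\theta_i-\hat\theta_j$.
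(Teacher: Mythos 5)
Your proposal is correct and follows essentially the same route as the paper's proof of Lemma \ref{lemma_gap}: express $\hat\theta$ through the normal equations, observe via Sherman--Morrison that the rank-one part of the inverse population Gram matrix adds a common shift that cancels in the difference $\hat\theta_i-\hat\theta_j$ (the paper's equation \eqref{eq_cov}), control $\big\|\cI_n^{\top}\cI_n/n-\ex{\cI_n^{\top}\cI_n/n}\big\|_2$ by matrix Bernstein and propagate it through the inverse exactly as in equations \eqref{eq_err_E} and \eqref{eq_err_1}, and finish with the crude bound on $\|v/n\|$ as in equation \eqref{eq_vn}. Your exact constants $c_1=\frac{\alpha(k-\alpha)}{k(k-1)}$ and $c_2=\frac{\alpha(\alpha-1)}{k(k-1)}$ are in fact slightly more careful than the covariance stated in the paper, and the $1/c_1$-versus-$k/\alpha$ normalization discrepancy you flag is precisely the approximation the paper makes implicitly in equation \eqref{eq_cov}.
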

The above result analyzes the covariance of $\cI_n$, which is proportional to identity plus a constant shift. %
Let $\id_{k\times k}$ be a $k$ by $k$ identity matrix and $e\in\real^k$ be a vector whose entries are all equal to one.
By the definition of $\cI_n$ and Woodbury matrix identity, we have
\begin{align}
    & \ex{\frac{\cI_n^{\top} \cI_n }{n}} = \frac {\alpha } {k} \id_{k\times k} + \frac {\alpha (\alpha - 1)} {k(k-1)} e e^{\top} \nonumber \\
    ~\Rightarrow~& \ex{\frac{\cI_n^{\top}\cI_n}{n}}^{-1} = \frac {k} {\alpha}\left( \id_{k\times k} - \frac{\alpha - 1}{k\alpha - 1} ee^{\top} \right). \label{eq_cov}
\end{align}
Crucially, if we multiply $v$ on the right-hand side of equation \eqref{eq_cov}, then we will get $\frac{k}{\alpha}(v - \frac{\alpha-1}{k\alpha-1}{ (e^{\top} v)} e)$.
Recall that $e$ is all one's vector, which has the same entry in every coordinate after rescaling.
Then, recall $\hat\theta$ from equation \eqref{eq_fit_task_model}. By matrix concentration inequalities, the spectral norm (denoted as $\norms{\cdot}$ for a matrix) of the deviation from $\frac{\cI_n^{\top}\cI_n}{n}$ to its expectation satisfies
\begin{align}
    \bignorms{\frac{\cI_n^{\top}\cI_n}{n} - \ex{\frac{\cI_n^{\top}\cI_n}{n}}} \lesssim \frac{\alpha \log(k\delta^{-1})}{\sqrt n}. \label{eq_dev}
\end{align}
See equation \eqref{eq_err_E}, Appendix \ref{proof_lemma_1st} for the proof.
Thus, combining equations \eqref{eq_cov} and \eqref{eq_dev}, we claim that $\hat\theta$ is equal to $\alpha^{-1} k v$ minus a shared term for every task, modulo the deviation error of order $\textup{O}(n^{-1/2})$.
By subtracting $\hat\theta_i - \frac k {\alpha} v_i$ and $\hat\theta_j - \frac k {\alpha} v_j$, we can cancel out the shared term, leading to equation \eqref{eq_lem_gap}.

Next, we formalize the second observation from the one-dimensional case. %
Based on equation \eqref{eq_theta_j}, $v_i$ is a sum of $f(S)$ for all subsets $S$ such that $i \in S$.
We then show that $f(S)$ is the sum of $\beta^{(j)}$ for all $j \in S$, based on the pooling structure of our MTL model. %
Thus, the Euclidean distance between $\beta^{(i)}$ and $\beta^{(t)}$ will also reflect in $v_i$.
For complete proof of Theorem \ref{thm_analysis} (and Lemma \ref{lemma_gap}), see Appendix \ref{sec_proof_select}.
This result substantiates our intuition that $\hat\theta_i$ provides the relevance score of each source task $i$ to the target task while accounting for the presence of other source tasks.

\smallskip
\begin{remark}
    \normalfont
    \revision{After identifying the related tasks from all the source tasks, we can then combine them together with the target task for multi-task learning.
    We can show that provided the distance between their $\beta$-coefficients is small enough (i.e., $a$ is small enough), then multi-task learning will be better than single-task learning. The details are omitted.}
\end{remark}
\section{Experiments}\label{sec:exp}

We apply our approach to three settings.
The first setting is about applying weak supervision to unlabeled data, and we apply our algorithm to select labeling functions for combining the weak labels of the labeling functions.
The second setting involves language prediction tasks from NLP benchmarks.
Again, we use our algorithm to select source tasks to improve the performance of target tasks.
The third setting involves learning from multiple groups of heterogeneous subpopulations, where the goal is to train a model with robust performance across all groups.
We cast this multi-group learning problem into an MTL framework and apply our algorithm to select a subset of groups to improve the robustness of target tasks.
For all these settings, we show that surrogate models can predict negative transfers accurately and fit MTL performances well;
Moreover, our approach provides consistent benefits over various optimization methods for multi-task learning. {The code repository for reproducing our experiments can be found at \url{https://github.com/VirtuosoResearch/Task-Modeling}.}

\subsection{Experimental Setup}\label{sec:exp_setup}

\textbf{Datasets.}
First, we apply our approach to several text classification tasks from a weak supervision dataset \cite{zhang2021wrench}. Each dataset uses several labeling functions to create labels for every unlabeled example. The labels generated by different labeling functions may conflict with each other. 
We view each labeling function as a source task. The goal is to predict an unlabeled set of examples which is viewed as the target task.
A validation dataset that includes the correct labels is available for cross-validation.
We include the dataset statistics in Table \ref{tab_acc_res_weakspervision}.

Second, we consider MTL with natural language processing tasks.
We collect twenty-five datasets across a broad range of tasks, spanning sentiment classification, natural language inference, question answering, etc., from GLUE, SuperGLUE, TweetEval, and ANLI.
We view one task as the target and the rest as source tasks.
The goal is to select a subset of source tasks for the best MTL performance. 
We provide the statistics of the twenty-five tasks in Table \ref{tab_text_statistics}, Appendix \ref{add_exp_setup}.

Third, we consider multi-group learning settings where a dataset involves multiple subpopulation groups. We consider income prediction tasks based on US census data \cite{ding2021retiring}. 
The goal is to predict whether an individual's income is above \$50,000 using ten features, including the individual's education level, age, sex, etc.
There are 51 states in this dataset; we view each state as one task.
For prediction, we use one state as the target task and the remaining fifty as source tasks.
We use the racial group of each individual to split a state population into nine subpopulation groups. 
We evaluate the robustness of a model by the worst-group accuracy. This metric measures the accuracy of the worst-performing group among all groups.
We use six states as the target task. See Table \ref{tab:acc_res_worst_acc} for dataset statistics.

\noindent\textbf{Implementation.}
We use a standard approach for conducting MTL, i.e., hard parameter sharing.
For text classification, we use BERT-Base as the encoder.
For tabular features, we use a fully-connected layer with a hidden size of $32$.
The surrogate modeling procedure requires three parameters: the size of a subset, the number of samples, and the loss function.
We select the size between $3, 5, 10$, and $15$.
We select the number of samples from a range between $50, 200, 400$, and $800$, depending on $k$.
We also collect a holdout set of size $100$ for constructing the surrogate model.
For classification tasks, we set the loss function as the negative classification margin, i.e., the difference between the correct-class probability and the highest incorrect-class probability. 
After estimating the surrogate model $g$ from equation \eqref{eq_fit_task_model}, we use $g(S)$ as the predicted multitask loss for an unseen subset $S$.
We compare $g(S)$ with the STL performance of task $t$ to determine whether the transfer from $S$ to $t$ is positive or negative.
We measure the $F_1$-score for the minority class (between the positive and negative classes) on the holdout set.  

\subsection{Results for Predicting Negative Transfers in Multitask Learning}\label{sec_identify}

We validate that our fitted models can accurately identify positive vs. negative transfers from source tasks.
Then, we show that these models can be constructed efficiently by reporting the runtime.

\textbf{Results.} We test the accuracy of using surrogate models to predict positive vs. negative transfers. We first evaluate the four examples shown in Figure \ref{fig_source_tasks}.
We set the size of $\alpha$ as $5$ and $n$ as $400$.
Using the model to compare the MTL performances with STL performances, we can correctly predict the transfers with an $F_1$-score of \textbf{0.82}, averaged over the four target tasks.
Second, we conduct the same tests for weak supervision and NLP tasks. 
Similarly, task models can predict positive vs. negative transfers with an average $F_1$-score of \textbf{0.8} for ten different target tasks.

Furthermore, we compare these results with two baselines that compute first-order task affinity scores or higher-order approximations by averaging first-order affinity scores.
Our approach yields much more accurate predictions across different subset sizes of $\alpha$, ranging from $5$ up to $20$.
Figure \ref{transfer_prediction_res} provides the illustration for one target task, which is conducted on the US Census dataset, along with fifty source tasks.

Lastly, we measure Spearman's correlation between predicted and true performances. We observe an average coefficient of \textbf{0.8} across 16 target tasks. See Appendix \ref{sec_exp_detail} for the details.

\begin{figure}[h!]
    \centering
  \begin{subfigure}[b]{0.98\textwidth}
    \includegraphics[width=0.99\textwidth]{./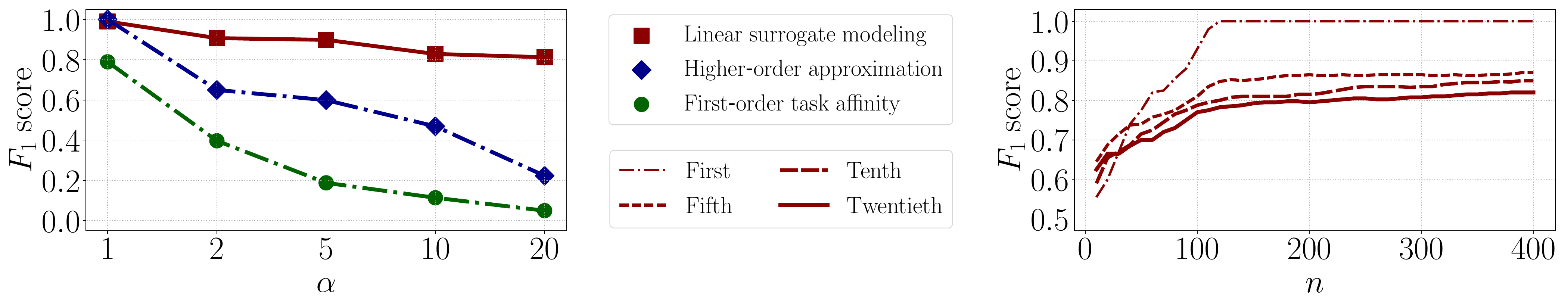}
  \end{subfigure}
  \caption{
  \textbf{Left}: Our approach can consistently predict positive/negative transfers from up to $20$ source tasks to the target task.
  \textbf{Right}: Convergence of surrogate models as $n$ increases up to $400$, leading to an $F_1$-score of $0.8$ for predicting positive/negative transfers from up to $20$ source tasks to one target task.}\label{transfer_prediction_res}
\end{figure}

\begin{wrapfigure}[12]{r}{0.25\textwidth}
    \centering
    \vspace{-0.0in}
    \includegraphics[width=0.23\textwidth]{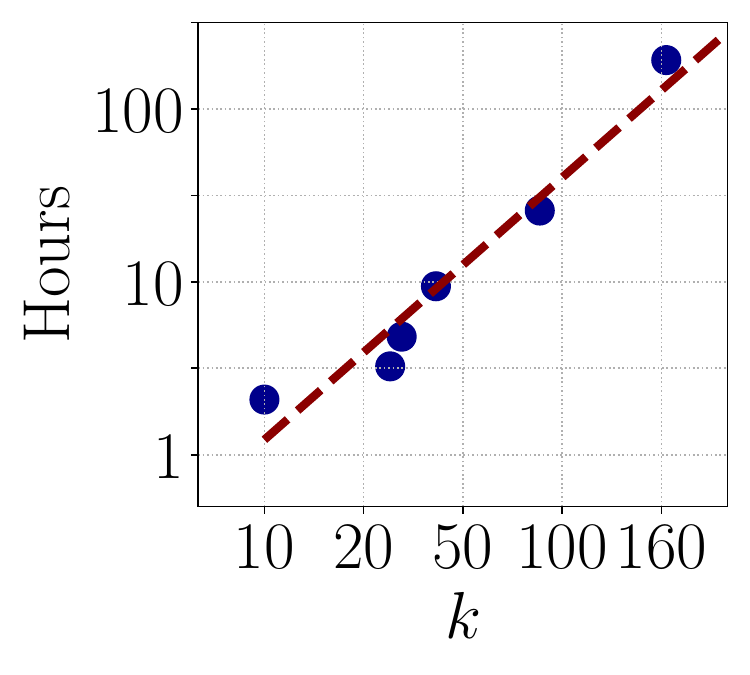}
    \vspace{-0.2in}
    \caption{\revision{We show that the runtime of our approaches scales linearly with $k$, the number of source tasks.}}\label{fig_runtime}
\end{wrapfigure}
\textbf{Computational cost.}
Next, we report the runtime cost collected on an NVIDIA Titan RTX card.
First, we show that the running time of our procedure scales linearly with $k$, the number of source tasks.
Recall that our approach requires training $n$ models, one for each random subset.
Section \ref{sec_convergence_guarantees} shows that the sample complexity for learning task models is linear in the number of source tasks. 
In practice, we find that collecting $n \le 8k$ samples suffice for fitting the model.
We provide empirical evidence to support this result.
We plot the convergence of task modeling on sixteen target tasks from three datasets described in Section \ref{sec:exp_setup}.
We measure the MSE between task model predictions and empirical training results on the holdout set of size 100, following the experimental setup described in Section \ref{sec_identify}.
Figure \ref{fig_convergence}, which can be found in Appendix \ref{sec_exp_detail}, shows the results.
Moreover, the results hold for 16 target tasks.

Thus, we conclude that linear surrogate models can be accurately fitted with less than $8k$ samples, and the fitted model can accurately predict the performances of unsampled subsets.
In Figure \ref{fig_runtime}, we plot the number of GPU training hours as a function of $k$. 
The results confirm the linear scaling behavior of our approach.

Our approach is also comparable with the baseline approaches.
Among them, the most related ones compute first-order affinity scores and conduct a branch-and-bound search algorithm over the task space, which has exponential complexity in $k$ \cite{standley2020tasks,fifty2021efficiently}.
In our experience, with more than 20 tasks, these methods take more than 200 hours.
Our approach requires, at most, 145 hours.
This is consistent with our theoretical predictions in Section \ref{sec_approach}. 
Later in Section \ref{sec_speedup}, we elaborate on two simple techniques to accelerate surrogate model training in practice.

\subsection{Results for Improving Multitask Learning Performance}\label{sec_exp_res}

Next, we apply our approach to MTL on weak supervision and NLP tasks.
We compare our approach with the following baselines. 
First, we consider training by naively combining all source and target tasks. 
Second, we consider bilevel optimization methods, including TAWT \cite{chen2021weighted} and Auto-$\lambda$ \cite{liu2022auto}, and MTL optimization methods, including HOA \cite{standley2020tasks}, TAG \cite{fifty2021efficiently}.  
The latter two methods use a branch-and-bound algorithm that does not scale to over 20 tasks in one dataset. To allow for a comparison with them, we apply the thresholding procedure to their first-order task affinity scores to select source tasks. 
\revision{To set the threshold $\gamma$ in our algorithm, we use grid search from $-0.5$ to $0.5$ at an interval of $0.1$.
We choose this range because it covers the values of most coefficients in our experiments.}

\smallskip
\textbf{Multitask weak supervision.} First, we apply our algorithm to five weak supervision datasets, which involve text classification from multiple weak labels.
We select a subset of labeling functions so that using their weak labels to train an end model best improves performance on the target task.
We also compare against methods that use a label model to aggregate the weak labels and then train an end model on the aggregated label.
These include taking a majority vote on the weak labels, applying probabilistic modeling to combine the noisy labels \cite{ratner2016data}, and MeTaL \cite{ratner2019training}.

Next, we compare the experimental results in Table \ref{tab_acc_res_weakspervision}.
Compared with naively MTL, which trains all tasks together, our algorithm improves the test performance by \textbf{6.4\%} on average.
Compared with MTL optimization and weak supervision methods, our algorithm outperforms their results by up to \textbf{3.6\%} absolute and \textbf{2.3\%} on average. 

\smallskip
\textbf{Illustrating the separation between selected and not selected source tasks.} Lastly, we examine the labeling functions selected by our approach.
Recall that our procedure places a threshold over the learned coefficients to separate related and unrelated source tasks.
Here, we use the number of correct and incorrect labels as a proxy of relatedness between a labeling function and the target task.
\revision{Figure \ref{fig_correct_labels} shows the results, measured on two datasets, namely Chemprot and TREC.
Each dot represents one source task.
We observe a clear separation between selected and excluded source tasks when we compare the correct/incorrect labels in each task.}
This shows that our algorithm selects more accurate labeling functions.

\begin{table*}[t!]
\centering
\caption{Accuracy/F1-score from surrogate modeling followed by task selection (ours), as compared with MTL methods and weak supervision methods that use a label model to aggregate the weak labels.}\label{tab_acc_res_weakspervision}
\begin{footnotesize}
\begin{tabular}{@{}lcccccc@{}}
\toprule
Dataset (Metrics)    & Youtube (Acc.) & TREC (Acc.) & CDR (F1) & Chemprot (Acc.) & Semeval (Acc.) \\ \midrule
Training & 1,586 & 4,965 & 8,430 & 12,861 & 1,749  \\
Validation & 120 & 500 & 920 & 1,607 & 178 \\
Test & 250 & 500  & 4,673 & 1,607 & 600 \\
\# source tasks & 10 & 68 & 33 & 26 & 164  \\
\midrule 
Naive MTL & 94.72$\pm$0.85 & 64.10$\pm$0.50	& 58.20$\pm$0.55	& 53.43$\pm$0.53	& 89.00$\pm$1.06 \\ 
HOA     & 94.93$\pm$1.80 & 74.67$\pm$4.66	& 59.76$\pm$0.97	& 45.57$\pm$0.41	& 89.94$\pm$4.42 \\ 
TAG    & 95.20$\pm$0.65 & 77.50$\pm$3.62	& 59.31$\pm$0.15	& 53.67$\pm$2.74	& 89.06$\pm$1.47 \\ 
TAWT       & 94.53$\pm$1.05 & 72.40$\pm$2.36	& 59.85$\pm$0.30	& 53.76$\pm$2.96 & 86.83$\pm$1.78 \\ 
\revision{Auto-$\lambda$} & \revision{95.80$\pm$0.85} & \revision{73.70$\pm$0.67} & \revision{59.07$\pm$0.05} & \revision{52.50$\pm$1.28} & \revision{87.91$\pm$0.66} \\
Majority voting                          & 95.36$\pm$1.71 & 66.56$\pm$2.31 & 58.89$\pm$0.50 & 57.32$\pm$0.98 & 85.03$\pm$0.83 \\ 
Probabilistic modeling  & 93.84$\pm$1.61 & 68.64$\pm$3.57 & 58.48$\pm$0.73 & 57.00$\pm$1.20 & 83.93$\pm$0.83 \\ 
MeTaL  & 92.32$\pm$1.44 & 58.28$\pm$1.95 & 58.48$\pm$0.90 & 56.17$\pm$0.66 & 71.74$\pm$0.57 \\ 
\midrule 
\textbf{Alg. \ref{alg:task_modeling} (Ours)}                     & \textbf{97.47$\pm$0.82} & \textbf{81.80$\pm$1.14}	& \textbf{61.22$\pm$0.39}	& \textbf{57.54$\pm$0.55}	& \textbf{93.50$\pm$0.24}  \\\bottomrule 
\end{tabular}
\end{footnotesize}
\end{table*}

\begin{figure}[!t]
    \centering
  \begin{subfigure}[b]{0.49\textwidth}
    \centering
    \includegraphics[width=0.5\textwidth]{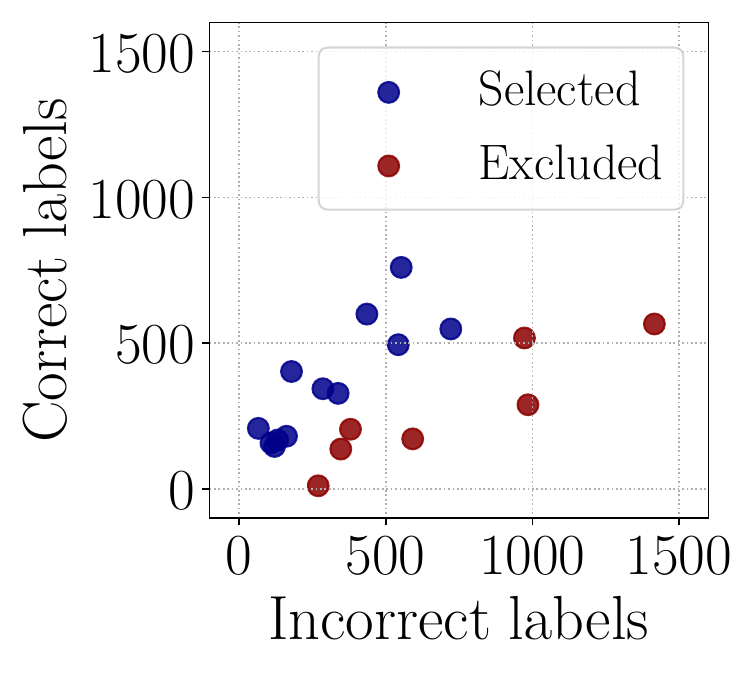}
    \caption{Illustration for LFs in the Chemprot dataset}
  \end{subfigure}\hfill
  \begin{subfigure}[b]{0.49\textwidth}
    \centering
    \includegraphics[width=0.5\textwidth]{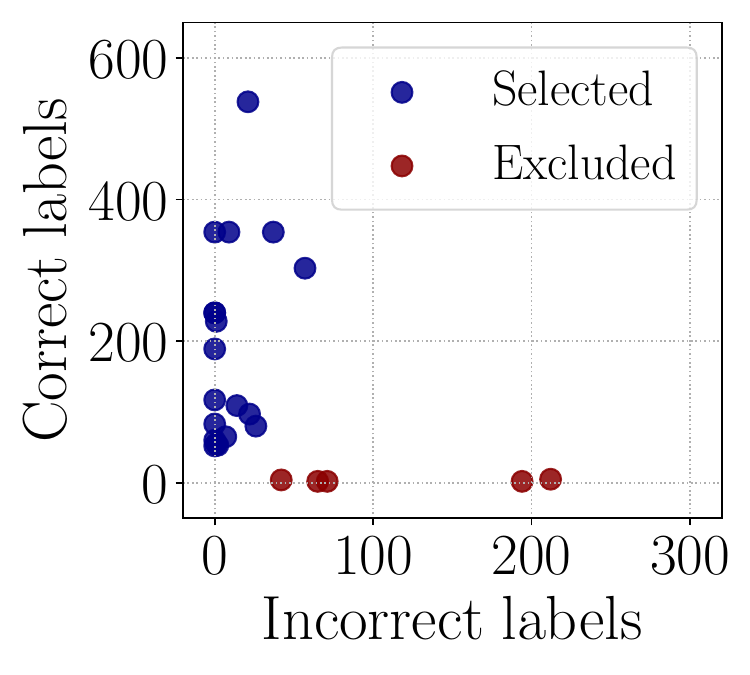}
    \caption{Illustration for LFs in the TREC dataset}
  \end{subfigure}
      \caption{\revision{We find that the selected and not-selected source tasks are separated by the number of correct labels provided by each source task versus the number of incorrect labels of each source task. Each dot represents the number of correct/incorrect labels for one labeling function.}}\label{fig_correct_labels}  
\end{figure}

\bigskip
\textbf{NLP tasks.}
Next, we test our approach for NLP tasks.
We collect 25 datasets from GLUE, SuperGLUE, TweetEval, and ANLI. See Table \ref{tab_text_statistics}, Appendix \ref{sec_omitted_results} for a complete list.
We evaluate our approach by first selecting source tasks and then applying MTL. We test on five  target tasks: CoLA, RTE, CB, COPA, and WSC. For each task, we use the rest 24 tasks as  source tasks.

We first compare our approach with STL and naive MTL.
We observe that naive MTL can perform worse than STL, e.g., on CoLA and WSC datasets. 
By contrast, our approach always outperforms STL (by \textbf{5.5\%}) and naive MTL (by \textbf{5.4\%}), on average.
We then compare our approach with TAG and HOA. Our approach shows an average improvement of \textbf{2.2\%} and is especially effective for tasks with a small training set.

\subsection{Results for Improving Robustness in Multi-group Learning}\label{sec_fair}


We apply our approach to multi-group learning settings where the input distribution contains a heterogeneous mixture of subpopulations.
The objective of these problems is to learn a model that performs robustly for all groups.
In particular, we apply our approach to three performance metrics: worst-group accuracy, democratic disparity, and equality of opportunity.
We also compare against STL methods, including group distributional robust optimization (GroupDRO, \citet{sagawa2019distributionally}) and supervised contrastive learning (correct-n-contrast, \citet{zhang2022correct}).
Table \ref{tab:acc_res_worst_acc} presents the comparison.

Compared with STL, including GroupDRO and correct-n-contrast,  task modeling improves the worst-group accuracy by \textbf{1.17\%} on average.
Compared with existing MTL optimization methods, our approach shows a gain of up to \textbf{1.9\%} absolute accuracy.
Measured by two fairness metrics, namely democratic disparity and equality of opportunity, our algorithm also outperforms the baselines (see Appendix \ref{sec_omitted_results} for details).

\begin{table*}[t!]
\centering
\caption{Worst-group accuracies using MTL with source tasks selected by our algorithm, as compared to STL, MTL optimization methods, and exhaustive search over combinations of up to two source tasks.}\label{tab:acc_res_worst_acc}
\begin{footnotesize}
\begin{tabular}{@{}lcccccccccc@{}}
\toprule
Dataset & HI & KS & LA & NJ & NV & SC  \\ \midrule
Training & 4,638 & 9,484 & 12,400 & 28,668 & 8,884 & 14,927 \\
Validation & 1,546 & 3,161 & 4,133 & 9,556 & 2,961 & 4,976 \\
Test & 1,547 & 3,162 & 4,134 & 9,557 & 2,962 & 4,976 \\
Smallest group size & 67 & 75 & 58 & 52 & 61 & 203 \\ 
\midrule 
GroupDRO  & 74.56$\pm$0.58 & 75.50$\pm$0.59 & 74.90$\pm$0.38 & 76.95$\pm$0.20 & 73.06$\pm$0.66 & 75.56$\pm$1.36 \\ 
Correct-n-contrast & 74.37$\pm$0.27 & 75.52$\pm$1.19 & 74.25$\pm$0.15 & 77.60$\pm$0.10 & 73.22$\pm$0.40 & 76.23$\pm$0.98 \\ 
Naive MTL & 73.63$\pm$0.46 & 75.22$\pm$0.73 & 73.24$\pm$1.01 & 77.28$\pm$0.25 & 73.22$\pm$1.12 & 76.23$\pm$0.49  \\ 
HOA  & 74.67$\pm$0.32 & 75.22$\pm$1.48 & 73.69$\pm$0.86 & 77.49$\pm$0.25 & 73.88$\pm$0.66 & 76.80$\pm$0.65 \\ 
TAG    & 74.48$\pm$0.41 & 75.97$\pm$1.18 & 73.24$\pm$1.01 & 77.41$\pm$0.48 & 74.05$\pm$0.84 & 76.41$\pm$0.50 \\ 
TAWT       & 73.53$\pm$0.44 & 75.14$\pm$1.39 & 73.51$\pm$1.38	& 76.47$\pm$1.31 & 72.89$\pm$0.81 & 76.59$\pm$0.97 \\ 
Exhaustive search ($\alpha \le 2$)  & 75.10$\pm$0.37 & \textbf{77.03$\pm$0.76} & 73.60$\pm$1.02 & 77.40$\pm$0.24 & 73.21$\pm$1.10 & 77.16$\pm$0.21 \\
\midrule
\textbf{Alg. \ref{alg:task_modeling} (Ours)} & \textbf{75.47$\pm$0.73} & {76.96$\pm$0.69} & \textbf{75.62$\pm$0.11} 
 & \textbf{78.17$\pm$0.36}  & \textbf{75.21$\pm$0.52} & \textbf{77.62$\pm$0.34} \\\bottomrule 
\end{tabular}
\end{footnotesize}
\end{table*}

\subsection{Techniques to Accelerate Surrogate Model Training}\label{sec_speedup}

\revision{
Lastly, we show that we can further reduce the computational cost of our approach by applying two techniques. 
We aim to achieve comparable results to the ones shown in Table \ref{tab_acc_res_weakspervision}, but we will speed up the computation of $f(S_1), f(S_2), \dots, f(S_n)$ using the following two simple techniques: 
\begin{itemize}
\item First, we can reduce the size of the training set for computing $f$ by downsampling the training data from each task by a fixed proportion. 
\item Second, we can reduce the number of iterations for training each MTL model by early stopping the training procedure.
\end{itemize}
}

To illustrate the benefit of these two techniques, we apply them to two weak supervision datasets. The results are shown in Table \ref{table_speed_up}. 
We find that by downsampling \textbf{40\%} of the training data and early stopping at \textbf{20\%} of the training epochs, we can achieve comparable performance to fully training MTL models.
In particular, the accuracy difference is within 0.5\% for both datasets. 
However, we manage to reduce the training time for computing $f(S_1), f(S_2), \dots, f(S_n)$ by \textbf{12$\times$} times.

We also report the running time for all the baselines on these two datasets. 
We notice that the running time of our approach is comparable to MTL optimization methods after adding early stopping and downsampling to reduce the training time.
Our approach is slightly slower than weak supervision methods that directly aggregate the weak labels while achieving 5\% better performance on average.
Overall, our approach is comparable to the baseline optimization methods regarding efficiency.

\subsection{Ablation Studies}\label{sec_validate}

\textbf{Benefit of modeling higher-order transfers.} 
We validate the benefit of modeling higher-order task transfers over approaches that only precompute first-order or second-order task affinities.
First, compared with approaches that compute first-order task affinities, our approach improves the accuracy by \textbf{3.0\%}, as is clear from Tables \ref{tab_acc_res_weakspervision} and \ref{tab:acc_res_worst_acc}.
Second, we precompute the MTL performance for every combination of two source tasks.
We run an exhaustive search over $k(k-1)/2$ combinations to find the best combination for MTL.
We test on six target tasks with $k = 50$, which requires training $1,225$ MTL models with two source tasks and one target task each time.
Our selection procedure consistently outperforms the best two-task subsets by \textbf{1.21\%} absolute accuracy.
This is shown in the last two lines in Table \ref{tab:acc_res_worst_acc}.

\smallskip
\noindent\textbf{Sensitivity of model parameters.}
We highlight three parameters that require careful tuning: the subset size $\alpha$, the number of samples $n$, and the loss function $\ell$.
{We vary $\alpha$ for each dataset between $\{3, 5, 10, 15\}$ via cross-validation, on a holdout set of $100$ subsets. We pick $n$ in $\{50, 200, 400, 800\}$ according to the number of tasks $k$.
Besides, we find that choosing $\ell$ as the classification margin function performs the best in practice.}

\revision{The threshold $\gamma$ is usually set as $0.3$ or $0.4$ for weak supervision datasets, which selects most of the source tasks on average except the highly noisy labels. For instance, on the Semeval dataset with 164 source tasks, our approach selected 160, while $\alpha$ is 15.
For the NLP and multi-group learning tasks, $\gamma$ is usually set as $-0.5$. This usually selects 3 or 4 source tasks, while $\alpha$ is 5. Thus, there are only a few helpful source tasks for a particular target task.}

Lastly, the selected tasks remain the same when using multiple random seeds to train the surrogate model.
For details, see Appendix \ref{sec_ablate}.

\begin{table}[t!]
\centering
\caption{\revision{Speeding up our approach by training models on sampled subsets of tasks with 20\% training epochs (early stopping) and 40\% training data (downsampling). With these two speed-up techniques, we can speed up the computation of $f(S_1), f(S_2), \dots, f(S_n)$, while achieving comparable performance compared to fully computing these scores.}}
\label{table_speed_up}
\begin{small}
\begin{tabular}{@{}lcc@{}}
\toprule
Dataset (Metrics)  & CDR (Hours / F1) & Chemprot (Hours / Acc.) \\
\midrule 
Naive MTL & 1.99 / 58.20$\pm$0.55	& 1.89 / 53.43$\pm$0.53 \\
Majority voting & 2.00 / 58.89$\pm$0.50 & 1.91 / 57.32$\pm$0.98 \\
Probabilistic modeling   & 2.00 / 58.48$\pm$0.73 & 1.91 / 57.00$\pm$1.20  \\ 
MetaL  & 2.00 / 58.48$\pm$0.90 & 1.91 / 56.17$\pm$0.66 \\
TAWT & 2.30 / 59.85$\pm$0.30 & 2.02 / 53.76$\pm$2.96 \\
Auto-$\lambda$ & 3.46 / {59.07$\pm$0.05} & 3.31 / {52.50$\pm$1.28}\\
\midrule
Alg. \ref{alg:task_modeling} w/o early stopping and downsampling &              {38.34 / 61.22$\pm$0.39}	& {31.14 / 57.54$\pm$0.55}	  \\
Alg. \ref{alg:task_modeling} w/ early stopping and downsampling &    
2.89 / 60.77$\pm$0.05 & 3.76 / 57.06$\pm$0.84 \\
\bottomrule
\end{tabular}
\end{small}
\end{table}
\section{Related Work}\label{sec_related}

There is a vast body of work on multi-task learning from various fields.
A recurring theme for multitask learning research is inspired by a desire to imitate human intelligence as we continue to learn new information and extrapolate the learned information to new tasks and domains \cite{thrun1998learning}.
In the early literature, many studies focus on MTL with linear and kernel-based models.
A common approach is to set up separate parameters for each task while adding explicit regularization to the combined parameters \cite{evgeniou2004regularized,argyriou2007spectral,argyriou2008convex}.
For linear models, this approach can be related to low-rank matrix approximation \cite{AZ05}.
Inspired by the development of deep learning, recent works focus on MTL with deep neural networks \cite{yang2016deep}.
More broadly, see several recent surveys \cite{zhang2021survey,jiang2022transferability} for more comprehensive references.
Within this vast literature, the contribution of our work is in the identification of negative transfers and the design of subset selection methods.
Below, we discuss several relevant topics in detail.

\smallskip
\noindent\textbf{Understanding Black-box Predictions.}
\revision{
Surrogate modeling is a classic
technique for studying black-box functions \cite{sacks1989design,ong2003evolutionary}, which we use as a proxy to study task relatedness.
Our approach builds on the recent work of datamodels \cite{ilyas2022datamodels}.
However, there are two major differences between our work and their work.
First, we apply the idea of surrogate models to multitask learning, whereas their work focuses on the single-task supervised learning setting.
Second, besides empirical demonstrations, we have also conducted a theoretical analysis of our approach to multi-task learning. %
Our findings reinforce the result of \citet{ilyas2022datamodels} that the performances of deep neural networks can be extrapolated efficiently and accurately.
Recent work has sought to explain why datamodels can perform well using harmonic analysis \cite{saunshi2022understanding}.
It would be interesting to see if their techniques can be used to explain the empirical findings of our work in the context of MTL.
More broadly, there is a line of work on developing techniques to understand the influence of data in black-box models through influence functions. See  \citet{koh2017understanding,yeh2018representer} for further references.
}

\smallskip
\noindent\textbf{Formal Notions of Task-relatedness.} There is a rich discussion about formulating notions of task-relatedness in the literature \cite{ben2003exploiting}.
\citet{ben2010theory} introduces a discrepancy notion called $\cH$-divergence, which leads to a generalization bound for minimizing the empirical risk of naive MTL.
Transfer exponents are another measure of discrepancy between two distributions \cite{hanneke2019value}.
Geometric distance measures for linear data models have also been considered in few-shot learning \cite{du2020few} and meta-learning \cite{kong2020meta,saunshi2021representation}.

Note that none of these task-relatedness measures can be measured on deep neural networks due to the complexity of these models. One heuristic solution is to measure the cosine similarity between the gradients of each task's loss functions during training \cite{yu2020gradient,dery2021auxiliary,chen2021weighted}.
Another solution is to measure the similarity of the predicted probabilities between tasks \cite{nguyen2020leep}.
This leads to a noisy estimate of task-relatedness, which is best for capturing first-order transfers.
\citet{standley2020tasks} combines domain knowledge from visual intelligence to build a task relation taxonomy for 26 tasks.
Compared with their approach, our approach is more generic, applies to MTL settings with little to no domain knowledge, and efficiently captures higher-order transfer in a principled framework.
Rather than defining an explicit relatedness measure, our work uses surrogate models to measure task-relatedness.
This perspective circumvents the design of explicit task-relatedness measures for deep neural networks but is still useful for predicting transfers and for optimizing the performance of MTL.

\smallskip
\textbf{Optimization Methods for Multi-Task Learning.} An empirical motivation for this paper stems from recent work using weak supervision for training deep models \cite{ratner2016data}.
We build on a multi-task weak supervision approach \cite{ratner2019training} while adding new capability to deal with conflicts between labeling functions in the end model. 
\revision{This problem has also been studied in the rich literature about learning from noisy labels \cite{liu2015classification}.
For example, \citet{xia2019anchor} and \citet{xia2020part} propose to estimate transition matrices for multi-class prediction and use statistically-consistent weighting to integrate multiple noisy labels.
Complementary to these works, we fit a surrogate model to approximate multitask learning performances and use the surrogate model to predict the performance of unseen task combinations.}

\revision{
Our approach selects source tasks for learning a target task, which has been studied in several recent works using optimization methods \cite{autosem,chen2021weighted}. 
Recent work \cite{liu2022auto} optimizes a weighted combination of per-task loss functions and jointly updates task-specific weights by the gradients of per-task losses during training. 
By contrast, our approach focuses on subset selection. Besides, our approach can separate tasks with more noisy labels when source tasks have disparate labeling precision.
Our setting is also related to the task grouping problem \cite{kumar2012learning}, which aims to assign tasks into several groups, with each group of tasks learned in a separate MTL model. Unlike this problem, we select a subset of source tasks for a particular target task. 
}

\revision{There are also works that apply low-rank tensor factorization to the parameters of multiple linear regression tasks \cite{wimalawarne2014multitask}. Along this line of research, several recent works apply low-rank regularization methods with a block-diagonal structure on the model parameters \cite{nie2018calibrated,yang2020task}.
\citet{yang2016deep} revisit the idea of tensor factorization in the context of deep neural networks. 
\citet{liu2016algorithm} provide generalization bounds for multi-task learning under a low-rank structural condition on all the tasks.
Their results shed light on when MTL would be better than STL.
}

Lastly, we note that task relations are characteristically different between different benchmarks due to the nature of the data.
This paper focuses on developing a methodology for predicting MTL performances using rigorous theoretical and empirical arguments. 
Our extensive experiments demonstrate the usefulness of the methodology.
It would be interesting to apply our methodology to large-scale benchmarks beyond what we have studied \cite{zamir2018taskonomy,aribandi2021ext5}.
Besides, it would be interesting to see if our approach can be applied to other related settings such as federated learning \cite{wang2020federated} and multitask reinforcement learning \cite{wang2022thompson}, where the problem of identifying negative transfers also arises.
\revision{Lastly, although our work focuses on subset selection for multitask learning at the task level, it would be interesting to see if similar approaches could be applied at the feature level.}

\section{Conclusion}\label{sec_conclude}

This paper studied how to efficiently predict negative transfers from multiple source tasks to one target task. 
The main contribution is the design and analysis of surrogate models for predicting multi-task learning performances.
Both theoretical and empirical results show that our approach is efficient, accurate, and advances over prior optimization methods for multi-task learning.

Our work opens up many interesting questions for future work.
Although we demonstrated the empirical strength of linear models for MTL, a rigorous explanation is lacking; Can recent analytic tools for understanding datamodels \cite{saunshi2022understanding} be used to gain further insight?
Can more advanced sampling techniques, such as adaptive sampling, help speed up the training of surrogate models, which might enable the training of more powerful models?
\revision{Lastly, our experiments show that the validation set size of the target task does not need to be very large for the approach to perform well. This is currently not explained by our Rademacher complexity-based bound. It is possible that with a tighter generalization analysis via data-dependent bounds, one might get a result that captures few-shot learning scenarios.}
This would be an interesting question for future work.
In a follow-up paper \cite{li2023boosting}, we apply ideas from this paper to multitask learning on graph-structured data.
More broadly, understanding task relationships in multitask learning is a complex and challenging research question. We hope our work inspires more principled studies in this direction.

\section*{Acknowledgment}

Thanks to Andrew Ilyas, Simon Du, Shuxiao Chen, Nikunj Saunshi, Chicheng Zhang, and David Bau for helpful discussions at various stages of this work.
Thanks to the anonymous referees and the action editor for providing constructive feedback on our work.
D. L. acknowledges financial support from a seed grant and the startup fund from the Khoury College of Computer Sciences, Northeastern University.

\begin{refcontext}[sorting=nyt]
	\printbibliography
\end{refcontext}

\appendix
\section{Complete Proofs}\label{sec_proof}

\subsection{Proof of Lemma \ref{lemma_vec}}\label{proof_lemma_1st}

In the first part of the proof, we prove the convergence from $\hat\theta$ to $\bar\theta$ by dealing with the randomness of $S_1, S_2, \dots, S_n$.
Recall that $\cU$ is the uniform distribution over subsets of $\set{1,2,\dots,k}$ with size $\alpha$.
Let $\abs{\cU} = \binom{k}{\alpha}$ denote the number of subsets from $\cU$.

\begin{proof}[Proof of Lemma \ref{lemma_vec}]
    Recall the definitions of $\hat\theta$ and $\bar\theta$ from Section \ref{sec_convergence_guarantees}:
    \begin{align*}
        \hat\theta = \left( \frac{\cI_n^{\top} \cI_n} {n} \right)^{-1} \frac{v}{n}
        \quad \text{ and } \quad
        \bar\theta = \left( \frac{\pmb{\cI}^{\top} \pmb{\cI}}{\abs{\cU}} \right)^{-1} \frac{ \pmb{\cI}^{\top} \pmb{f} } {\abs{\cU}},
    \end{align*}
    where $\abs{\cU}$ denotes the size of distribution $\cU$'s support set.
    We will use the triangle inequality to separate the error between $\hat\theta$ and $\bar\theta$ into two parts:
    \begin{align}
        \bignorm{\hat\theta - \bar\theta}
        = & \bignorm{ \left( \left( \frac {\cI^{\top}_n \cI_n} {n}  \right)^{-1} - \left( \frac {\pmb{\cI}^{\top} \pmb{\cI}} {\abs{\cU}}  \right)^{-1} \right) \frac{v}{n} + \left(\frac{\pmb{\cI}^{\top} \pmb{\cI}}{\abs{\cU}} \right)^{-1} \Big(\frac {v} n - \frac {\pmb{\cI}^{\top} {\pmb{f}}} {\abs{\cU}}\Big)} \nonumber \\
        \le & \bignorm{\left(\frac{{\cI}^{\top}_n {\cI}_n} {n} \right)^{-1} - \left(\frac{\pmb{\cI}^{\top} \pmb{\cI}} {\abs{\cU}} \right)^{-1}}_2 \cdot \bignorm{\frac {v} n} \label{eq_dev_err1} \\
            &+ \bignorm{\left( \frac{\pmb{\cI}^{\top} \pmb{\cI}} {\abs{\cU}}  \right)^{-1}}_2 \cdot \bignorm{\frac{v} n - \frac {\pmb{\cI}^{\top} {\pmb{f}}} {\abs{\cU}}}, \label{eq_dev_err2}
    \end{align}
    where $\norm{\cdot}_2$ denotes the spectral norm (or the largest singular value) of a matrix.
    We compare  $\frac{v}{n}$ and $\frac{\pmb{\cI}^{\top}{\pmb{f}}}{\abs{\cU}}$.
    Recall that both vectors have $k$ coordinates, each corresponding to one task.
    For any task $i = 1, \dots, k$, let $\cE_i$ denote the difference between the $i$-th coordinate of $\frac{v}{n}$ and $\frac{\pmb{\cI}^{\top} {\pmb{f}}}{\abs{\cU}}$:
    \begin{align}
        \cE_i = \frac{1}{n} \sum_{1 \le j \le n:~i \in S_j} f(S_j) - \frac{1}{\abs{\cU}} \sum_{T \in \cU:~i \in T } f(T).
        \label{eq_Ei}
    \end{align}
    Notice that the sampling of $S_1, S_2, \dots, S_n$ is independent of the randomness in $f$.
    Therefore, we have that the expectation of $\cE_i$ is zero:
    \[ \ex{\cE_i} = 0, \text{ for any $i = 1,2,\dots, k$}. \]
    Next, we apply Chebyshev's inequality to analyze the deviation of $\cE_i$ from its expectation.
    We consider the variance of $\cE_i$, which is equal to the expectation of $\cE_i^2$ since the mean of $\cE_i$ is zero:
    {\begin{align}
        \ex{\cE_i^2}
        &= \ex{\left(\frac 1 n \sum_{1 \le j\le n:~ i\in S_j} f(S_j) - \frac 1 {\abs{\cU}} \sum_{T\in \cU:~ i \in T} f(T) \right)^2} \nonumber \\
        &= \ex{\frac 1 {n^2} \left(\sum_{1\le j\le n: i\in S_j} f(S_j)\right)^2
        - \frac 2 {n  \abs{\cU}} \sum_{1\le j \le n:~ i\in S_j} f(S_j) \sum_{T\in \cU:~ i\in T} f(T)
        + \frac 1 {\abs{\cU}^2} \Big(\sum_{T\in \cU:~ i\in T} f(T) \Big)^2 } \label{eq_concen_stab}
    \end{align}}%
    Notice that for any $T\in \cU$ such that $i \in T$, the probability that $T$ is sampled in the training dataset of size $n$ is equal to
    \[ \frac{\binom{\abs{\cU} - 1}{n - 1}}{\binom{\abs{\cU}}{n}} = \frac{n}{\abs{\cU}}. \]
    For any two subsets $T \neq T'$ that are both from $\cU$ such that $i \in T$ and $i \in T'$, the probability that $T$ and $T'$ are both sampled in the training set (of size $n$) is equal to
    \[ \frac{\binom{\abs{\cU} - 1}{n-1}  }{\binom{\abs{\cU}}{n} } \cdot \frac{\binom{\abs{\cU} - 1}{n-1}  }{\binom{\abs{\cU}}{n} }
    = \frac{n^2}{\abs{\cU}^2 }. \]
    Thus, by taking the expectation over the randomness of the sampled subsets in equation \eqref{eq_concen_stab} conditional on $f$, we can cancel out the cross terms for every pair of two tasks $i \neq i'$, leaving only the squared terms as:
    \begin{align*}
        \ex{\cE_i^2} = \ex{\left(\frac 1 {n^2} \frac {n} {\abs{\cU}} - \frac{2}{n\abs{\cU}} \frac {n}{\abs{\cU}} + \frac {1} {\abs{\cU}^2}\right) \sum_{T\in\cU:~ i\in T} \big(f(T) \big)^2}
        \le \frac {C^2} {n} \cdot \frac{\bigabs{T\in\cU:~i\in T}} {\abs{\cU}} \le \frac{C^2}{n},
    \end{align*}
    since the value of  $f$ is bounded from above by an absolute constant $C$.
    Therefore,
    \begin{align*}
        \ex{\sum_{i=1}^k \cE_i^2} \le \frac{C^2 k} {n}.
    \end{align*}
    By Markov's inequality, for any $a > 0$,
    \begin{align*}
        \Pr\left[ {\sum_{i=1}^k \cE_i^2} \ge {\frac {a^2 k } { n}} \right] \le \frac {C^2}{a^2}.
    \end{align*}
    Therefore, with probability at least $1 - \delta$, for any $\delta > 0$, conditional on the randomness of $f$, we have that
    \begin{align}
        \bignorm{\frac {v} n - \frac {\pmb{\cI}^{\top}{\pmb{f}}}{\abs{\cU}}} \le C  \sqrt{\frac{ k} {\delta n }}. \label{eq_err_vec}
    \end{align}
    Next, we use random matrix concentration results to analyze the difference between the indicator matrix of the sampled subsets and the indicator matrix of all subsets in $\cU$.
    Denote by \[ E = \frac {\cI_n^{\top} \cI_n} {n} - \frac{ \pmb{\cI}^{\top} \pmb{\cI}} {\abs{\cU}}
    ~~\text{ and }~~ A = \frac{\pmb{\cI}^{\top} \pmb{\cI}} {\abs{\cU}}. \]
    By the Sherman-Morrison formula calculating matrix inversions, we get
    \begin{align}
         \bignorms{\Big( \frac{\cI_n^{\top} \cI_n}{n}  \Big)^{-1} - \Big( \frac{\pmb{\cI}^{\top} \pmb{\cI} }{\abs{\cU}}  \Big)^{-1}}
        =& \bignorms{(E + A)^{-1} - A^{-1}} \nonumber \\
        =& \bignorms{A^{-1} \Big( A E^{-1} + \id_{k\times k}\Big)^{-1}} \nonumber \\
        =& \bignorms{A^{-1} E \Big( A + E \Big)^{-1}} \nonumber \\
        \le& \big(\lambda_{\min}(A) \big)^{-1} \cdot \norm{E}_{2} \cdot \big(\lambda_{\min}(A + E) \big)^{-1} \nonumber \\
        \le& \frac {\norm{E}_2} {\lambda_{\min}(A) (\lambda_{\min}(A) - \norm{E}_2)}. \label{eq_err_1}
    \end{align}
    We now use the matrix Bernstein inequality (cf. Theorem 6.1.1 in Tropp (2015)) to deal with the spectral norm of $E$.
    Let
    \[ X_i =  {\mathbbm{1}_{S_i} \mathbbm{1}_{S_i}^{\top}} - \frac { \pmb{\cI}^{\top} \pmb{\cI}} {\abs{\cU}}, \text{ for any }~ i = 1, \dots, n. \]
    In expectation over $\cU$, we know that $\ex{X_i} = 0$, for any $i = 1,\dots, n$.
    Additionally, $\norm{X_i}_2 \le 2\alpha$, since it is a linear combination of indicator vectors with $\alpha$ entries of ones in each indicator vector.
    Therefore, for all $t \ge 0$,
    \begin{align*}
        \Pr\left[ \norm{E}_2 \ge t \right]
        = \Pr\left[ \bignorm{\sum_{i=1}^n X_i}_2 \ge nt \right]
        \le 2k \cdot \exp\left( - \frac {(nt)^2 / 2} {(2\alpha)^2 n + (2\alpha) nt / 3}\right).
    \end{align*}
    With some standard calculations,
    this implies that for any $\delta \ge 0$, with probability at least $1 - \delta$,
    \begin{align}
        \norm{E}_2 \le \frac {4\alpha \cdot \log\big({2k} {\delta}^{-1}\big)} {\sqrt n}. \label{eq_err_E}
    \end{align}
    By applying equation \eqref{eq_err_vec} into equation \eqref{eq_dev_err1} and equation \eqref{eq_err_E} into equation \eqref{eq_dev_err2}, we have shown that with probability at least $1 - 2\delta$, for any $\delta \ge 0$,
    \begin{align}
        \bignorm{\hat\theta - \bar\theta}
        \le \bignorm{\frac {v} n}_{2} \cdot \frac{4\alpha  \cdot \log\Big( {2k} {\delta}^{-1}\Big)} {\sqrt n}
        + \frac 1 {\big(\lambda_{\min}(A) \big)^2 \big(\lambda_{\min}(A) - \norm{E}_2\big)} \cdot C  \sqrt{\frac { k} {\delta n}}.
        \label{eq_err_2}
    \end{align}
    Lastly, we examine the norm of $\frac {v} n$. Let $z_i$ be the number of subsets $S_j$ among $1 \le j\le n$ such that $i \in S_j$, for any $i = 1,\dots, n$.
    Recall that the value of $f$ is bounded from above by an absolute constant $C$. Thus, based on the definition of $v$ from equation \eqref{eq_theta_j}, we have:
    \begin{align}
        \bignorm{\frac {v} n } \le \frac 1 n \sqrt{C^2  \sum_{i=1}^k z_i^2}
        \le \frac {C} n \left(\sum_{i=1}^k z_i \right) = C \alpha, \label{eq_vn}
    \end{align}
    since the size of each subset is strictly equal to $\alpha$.
    
    Regarding the minimum eigenvalue of $A$, notice that the diagonal entry of $\frac { \pmb{\cI}^{\top} \pmb{\cI} } {\abs{\cU}}$ is equal to
    $\binom{k - 1} {\alpha - 1}$.
    The off-diagonal entries of this matrix are equal to $\binom{k - 2}{\alpha - 2}$.
    Thus, based on standard algebra, one can prove that
    \begin{align}
        \lambda_{\min}(A) \ge 1 - \frac {\binom{k-2}{\alpha  - 2}} {\binom{k-1} {\alpha  -1}} = 1 - \frac{\alpha  - 1} {k-1}
        \ge 1 - \frac {\alpha} k. \label{eq_lambda_min_A}
    \end{align}
    Applying equations \eqref{eq_vn} and \eqref{eq_lambda_min_A} back into equation \eqref{eq_err_2}, we conclude that with probability at least $1 - 2\delta$, $\hat\theta$ the estimation error between $\hat\theta$ and $\bar\theta$ grows at a rate of $\sqrt {\frac k n}$ as follows:
    \begin{align*}
        \bignorm{\hat\theta - \bar\theta}
        \le 4C\alpha^2 \log({2k}{\delta}^{-1}) \cdot \sqrt{\frac k n} + \Big(1 - \frac {\alpha} k\Big)^{-3} C \alpha  \cdot \sqrt{\frac k {\delta n}}.
    \end{align*}
    Thus, we have proved that equation \eqref{eq_lem1} holds, and the proof is complete.
\end{proof}

\subsection{Proof of Lemma \ref{lemma_rad}}

In the second part, we prove the convergence from $\bar\theta$ to $\theta^{\star}$ by dealing with the randomness of $f$.

\begin{proof}[Proof of Lemma \ref{lemma_rad}]
    Based on the definitions of $\bar\theta$ and $\theta^{\star}$, their difference can be written as follows:
    \begin{align}
        \bignorm{\bar{\theta} - {\theta}^{\star}}
        =& \bignorm{ \Big( {\pmb{\cI}^{\top} \pmb{\cI}} \Big)^{-1} {\pmb{\cI}^{\top}\big(\pmb{f} - \ex{\pmb{f}}\big)} } \label{eq_rad_err_2} \\
        \le& \bignorm{ \Big( \frac { {\pmb{\cI}^{\top} \pmb{\cI}} } {\abs{\cU}} \Big)^{-1} \frac{\pmb{\cI}^{\top}}{\sqrt {\abs{\cU}}} }_2 \cdot {\bignorm{ \frac{\pmb{f} - \ex{\pmb{f}}} {\sqrt{\abs{\cU}}} }}  \nonumber \\
        =& \sqrt{\Bigg(\frac{\pmb{\cI}^{\top} \pmb{\cI} }{\abs{\cU}}\Bigg)^{-1}} \cdot \frac{\bignorm{ {\pmb{f} - \ex{\pmb{f}}} }} {\sqrt {\abs{\cU}}} \label{eq_rad_err_1} \nonumber \\
        \le& \Big(1- \frac{\alpha} k\Big)^{-\frac 1 2} \cdot \frac{\bignorm{\pmb f - \ex{\pmb f}}} {\sqrt{\abs{\cU}}}. \tag{by equation \eqref{eq_lambda_min_A}}
    \end{align}
    For each subset $T \in \cU$, recall that $f(T)$ is the MTL outcome of combining the datasets of all tasks of $T$ with the main target task.
    We will apply a Rademacher complexity-based generalization bound to analyze the generalization error $f(T) - \ex{{f}(T)}$.
    Recall the Rademacher complexity of $\cF$ with $m$ samples from $\cD_t$ is defined in equation \eqref{eq_rad_def}.
    By \citet[Theorem 5]{bartlett2002rademacher}, with probability at least $1 - \delta$, we can get:
    \begin{align}
        f(T) &\le \ex{f(T)} + \frac{\cR_{m}(\cF)} {2}  + \sqrt{\frac {\log\big(1 / \delta\big)} {2 m}}. \label{eq_rad}
    \end{align}
    Similarly, one can get the result for the other directions of the error estimate.
    With a union bound over all subsets $T \in \cU$, with probability at least $1- \delta$, we get:
    \begin{align}
        {f}(T) \le \ex{f(T)} + \frac {\cR_{m}(\cF)} 2 + \sqrt{\frac{\alpha  \log \big(\frac k {\delta}\big)} {2m}}, \text{ for all } T \in \cU, \label{eq_union_bound}
    \end{align}
    since
    \[ \log\left(\frac {\binom{k}{\alpha } }{ \delta}\right) \le \alpha  \log \left(\frac k {\delta}\right). \]
    Let $z = \sqrt{\alpha  \log\big(k\delta^{-1}\big) / (2m)}$.
    Applying equation \eqref{eq_union_bound} back into equation \eqref{eq_rad_err_2}, we have shown
    \begin{align*}
        \bignorm{\bar{\theta} - {\theta}^{\star}}
        &\le \Big(1 - \frac{\alpha}{k}\Big)^{-\frac 1 2} \sqrt{\frac {1} {\abs{\cU}} \sum_{T\in\cU} \left(\frac {\cR_{m}(\cF)} 2  + z\right)^2} \\
        &= \Big(1 - \frac {\alpha} k\Big)^{-\frac 1 2} \left(\frac {\cR_{m}(\cF)} 2 + z\right).
    \end{align*}
    Thus, based on the condition that $\alpha \le k /2$, the proof of equation \eqref{eq_lem2} is complete.
\end{proof}

\begin{proof}[Proof of Theorem \ref{thm_converge}.]
Notice that equation \eqref{eq_converge_theta} follows by combining equation \eqref{eq_lem1} from Lemma \ref{lemma_vec} and equation \eqref{eq_lem2} from Lemma \ref{lemma_rad}, together with the condition that $\alpha \le 1/2$.
Thus, the proof of the theorem is finished.
\end{proof}

\textbf{Remark.} Our result depends on the Rademacher complexity of the function class. This complexity measure can be vacuous on real data for deep neural networks.
It would be interesting to incorporate data-dependent generalization bounds in the proof (e.g., \citet{li2021improved,ju2022robust,ju2022generalization}).

\subsection{Convergence of the Empirical Risk}\label{sec_proof_conv}

Based on the results from Lemma \ref{lemma_vec} and Lemma \ref{lemma_rad}, we can also prove the convergence of the loss values.
This is stated precisely in the following result.

\begin{corollary}[of Theorem \ref{thm_converge}]
    In the setting of Theorem \ref{thm_converge}, we have that
    \begin{align}
     \cL(\theta^{\star}) - \hat{\cL}_n\big(\hat\theta\big) 
    \lesssim\,  C \alpha  \cdot\cR_{m}(\cF)  
     + C\alpha^{1.5}\sqrt{\frac { { \log(\delta^{-1}k )}} { m}} 
    + C^2 \alpha^{3.5} \log\Big({}{}\frac{k}{\delta}\Big) \sqrt{\frac { k} { n}} +  C^2 \alpha^{2.5}\sqrt{\frac { {{}k}} {\delta n}}. \label{eq_converge_mse}
    \end{align}
\end{corollary}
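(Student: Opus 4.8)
The plan is to control the excess risk by the $\ell_2$ estimation error $\bignorm{\hat\theta-\theta^{\star}}$ that Theorem \ref{thm_converge} already bounds, at the price of one extra factor polynomial in $\alpha$ and linear in $C$. I would first use that $\theta^{\star}$ minimizes $\cL$, so $\cL(\theta^{\star})-\cL(\hat\theta)\le 0$, and then insert the \emph{deterministic} point $\theta^{\star}$:
\begin{align*}
\cL(\theta^{\star}) - \hat\cL_n(\hat\theta)
&\le \cL(\hat\theta) - \hat\cL_n(\hat\theta) \\
&= \underbrace{\big[\cL(\hat\theta)-\cL(\theta^{\star})\big]}_{(\mathrm A)}
 + \underbrace{\big[\cL(\theta^{\star})-\hat\cL_n(\theta^{\star})\big]}_{(\mathrm B)}
 + \underbrace{\big[\hat\cL_n(\theta^{\star})-\hat\cL_n(\hat\theta)\big]}_{(\mathrm C)}.
\end{align*}
Routing through $\theta^{\star}$ is what makes $(\mathrm B)$ tractable: it is the difference between $\cL(\theta^{\star})$ and an empirical average over $n$ bounded, i.i.d.\ (or sampled-without-replacement) terms $\big(f(S_i)-g_{\theta^{\star}}(S_i)\big)^2$, each with mean exactly $\cL(\theta^{\star})$, so Hoeffding (Hoeffding--Serfling in the without-replacement case) gives $\lvert(\mathrm B)\rvert\lesssim C^2\sqrt{\log(\delta^{-1})/n}$, a lower-order contribution.

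For $(\mathrm A)$ and $(\mathrm C)$, the key observation is that $\theta\mapsto\cL(\theta)$ and $\theta\mapsto\hat\cL_n(\theta)$ are both Lipschitz near $\theta^{\star}$ with a Lipschitz constant polynomial in $\alpha$ and linear in $C$ (of order $C\alpha$ once the $\ell_\infty$ bounds below are in place). Concretely, using $a^2-b^2=(a-b)(a+b)$ with $a-b=g_{\hat\theta}(S)-g_{\theta^{\star}}(S)=g_{\hat\theta-\theta^{\star}}(S)$, the Cauchy--Schwarz bound $\lvert g_v(S)\rvert\le\sqrt{\alpha}\,\norm{v}$, and the fact that $f$, $g_{\theta^{\star}}$ and $g_{\hat\theta}$ are bounded by an $\textup{O}(C)$ constant, one gets $\lvert(\mathrm A)\rvert+\lvert(\mathrm C)\rvert\lesssim C\alpha\,\bignorm{\hat\theta-\theta^{\star}}$. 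The $\textup{O}(C)$ bound on $g_{\theta^{\star}}$ and $g_{\hat\theta}$ requires an a priori $\ell_\infty$ bound on $\theta^{\star}$ and $\hat\theta$, which I would extract from $\bignorm{v/n}\le C\alpha$ in \eqref{eq_vn} together with the spectral control of $\cI_n^{\top}\cI_n/n$ and its expectation established in the proof of Lemma \ref{lemma_vec}. It is essential that this estimate be \emph{linear} in $\bignorm{\hat\theta-\theta^{\star}}$ and not quadratic, because $\bignorm{\hat\theta-\theta^{\star}}$ carries the Rademacher term $\cR_m(\cF)$ while \eqref{eq_converge_mse} is only linear in $\cR_m(\cF)$.

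Assembling the three terms gives $\cL(\theta^{\star})-\hat\cL_n(\hat\theta)\lesssim C\alpha\,\bignorm{\hat\theta-\theta^{\star}}+C^2\sqrt{\log(\delta^{-1})/n}$, into which I would substitute the bound of Theorem \ref{thm_converge}, or rather its general $1-\delta$ form obtained by combining Lemma \ref{lemma_vec} (which also supplies the $\sqrt{k/(\delta n)}$ term) and Lemma \ref{lemma_rad}. Distributing the factor $C\alpha$ over the three summands of that bound reproduces the four terms of \eqref{eq_converge_mse}, up to slack in the exponents of $\alpha$, with the Hoeffding term absorbed into $C^2\alpha^{2.5}\sqrt{k/(\delta n)}$. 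I expect the real difficulty to lie not in any single inequality but in pinning down the uniform $\ell_\infty$ control of $\hat\theta$, and hence the precise $\alpha$-dependence of the Lipschitz constant, so as to match the stated exponents; a route that sidesteps the explicit Lipschitz computation is to bound $\cL(\hat\theta)-\hat\cL_n(\hat\theta)$ by $\sup_{\norm{\theta-\theta^{\star}}\le r_n}\lvert\cL(\theta)-\hat\cL_n(\theta)\rvert$ for the radius $r_n$ from Theorem \ref{thm_converge} and then bound this supremum via an $\epsilon$-net over a $k$-dimensional Euclidean ball, which yields the same $\sqrt{k/n}$ rate.
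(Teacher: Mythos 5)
Your overall plan---reduce the excess risk to the estimation error $\bignorm{\hat\theta-\theta^{\star}}$ from Theorem \ref{thm_converge} plus a deviation term---is in the same spirit as the paper, but your treatment of term $(\mathrm B)$ contains a genuine gap. You claim the summands $\big(f(S_i)-g_{\theta^{\star}}(S_i)\big)^2$ are (i.i.d.\ or without-replacement) samples ``each with mean exactly $\cL(\theta^{\star})$,'' so that Hoeffding gives $\lvert(\mathrm B)\rvert\lesssim C^2\sqrt{\log(\delta^{-1})/n}$. This is false in the paper's setting: $\cL$ in equation \eqref{eq_tm_popu} also averages over the randomness of $f$ itself, i.e.\ over the size-$m$ evaluation set $\widetilde\cD_t$ (and training noise), and this randomness is \emph{shared} by all $f(S_1),\dots,f(S_n)$ since every subset is evaluated on the same validation data. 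Conditional on the realized $f$, the empirical average $\hat\cL_n(\theta^{\star})$ concentrates (over the draw of $S_1,\dots,S_n$) around $\exarg{T}{(f(T)-g_{\theta^{\star}}(T))^2\mid f}$, not around $\cL(\theta^{\star})$; the discrepancy is driven by the cross term $\tfrac2n\sum_i\big(f(S_i)-\ex{f(S_i)}\big)\big(\ex{f(S_i)}-g_{\theta^{\star}}(S_i)\big)$, which is an $m$-dependent bias of order roughly $C\alpha\big(\cR_m(\cF)+\sqrt{\alpha\log(k/\delta)/m}\big)$ and does not shrink with $n$. Handling it requires exactly the uniform bound $\sup_{T\in\cU}\lvert f(T)-\ex{f(T)}\rvert\le \cR_m(\cF)/2+\sqrt{\alpha\log(k/\delta)/(2m)}$ of equation \eqref{eq_union_bound} (Rademacher bound plus a union bound over all $\binom{k}{\alpha}$ subsets), which is precisely how the $C\alpha\,\cR_m(\cF)$ and $C\alpha^{1.5}\sqrt{\log(\delta^{-1}k)/m}$ terms of \eqref{eq_converge_mse} arise. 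With that fix, $(\mathrm B)$ contributes terms of the same order as the target bound and your decomposition can be salvaged; without it, the stated $n^{-1/2}$ bound on $(\mathrm B)$ is simply not available.

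A secondary caveat: your Lipschitz constant of order $C\alpha$ for $(\mathrm A)$ and $(\mathrm C)$ presupposes that $g_{\hat\theta}$ is $\textup{O}(C)$-bounded, but the norm control actually available (and used in the paper) is $\bignorm{\theta^{\star}}\lesssim C\sqrt{\alpha}$ and only $\bignorm{\hat\theta}\lesssim C\sqrt{k}$, so $\lvert g_{\hat\theta}(S)\rvert$ is a priori of order $C\sqrt{\alpha k}$; this is where extra powers of $\alpha$ (and the bookkeeping the paper does through $e_3,e_4$) enter, so you cannot take the clean $C\alpha\bignorm{\hat\theta-\theta^{\star}}$ estimate for granted. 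For comparison, the paper does not telescope through $\cL(\hat\theta)$ and $\hat\cL_n(\theta^{\star})$: it expands $\hat\cL_n(\hat\theta)$ around $\exarg{\hat f}{\hat f}$, bounds the difference of the quadratic forms at $\hat\theta$ versus $\theta^{\star}$ using Theorem \ref{thm_converge} and the spectral bound $\norms{\cI_n^{\top}\cI_n/n}\le\alpha$, compares the empirical and population squared-bias terms of $\theta^{\star}$ by Hoeffding (this is the only place a plain $n^{-1/2}$ concentration is used, and it is applied to the \emph{deterministic} quantity $\ex{\hat f}$, avoiding your issue), and absorbs all $f$-noise through the uniform bound $e_2$.
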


\begin{proof}
    To analyze the generalization error of $\hat\theta$, based on equation \eqref{eq_gS}, we can expand the loss term as
    \begin{align}
        \hat{\cL}_n(\hat\theta) &= \frac 1 n\bignorm{\cI_n \hat\theta - \hat f}^2 \nonumber \\
        &= \frac 1 n\bignorm{\cI_n \hat\theta - \exarg{\hat f}{\hat f} + \exarg{\hat f}{\hat f} - \hat f}^2 \nonumber \\
        &= \frac 1 n\bignorm{\cI_n\hat\theta - \exarg{\hat f}{\hat f}}^2 + \frac 2 n\inner{\cI_n \hat{\theta}_n - \exarg{\hat f}{\hat f}}{\exarg{\hat f}{\hat f} - \hat f}
        + \frac 1 n \bignorm{\exarg{\hat f}{\hat f} - \hat f}^2. \label{eq_thm_proof_1}
    \end{align}
    Based on Lemma \ref{lemma_vec}, the distance between $\hat\theta$ and $\theta^{\star}$ is at the order of $\textup{O}(n^{-1/2})$ with high probability.
    We will use this result to deal with the first term in equation \eqref{eq_thm_proof_1} as follows:
    \begin{align}
        & \frac 1 n \bignorm{\cI_n \hat{\theta}_n - \exarg{\hat f}{\hat f}}^2 - \frac 1 n \bignorm{\cI_n \theta^{\star} - \exarg{\hat f}{\hat f}}^2 \label{eq_thm_proof_2} \\
        =& \bigabs{ \frac 1 n\inner{ \cI_n^{\top} \cI_n}{\hat\theta (\hat\theta)^{\top} - \theta^{\star} (\theta^{\star})^{\top}} - \frac 2 n \inner{\exarg{\hat f}{\hat f}} {\hat\theta - \theta^{\star}}} \nonumber \\
        \le& \bignorm{\frac 1 n \cI_n^{\top} \cI_n}_2 \cdot \bignormFro{\theta^{\star} (\theta^{\star})^{\top} - \hat\theta (\hat\theta)^{\top}}
        + \frac 2 n \bignorm{ {\exarg{\hat f}{\hat f}}} \cdot \bignorm{\theta^{\star} - \hat\theta} \tag{by triangle inequality} \\
        \le& \alpha  \bignormFro{\theta^{\star} (\theta^{\star})^{\top} - \hat\theta (\hat\theta)^{\top}}
        + 2C \alpha  \cdot e_1, \nonumber
    \end{align}
    where $e_1$ denotes the right hand side of equation \eqref{eq_converge_theta} and $\bignormFro{X}$ denotes the Frobenius norm of a matrix.
    In the last step, the first part uses the fact that $\cI_n^{\top} \cI_n / n$ is the average of $n$ rank one matrix, each with spectral norm $\alpha$ since they have exactly $\alpha$ ones.
    The second part uses an argument similar to equation \eqref{eq_vn} and the result of equation \eqref{eq_converge_theta}.
    Next,
    \begin{align*}
        \bignormFro{\theta^{\star} (\theta^{\star})^{\top} - \hat{\theta}_n (\hat{\theta}_n)^{\top}}
        &= \bignormFro{\theta^{\star} (\theta^{\star} - \hat{\theta}_n)^{\top} + (\theta^{\star} - \hat{\theta}_n) (\hat{\theta}_n)^{\top}} \\
        &\le \bignormFro{\theta^{\star} (\theta^{\star} - \hat{\theta}_n)^{\top}}
        + \bignormFro{(\theta^{\star} - \hat{\theta}_n) (\hat\theta)^{\top}} \tag{by triangle inequality} \\
        &\le \Big(\bignorm{\theta^{\star}} + \bignorm{\hat\theta}\Big) e_1. \tag{by equation \eqref{eq_converge_theta}}
    \end{align*}
    We show that the norm of $\theta^{\star}$ and $\hat{\theta}_n$ are both bounded by a constant factor times $\sqrt k$.
    To see this,
    \begin{align*}
        \bignorm{\theta^{\star}}
        &= \bignorm{({\pmb{\cI}}^{\top} \pmb{\cI})^{-1} {\pmb{\cI}}^{\top} \ex{\pmb f}} \\
        &\le \bignorm{\Big(\frac { \pmb{\cI}^{\top} \pmb{\cI} } {\abs{\cU}}\Big)^{-1}}_2 \cdot \bignorm{\frac {\pmb{\cI}^{\top} \ex{\pmb f}} {\abs{\cU}}} \\
        &\le \Big(1 - \frac {\alpha}k\Big)^{-1} \cdot C \sqrt{\alpha} \tag{by equation \eqref{eq_lambda_min_A} and the condition that $f$ is bounded by $C$}
    \end{align*}
    Notice that the spectral norm of the difference between $\pmb{\cI}^{\top} \pmb{\cI} / \abs{\cU}$ and $\cI_n^{\top} \cI_n / n$ has been analyzed in equation \eqref{eq_err_E}.
    Thus, with similar steps as above, we can show that
    \begin{align*}
        \bignorm{\hat\theta}
        \le \left(\Big(1 - \frac {\alpha} k\Big)^{-1} + \frac {4\alpha  \log\big( {2k} {\delta}^{-1}\big)} {\sqrt n}\right) C\sqrt k.
    \end{align*}
    To wrap up our analysis above, we have shown that equation \eqref{eq_thm_proof_2} is at most
    \begin{align*}
        e_3 = \alpha  \left(2(1-\alpha/k)^{-1} + \frac{4\alpha \log\big( {2k} {\delta}^{-1}\big)} {\sqrt n}\right) C \sqrt{\alpha} \cdot e_1 + 2C \alpha  \cdot e_1. \label{eq_thm_proof_3}
    \end{align*}
    Next, we consider the second term in equation \eqref{eq_thm_proof_1}.
    Let $e_2 = \frac {\cR_{m}(\cF)} 2 + \sqrt{\frac{\alpha  \log ( k/ {\delta})} {2m}}$ be the deviation error indicated in equation \eqref{eq_union_bound}.
    Thus, every entry of $\hat f - \ex{\hat f}$ is at most $e_2$.
    Besides, each entry of $\cI_n \hat{\theta}_n - \ex{\hat f}$ is less than
    \[ \sqrt{\alpha} \norm{\hat{\theta}_n} + C, \]
    because $\bignorms{\cI_n} \le \sqrt{\alpha}$ and $f$ is bounded from above by $C$.
    Thus, the second term in equation \eqref{eq_thm_proof_1} is less than
    \begin{align*}
        e_4 = e_2 \left(\sqrt{\alpha} \cdot \Big(\Big(1 - \frac{\alpha}k \Big)^{-1} + \frac {4\alpha \log\big({2k} {\delta}^{-1}\big)} {\sqrt n}\Big) C \sqrt{\alpha} + C \right).
    \end{align*}
    For the population loss $\cL(\theta^{\star})$, notice that
    \begin{align}
        \cL(\theta^{\star}) &= \exarg{\pmb f}{\frac 1 {\abs{\cU}}\bignorm{\pmb{\cI} \theta^{\star} - \pmb{f}}^2} \nonumber \\
        &= \exarg{\pmb f}{\frac 1 {\abs{\cU}}\bignorm{\pmb{\cI}\theta^{\star} - \exarg{\pmb f}{\pmb f} + \exarg{\pmb f}{\pmb f} - \pmb f}^2} \nonumber \\
        &= \frac 1 {\abs{\cU}} \bignorm{\pmb{\cI}\theta^{\star} - \exarg{\pmb f}{\pmb f}}^2 + \frac 1 {\abs{\cU}} \left(\exarg{\pmb f}{\bignorm{\pmb f - \exarg{\pmb f}{\pmb f}}^2} \right) \label{eq_concen_err_1}
    \end{align}
    We know that each entry of $\pmb{\cI}{\theta^{\star}} -\ex{\pmb f}$ is at most
    $(1-\alpha/k)^{-1} \sqrt{\alpha} + C$.
    Thus, by Hoeffding's inequality, with probability at least $1 - \delta$, we have
    \begin{align}
        \bigabs{ \frac 1 n \bignorm{\cI_n \theta^{\star} - \exarg{\hat f}{\hat f}} - \frac 1 {\abs{\cU}} \bignorm{\pmb \cI \theta^{\star} - \exarg{\pmb f}{\pmb f}}}
        \le \Big((1 - \alpha/k)^{-1} \sqrt{\alpha} + C\Big)\sqrt{\frac {\log\big(\delta^{-1}\big)} {n}}. \label{eq_hoeff}
    \end{align}
    Lastly, we consider the third term in equation \eqref{eq_thm_proof_1}, compared with the second term in equation \eqref{eq_concen_err_1}.
    For every $T\in\cU$, let $e_T = f(T) - \ex{f(T)}$.
    By equation \eqref{eq_union_bound}, we know that $e_T$ is of order $O(m^{-1/2})$, for every $T\in\cU$.
    Therefore
    \begin{align}
        \bigabs{\frac 1 n \sum_{i=1}^n e_{S_i}^2}
        \le \Big(\frac {\cR_m(\cF)} 2 + \sqrt{\frac{\alpha \log(k/\delta)} {2m}}\Big)^2,
    \end{align}
    which is of order $O(m^{-1})$. Similarly, the same holds for the variance of $\pmb f$ in the second term of equation \eqref{eq_concen_err_1}.
    
    Comparing equations \eqref{eq_hoeff} and \eqref{eq_thm_proof_1}, we have shown that
    \begin{align*}
        \cL(\theta^{\star}) - \hat{\cL}_n(\hat\theta)
        &\le \left((1 - \alpha/k)^{-1} \sqrt{\alpha} + C + C^2\right) \sqrt{\frac {\log(\delta^{-1})} n} + C \cdot e_2 + e_3 + e_4 \\
        &\lesssim (C + C \alpha) \left(\cR_{m}(\cF) + \frac {\sqrt{\alpha \log(k \delta^{-1})}} {\sqrt {m}}  \right) + \frac {C^2 \alpha^{7/2} \log\big(2k\delta^{-1}\big) + 8 C^2 \alpha^{5/2} \delta^{-1/2} \sqrt k} {\sqrt n}.
    \end{align*}
    The above follows by incorporating the definitions of the error terms $e_2, e_3, e_4$.
    Thus, we have proved that equation \eqref{eq_converge_mse} holds.
    The proof is now finished.
\end{proof}

\subsection{Proof of Theorem \ref{thm_analysis}}\label{sec_proof_select}

Recall that $\cI_n\in\set{0,1}^{n\times k}$ is the indicator matrix corresponding to the task indices from the training dataset.
Given a set of tasks $S$ with size $\alpha$, denote their feature matrices and label vectors as $(X_{1}, Y_1)$, $(X_{2}, Y_{2})$, \dots, $(X_{\alpha}, Y_{\alpha})$.
With hard parameter sharing \cite{yang2020analysis}, we minimize
\begin{align}
    \ell(B) = \sum_{i = 1}^{\alpha} \bignorm{X_{i} B - Y_{i}}^2. \label{eq_hps}
\end{align}
The minimizer of $\ell(B)$, denoted as $\hat B$, is equal to the following
\begin{align*}
        \hat{B} = \left(\sum_{i=1}^{\alpha} X_{i}^{\top} X_{i}\right)^{-1}  \left(\sum_{i=1}^{\alpha} X_{i}^{\top} Y_{i}\right).
\end{align*}
For isotropic covariates, by matrix concentration results, the loss of using $B$ on the validation set of the target task is equal to
\begin{align*}
    f(S) = \bignorm{\hat B - \beta^{(t)}}^2 + \bigo{\sqrt{\frac p m}}.
\end{align*}

First, we state the proof of Lemma \ref{lemma_gap} from Section \ref{sec_select}.

\begin{proof}[Proof of Lemma \ref{lemma_gap}]
    We have that $Y_{i}= X_{i} \beta^{(i)} + \epsilon^{(i)}$, where $\epsilon^{(i)}$ is a random vector whose entries are sampled independently with mean $0$ and variance $\sigma^2$.
    We have
    \begin{align}
        f(S) = \bignorm{\left(\sum_{i=1}^{\alpha} X_{i}^{\top} X_{i}\right)^{-1} \sum_{i=1}^{\alpha} X_{i}^{\top} \epsilon^{(i)}}^2.
    \end{align}
    For a task $i$, we know that its coefficient is equal to the $i$-th entry of
    \[ \Big(\frac {\cI_n^{\top} \cI_n} n  \Big)^{-1} \frac {\cI_n^{\top} \hat f} n. \]
    Let $Z = \cI_n^{\top}\cI_n / n$.
    By equation \eqref{eq_cov}, for any $i \neq j$, we observe that
    \begin{align*}
			\bigabs{\frac {\hat\theta_i - \hat\theta_j} n - \frac{k}{\alpha} \cdot \frac {v_i - v_j} n}
			&= \bigabs{(e_i - e_j)^{\top} \big(Z^{-1} - \ex{Z}^{-1}\big) \frac {v} n} \\
			&\le \norm{e_i - e_j} \cdot \bignorm{Z^{-1} - \ex{Z}^{-1}}_2 \cdot \bignorm{\frac {v} n} \\
			&\le 2C \alpha   \cdot \bignorm{Z^{-1} - \ex{Z}^{-1}}_2 \tag{by equation \eqref{eq_vn}} \\
			&\le \frac {4\alpha  \log\big( {2k} {\delta}^{-1}\big)} {\sqrt n} \frac 2 {(1- \alpha/k)^2}. \tag{by equations \eqref{eq_err_1}, \eqref{eq_err_E}, \eqref{eq_lambda_min_A}}
	\end{align*}
	The last step follows by applying equations \eqref{eq_err_E} and \eqref{eq_lambda_min_A} into equation \eqref{eq_err_1}.
	Thus, we have finished the proof of equation \eqref{eq_lem_gap}.
\end{proof}

Second, we show that provided $n$, and $d$ are sufficiently large, a separation exists in the coefficients of $v$ between good and bad tasks.

\begin{proof}[Proof of Theorem \ref{thm_analysis}]
    We calculate $v_i / n$ for all $i = 1,\dots,k$ and compare its value between a good task and a bad task.
    We first compare their expectations over the randomly sampled subsets.
    By equation \eqref{eq_err_vec}, we get
    \begin{align*}
        &\bigabs{\frac{v_i} n - \frac 1 {\abs{\cU}} \sum_{T\in\cU:~ i \in T} f(T)}
        \le \frac{C k \delta^{-1/2}} {\sqrt n}, \text{ and }\\
        &\bigabs{\frac{v_j} n - \frac 1 {\abs{\cU}} \sum_{T\in\cU:~ j \in T} f(T)}
        \le \frac{C k \delta^{-1/2}} {\sqrt n}.
    \end{align*}
    Therefore, by applying the triangle inequality with the above two results, we get
    \begin{align}
        \bigabs{\frac{v_i - v_j} n
        - \frac{\sum_{T\in\cU: i \in T} f(T) - \sum_{T\in\cU: j \in T} f(T)} {\abs{\cU}}}
        \le \frac {2C k \delta^{-1/2}} {\sqrt n}. \label{eq_cov1}
    \end{align}
    To deal with equation \eqref{eq_cov1}, we apply a union bound over the sample covariance matrix of every subset $T$ in $\cU$ to show that they are close to their expectation.
    By Gaussian covariance estimation results (e.g., \citet[equation (6.12)]{wainwright2019high}), for a fixed $T\in\cU$ such that $T = \set{i_1, i_2, \dots, i_{\alpha}}$, we get
    \begin{align}
        \bigabs{\frac 1 {\alpha d} \sum_{j\in T} X_{j}^{\top} X_{j} - \id_{p\times p}}
        \le 2 \sqrt{\frac p {\alpha d} } + 2\epsilon + \left( \sqrt{\frac p {\alpha d} } + \epsilon \right)^2, \label{eq_task_prob}
    \end{align}
    with probability at least $1 - 2\exp\big(- \frac 1 2{\alpha d \epsilon^2} \big)$.
    With a union bound over all $T\in\cU$, we have that the above holds with probability at least $1- \delta$ for all $T\in \cU$, for $\epsilon$ that is equal to
    \[ \epsilon = \sqrt {\frac {2\alpha k \log(2k  \delta^{-1}) }  {\alpha d}}. \]
    
    Let $\varepsilon_1$ denote the error term from equation \eqref{eq_task_prob}, by inserting the value of $\epsilon$:
    \[ \varepsilon_1 = 2 \sqrt{\frac p {\alpha d}} + 2\sqrt{\frac {2\alpha \log(2k \delta^{-1})} {\alpha d}} + \left( \sqrt{\frac p {\alpha d} }+ \epsilon\right)^2. \]
    Let \[ u_T = \frac 1 {\alpha d} \sum_{j\in T} X_{j}^{\top} \epsilon^{(j)}, \text{ for any $T \in \cU$}. \]
    One can verify that
    \begin{align*}
        \bigabs{f(T) - \bignorm{u_T}^2}
        \le \big({(1 - \varepsilon_1)^{-2} - 1}\big) \bignorm{u_T}^2
        \le 3 \varepsilon_1 \bignorm{u_T}^2.
    \end{align*}
    Notice that
    \begin{align*}
        \ex{\norm{u_T}^2} = \ex{\frac 1 {(\alpha d)^2} \bigtr{\sum_{j\in T} X_j^{\top} \varepsilon^{(j)} (\varepsilon^{(j)})^{\top} X_j}}.
    \end{align*}
    If $j$ is a good task, then the expectation over $\varepsilon^{(j)}$ is equal to $a^2\id$ by the assumption of Theorem \ref{thm_analysis}.
    If $j$ is a bad task, on the other hand, then the expectation over $\varepsilon^{(t)}$ is equal to $b^2\id$.
    
    Let $s(T)$ denote the number of good tasks in $T$, for any $T \subseteq \set{1, 2, \dots, k}$. Thus,
    \begin{align}
        \ex{\norm{u_T}^2} = \frac {p \Big(a^2 s(T) + b^2 \big(\alpha - s(T)\big) \Big)} {\alpha^2 d}. \label{eq_sT}
    \end{align}
    
    To argue about the deviation error of $\norm{u_T}^2$, we use the following two estimates (see, e.g., \citet{vershynin2011spectral}), which holds with high probability:
    \begin{align*}
        &\bigabs{\big(\varepsilon^{(j)}\big)^{\top} X_j X_j^{\top} \varepsilon^{(j)} - \ex{\big(\varepsilon^{(j)}\big)^{\top} X_j X_j^{\top} \varepsilon^{(j)}} }\lesssim p \sqrt d a^2, \text{ for any } j = 1, \dots, k; \\
        &\bigabs{\big(\varepsilon^{(i)}\big)^{\top} X_i X_j^{\top} \varepsilon^{(j)}} \lesssim p \sqrt d a^2, \text{ for any } 1\le i < j \le k.
    \end{align*}
    Therefore, we get that for any $T \in \cU$,
    \begin{align}
        \bigabs{\bignorm{u_T}^2 - \ex{\bignorm{u_T}^2}}
        \le \frac {p \sqrt d a^2} {d^2}. \label{eq_uT}
    \end{align}
    
    To finish the proof, consider a good task $i$ versus a bad task $j$.
    We need the gap in the expectation term between the good/bad tasks to dominate the standard deviation from the error terms.
    The gap in the expectations is based on equation \eqref{eq_sT}.
    The standard deviation terms are upper bounded by the sum of equations \eqref{eq_cov1} and \eqref{eq_uT}.
    
    Thus, provided that
    \begin{align}
        (1 - 3\varepsilon_1) \frac {p (a^2 - b^2)} {\alpha^2 d}
        \ge (1 + 3\varepsilon_1) \frac {p \sqrt d a^2} {d^2} + \frac {2 C k \delta^{-1/2}} {\sqrt n}, \label{eq_condition}
    \end{align}
    there must exist a threshold separating all the good tasks from the bad ones.
    We can verify that condition \eqref{eq_condition} is satisfied when
    \begin{align*}
        n &\gtrsim C^2 \cdot k^2 \cdot \frac{1}{(a^2 - b^2)^2}, ~~\text{ and}\\
        d &\gtrsim \Big(\frac {a^2} {a^2 - b^2}\Big)^2  k^4 + k \log\Big(\frac {2k} {\delta}\Big) + p.
    \end{align*}
    
    To apply Algorithm \ref{alg:task_modeling}, we set the threshold $\gamma$ as $k / \alpha$ times any value between the left-hand and right-hand side of equation \eqref{eq_condition} (recall that $k / \alpha$ is inherited from Lemma \ref{lemma_gap}).
    Thus, when $n$ and $d$ satisfy the condition above, combined with Lemma \ref{lemma_gap}, with high probability, for any $i$ such that $\hat\theta_i < \gamma$, $i$ must be a good task.
    When $\hat\theta_i > \gamma$, $i$ much be a bad task.
    Thus, we have finished the proof.
\end{proof}
\section{Experiment Details}\label{sec_exp_detail}

We describe details that were left out of the paper's main text.
First, we describe the additional experimental setup and the implementation specifics.
Second, we present results to further validate the sample complexity of task modeling.
Third, we provide the experimental results that are omitted from Section \ref{sec:exp}, including the results for fairness measures and ablation studies.

\begin{figure}[!htbp]
    \begin{subfigure}[b]{0.33\textwidth}
    \centering
    \includegraphics[width=0.7\textwidth]{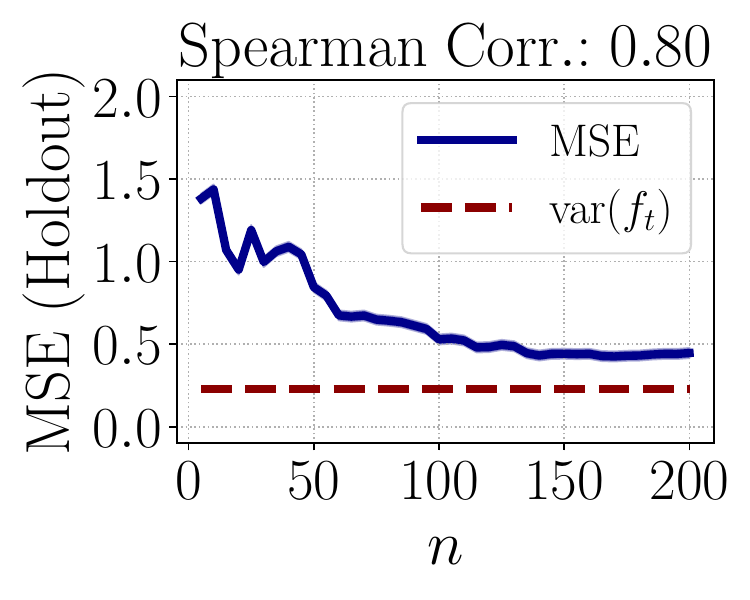}\vspace{-0.1in}
    \caption{Chemprot}
  \end{subfigure}
 \begin{subfigure}[b]{0.33\textwidth}
    \centering
    \includegraphics[width=0.7\textwidth]{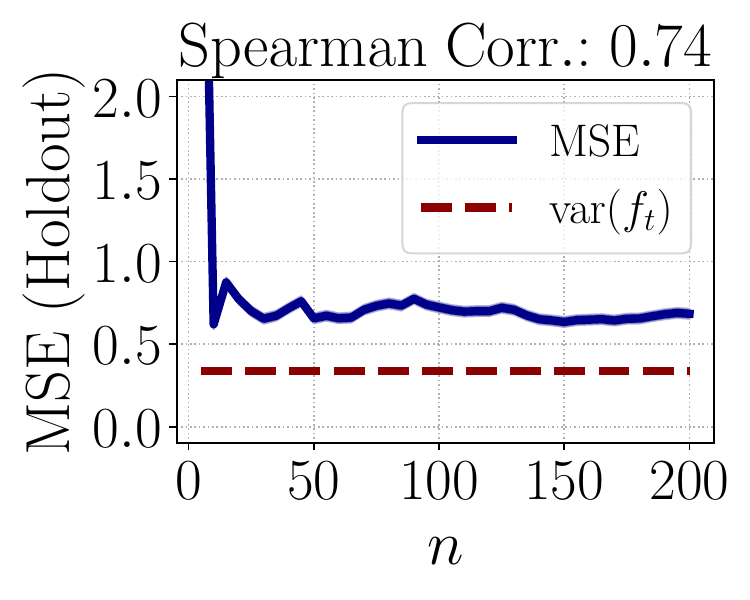}\vspace{-0.1in}
    \caption{CDR}
  \end{subfigure}
  \begin{subfigure}[b]{0.33\textwidth}
    \centering
    \includegraphics[width=0.7\textwidth]{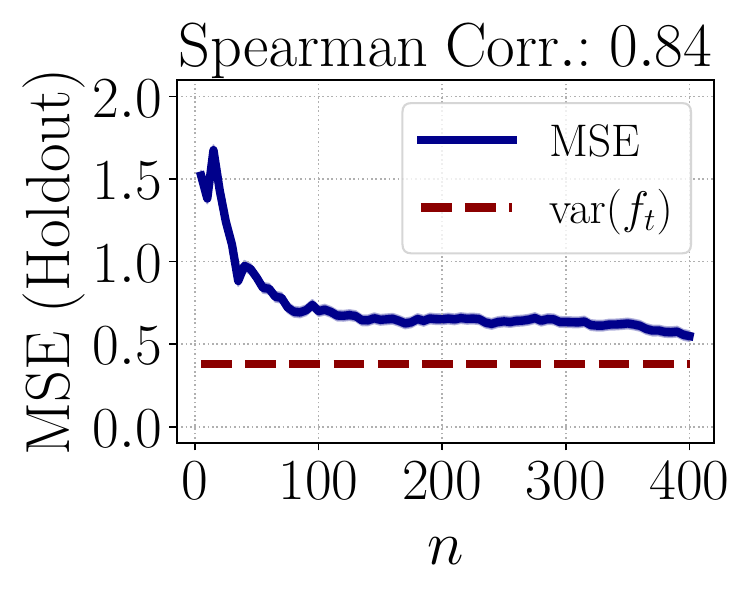}\vspace{-0.1in}
    \caption{TREC}
  \end{subfigure}\\
  \begin{subfigure}[b]{0.33\textwidth}
    \centering
    \includegraphics[width=0.7\textwidth]{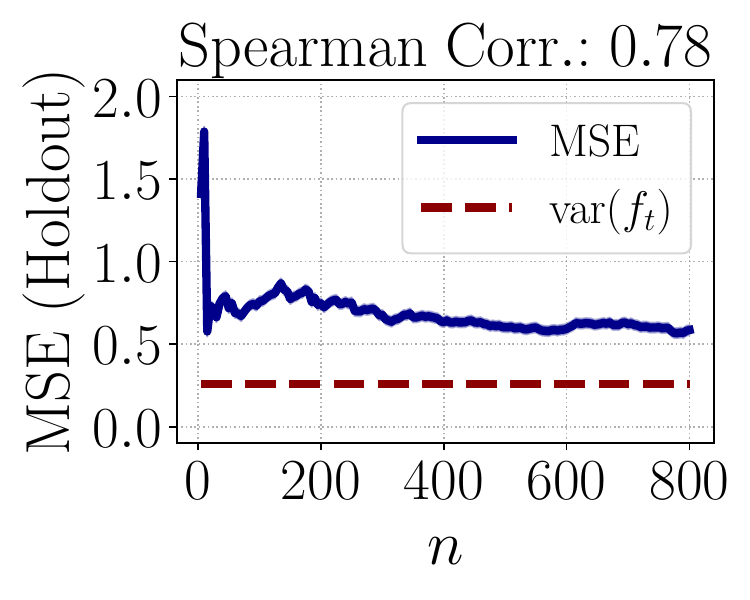}\vspace{-0.1in}
    \caption{Semeval}
  \end{subfigure}
  \begin{subfigure}[b]{0.33\textwidth}
    \centering
    \includegraphics[width=0.7\textwidth]{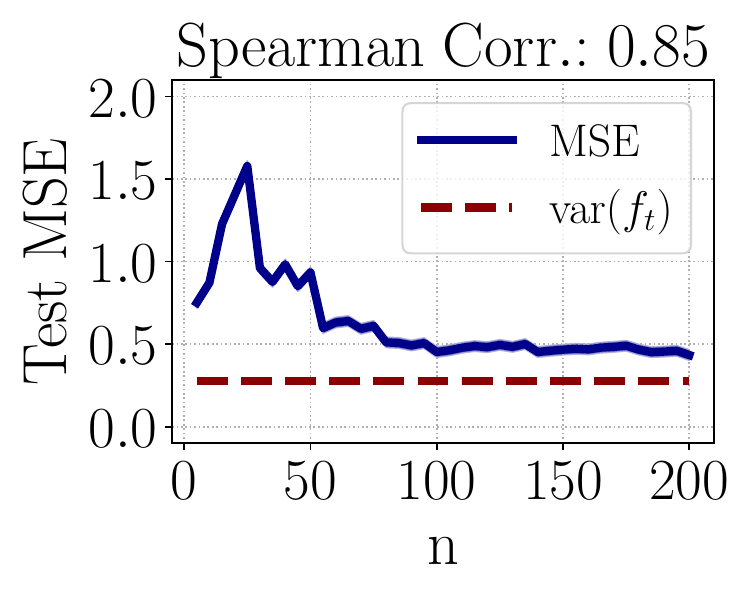}\vspace{-0.1in}
    \caption{CoLA}
  \end{subfigure}
  \begin{subfigure}[b]{0.33\textwidth}
    \centering
    \includegraphics[width=0.7\textwidth]{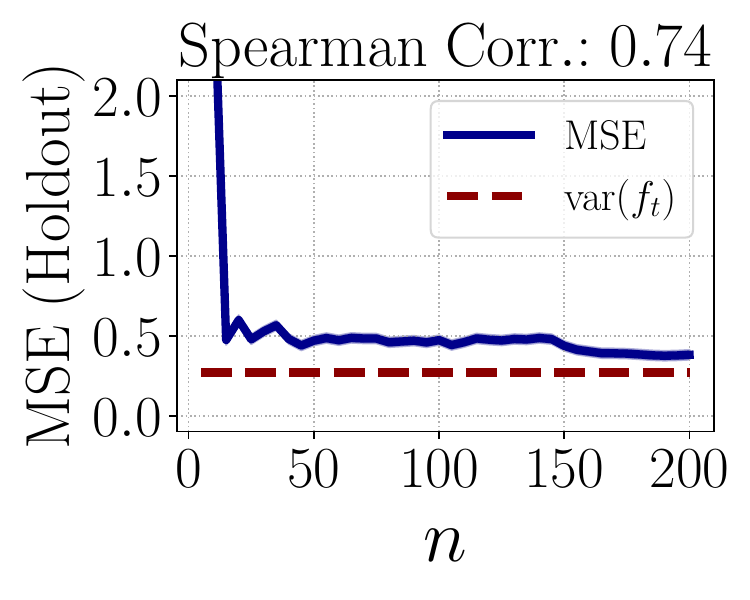}\vspace{-0.1in}
    \caption{RTE}
  \end{subfigure}
  \begin{subfigure}[b]{0.33\textwidth}
    \centering
    \includegraphics[width=0.7\textwidth]{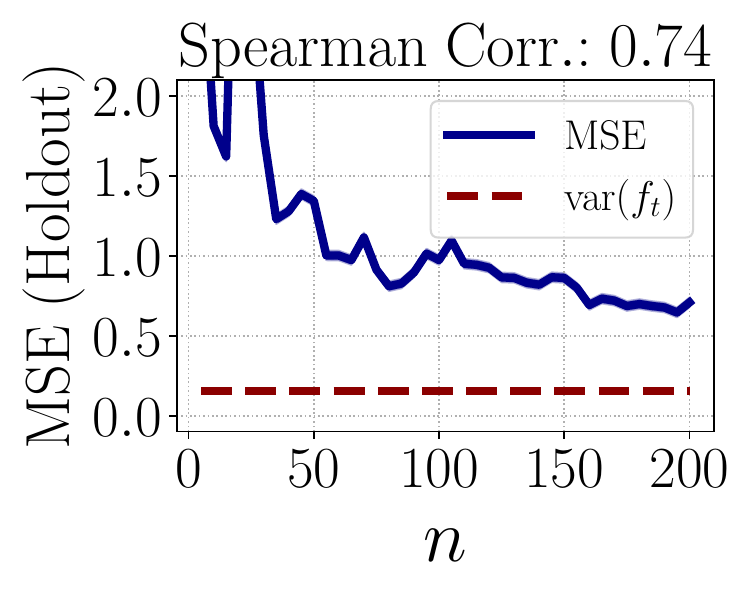}\vspace{-0.1in}
    \caption{COPA}
  \end{subfigure}
  \begin{subfigure}[b]{0.33\textwidth}
    \centering
    \includegraphics[width=0.7\textwidth]{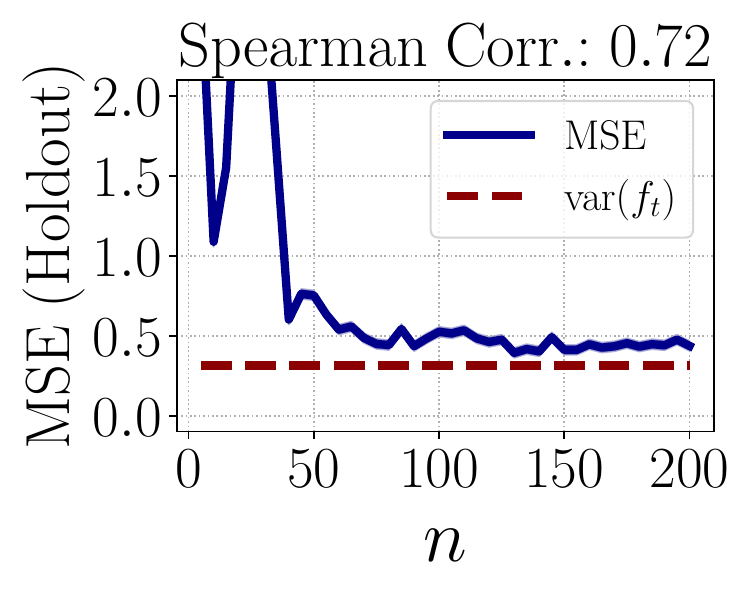}\vspace{-0.1in}
    \caption{WSC}
  \end{subfigure}
  \begin{subfigure}[b]{0.33\textwidth}
    \centering
    \includegraphics[width=0.7\textwidth]{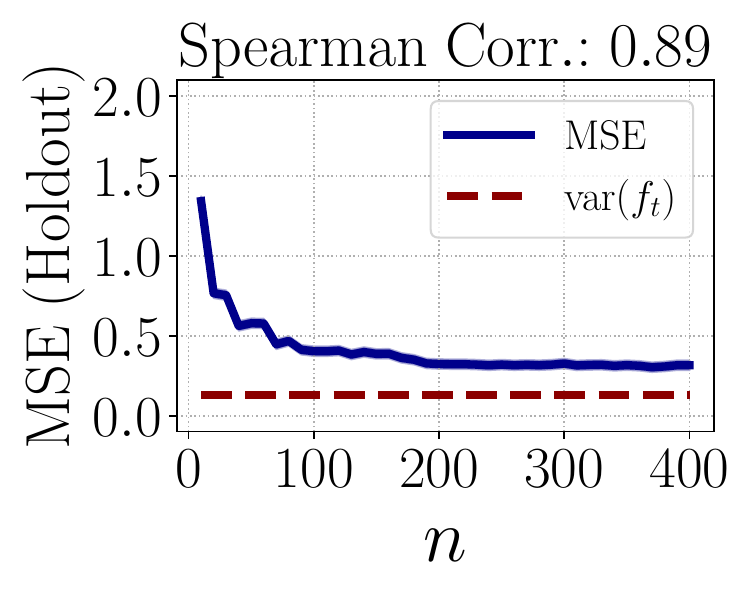}\vspace{-0.1in}
    \caption{HI}
  \end{subfigure}
  \begin{subfigure}[b]{0.33\textwidth}
    \centering
    \includegraphics[width=0.7\textwidth]{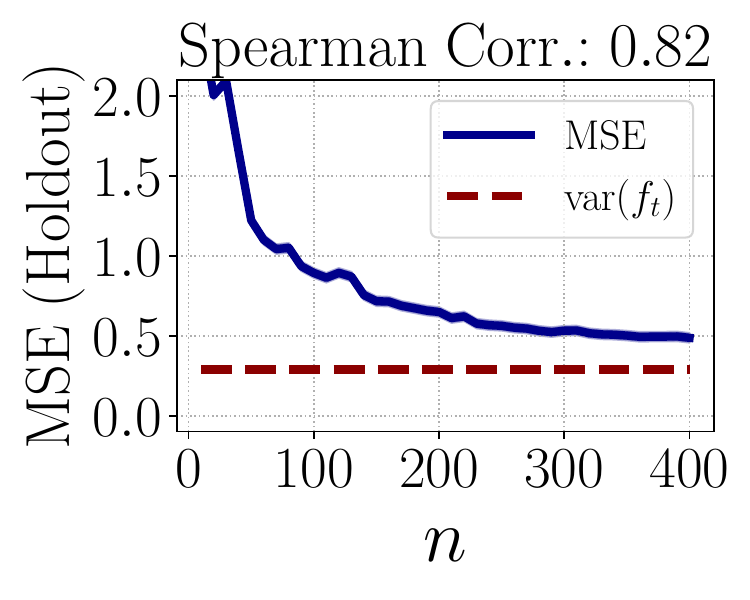}\vspace{-0.1in}
    \caption{LA}
  \end{subfigure}
  \begin{subfigure}[b]{0.33\textwidth}
    \centering
    \includegraphics[width=0.7\textwidth]{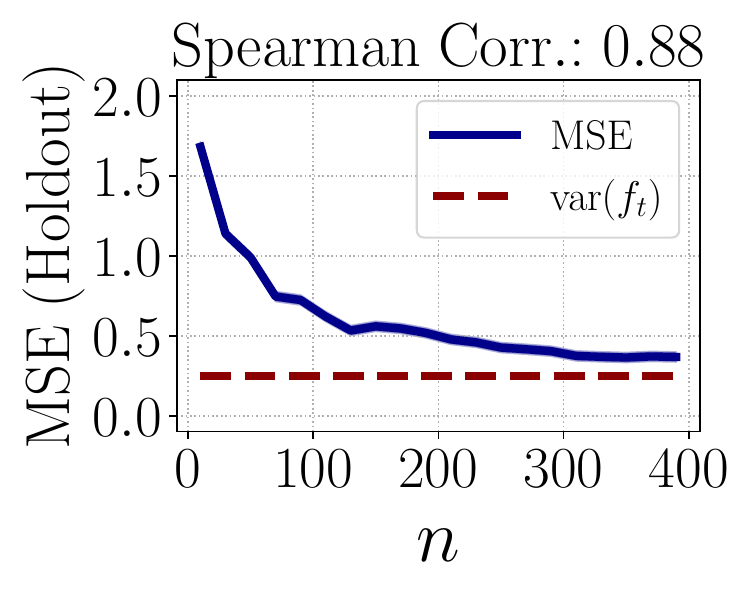}\vspace{-0.1in}
    \caption{MN}
  \end{subfigure}
  \begin{subfigure}[b]{0.33\textwidth}
    \centering
    \includegraphics[width=0.7\textwidth]{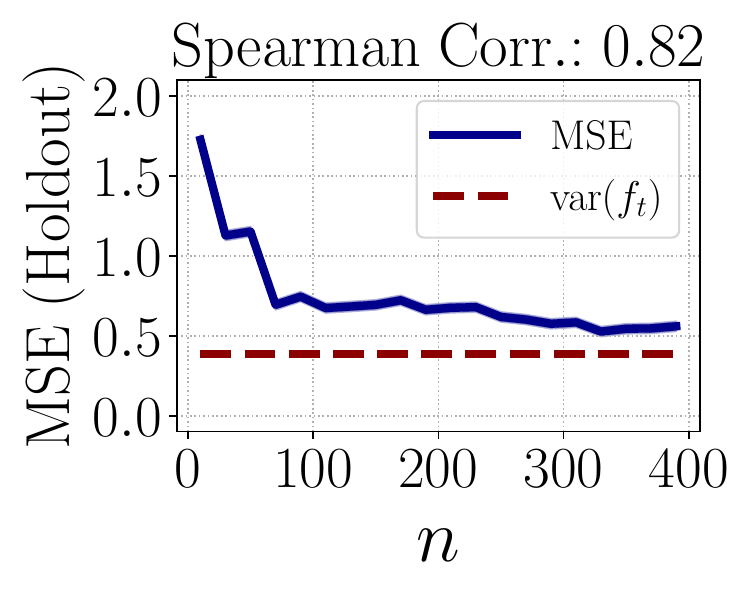}\vspace{-0.1in}
    \caption{NM}
  \end{subfigure}
  \begin{subfigure}[b]{0.33\textwidth}
    \centering
    \includegraphics[width=0.7\textwidth]{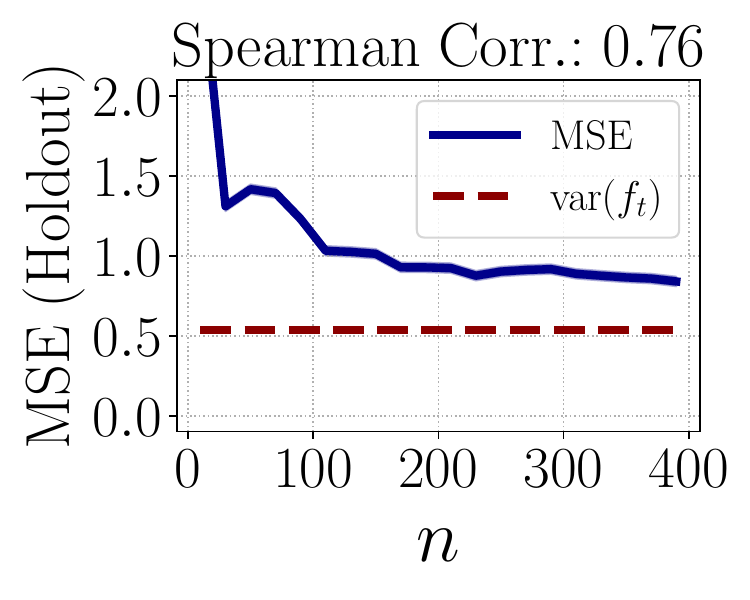}\vspace{-0.1in}
    \caption{KS}
  \end{subfigure}
  \begin{subfigure}[b]{0.33\textwidth}
    \centering
    \includegraphics[width=0.7\textwidth]{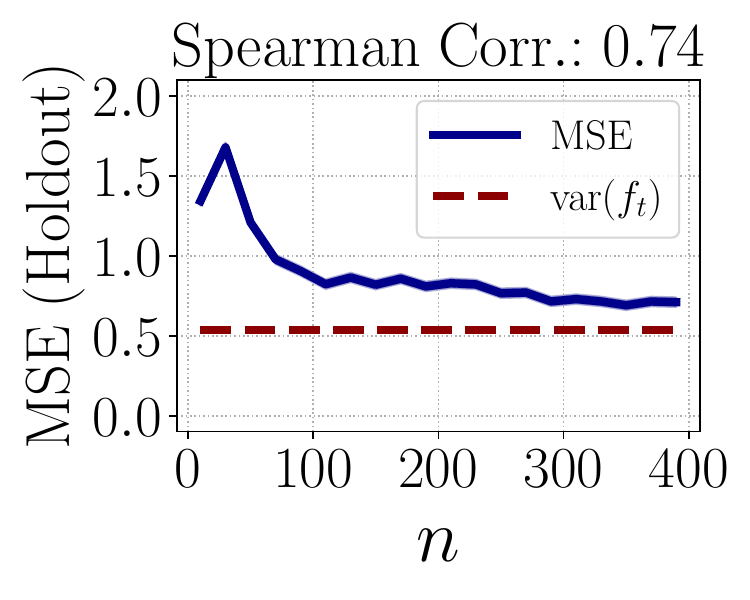}\vspace{-0.1in}
    \caption{NJ}
  \end{subfigure}
  \begin{subfigure}[b]{0.33\textwidth}
    \centering
    \includegraphics[width=0.7\textwidth]{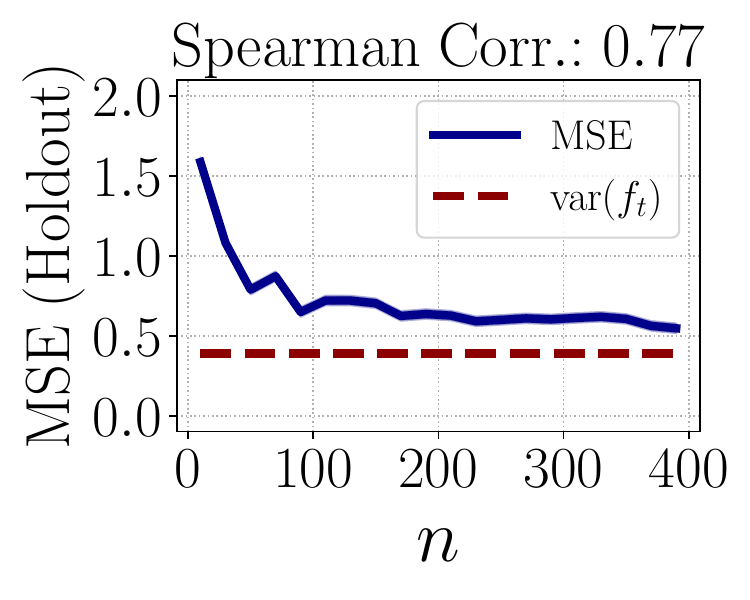}\vspace{-0.1in}
    \caption{NV}
  \end{subfigure}
  \begin{subfigure}[b]{0.33\textwidth}
    \centering
    \includegraphics[width=0.7\textwidth]{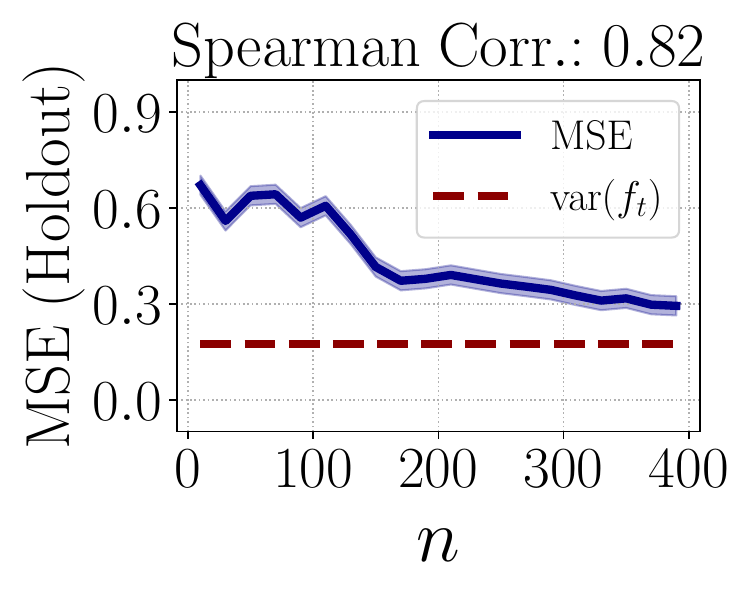}\vspace{-0.1in}
    \caption{SC}
  \end{subfigure}
   \begin{subfigure}[b]{0.33\textwidth}
     \centering
     \includegraphics[width=0.7\textwidth]{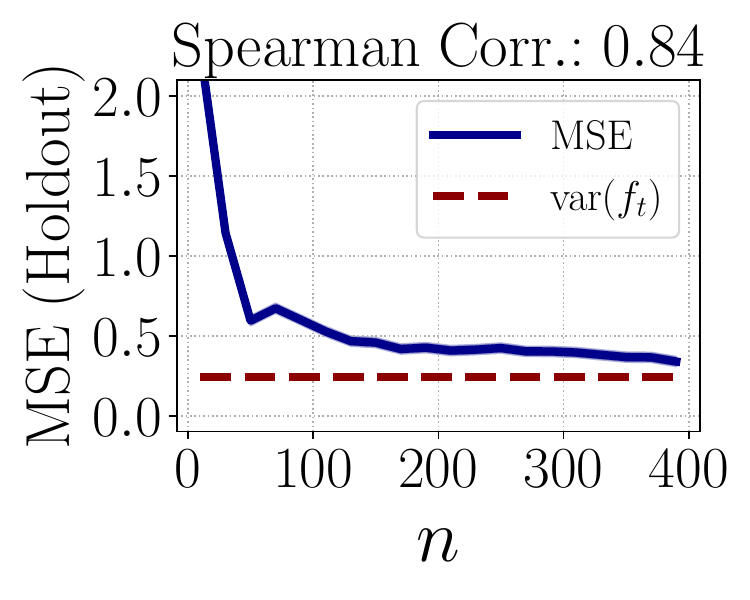}
     \vspace{-0.1in}\caption{Target task: RI}
   \end{subfigure}
  \caption{The MSE of linear surrogate models converges close to the variance of MTL performances.
    The red line shows the variance of $f$, measured across five random seeds.
    We observe that the Spearman correlation coefficient between the predictions and the true performances is \textbf{0.8} on average.
  \textbf{(a-d)} Weak supervision tasks. 
  \textbf{(e-h)} NLP tasks.
  \textbf{(i-p)} Multi-group learning tasks.}
  \label{fig_convergence}  
\end{figure}

\subsection{Implementation Details}\label{add_exp_setup}

For evaluating multitask learning with natural language processing tasks, we collect twenty-five tasks from several benchmarks, including GLUE, SuperGLUE, TweetEval, and ANLI. 
Due to the computation constraint, we did not include the tasks with a training set size larger than 100k. 
The collection spans numerous categories of tasks, including sentence classification, natural language inference, and question answering. 
Table \ref{tab_text_statistics} shows the statistics of the twenty-five tasks.

\begin{table*}[h!]
\centering
\caption{Dataset description and statistics of twenty-five text datasets.}\label{tab_text_statistics}
{\footnotesize
\begin{tabular}{@{}lccccc@{}}
\toprule
Task & Benchmark & Train. Set & Dev. Set  & Task Category & Metrics \\ \midrule
CoLA  & GLUE & 8.5k & 1k & Grammar acceptability & Matthews corr. \\
MRPC  & GLUE & 3.7k & 1.7k & Sentence Paraphrase & Acc./F1 \\
RTE   & GLUE & 2.5k & 3k & Natural language inference & Acc.\\
SST-2 & GLUE & 67k & 1.8k & Sentence classification & Acc. \\
STS-B & GLUE & 7k & 1.4k & Sentence similarity & Pearson/Spearman corr. \\
WNLI  & GLUE & 634 & 146 & Natural language inference & Acc. \\
BoolQ   & SuperGLUE & 9.4k & 3.3k & Question answering & Acc. \\
CB      & SuperGLUE & 250 & 57 & Natural language inference & Acc./F1 \\
COPA    & SuperGLUE & 400 & 100 & Question answering & Acc. \\
MultiRC & SuperGLUE & 5.1k & 953 & Question answering & F$1_a$/EM \\
WiC     & SuperGLUE & 6k  & 638 & Word sense disambiguation & Acc. \\
WSC     & SuperGLUE & 554 & 104 & Coreference resolution & Acc. \\
Emoji       & TweetEval & 45k & 5k & Sentence classification & Macro-averaged F1\\
Emotion     & TweetEval & 3.2k & 374 & Sentence classification & Macro-averaged F1\\
Hate        & TweetEval & 9k & 1k & Sentence classification & Macro-averaged F1\\
Irony       & TweetEval & 2.9k & 955 & Sentence classification & F$1^{(i)}$\\
Offensive   & TweetEval & 12k & 1.3k & Sentence classification & Macro-averaged F1\\
Sentiment   & TweetEval & 45k & 2k & Sentence classification & Macro-averaged Recall\\
Stance (Abortion)   & TweetEval & 587 & 66 & Sentence classification & Avg. of F$1^{(a)}$ and F$1^{(f)}$ \\
Stance (Atheism)    & TweetEval & 461 & 52 & Sentence classification & Avg. of F$1^{(a)}$ and F$1^{(f)}$ \\
Stance (Climate)    & TweetEval & 355 & 40 & Sentence classification & Avg. of F$1^{(a)}$ and F$1^{(f)}$\\
Stance (Feminism)   & TweetEval & 597 & 67 & Sentence classification & Avg. of F$1^{(a)}$ and F$1^{(f)}$ \\
Stance (H. Clinton) & TweetEval & 620 & 69 & Sentence classification & Avg. of F$1^{(a)}$ and F$1^{(f)}$\\
ANLI (A1) & ANLI & 1.7k & 1k & Natural language inference & Acc. \\
ANLI (A2) & ANLI & 4.5k & 1k & Natural language inference & Acc. \\
\bottomrule
\end{tabular}
}
\end{table*}

\bigskip
We run the baselines using the open-sourced implementations from the respective publications. 
We describe the hyperparameters for baselines as follows. 

For higher-order approximation and task affinity grouping, we compute the task affinity scores between source and target tasks. Then, we select $m$ tasks with the largest task affinity scores as source tasks for each target task. $m$ is searched between 0 and the number of total tasks.

For gradient decomposition, we search the number of decomposition basis and auxiliary task gradient direction parameters, following the search space in \citet{dery2021auxiliary}.

For weighted training, we search the task weight learning rate in $[10^{-2}, 10^2]$. 
The hyper-parameters are tuned on the validation dataset by grid search. For each target task, we search 10 times over the hyper-parameter space. We use the same number of trials in tuning hyper-parameters for baselines.

\subsection{Omitted Results from Section \ref{sec_exp_res}}\label{sec_omitted_results}

\textbf{Complete results for NLP tasks.} In Table \ref{tab_acc_res_text_tasks}, we report the complete experimental results for applying our approach to NLP tasks, as reported in Section \ref{sec_exp_res}.

\begin{table*}[h!]
\centering
\caption{Accuracy/Correlation scores for five NLP tasks on the development set using surrogate modeling followed by thresholding (ours), as compared with STL and MTL methods.}\label{tab_acc_res_text_tasks}
\begin{small}
\begin{tabular}{@{}lcccccc@{}}
\toprule
Dataset    & CoLA & RTE & CB & COPA & WSC \\ 
Metrics    & Matthews Corr. & Accuracy & Accuracy & Accuracy & Accuracy \\\midrule
Train      & 8500 & 2500 & 250 & 400 & 554  \\
Validation & 1000 & 3000 & 57 & 100 & 104 \\ 
\midrule %
STL       & 59.38$\pm$0.70 & 67.94$\pm$0.74 & 70.36$\pm$1.82
 & 64.00$\pm$2.19 & 60.00$\pm$2.76 \\
Naive MTL & 57.11$\pm$0.81 & 69.31$\pm$0.97 & 71.78$\pm$1.39
 & 66.00$\pm$2.02 & 58.20$\pm$1.98 \\ %
HOA       & 60.09$\pm$0.75 & 69.03$\pm$2.03 & 80.71$\pm$2.62
 & 67.20$\pm$2.56 & 61.35$\pm$3.12 \\ %
\midrule %
\textbf{Alg. \ref{alg:task_modeling} (Ours)}        & \textbf{60.43$\pm$0.79} & \textbf{70.83$\pm$1.97} & \textbf{83.57$\pm$2.43} & \textbf{69.20$\pm$3.71} & \textbf{65.38$\pm$2.31} \\\bottomrule %
\end{tabular}
\end{small}
\end{table*}

\bigskip

\textbf{Optimizing fairness-related metrics.}
We show that task modeling is applicable to various performance metrics for capturing task affinity.

Besides the average performance and worst-group performance discussed in Section \ref{sec_exp_res}, we consider two fairness measures: demographic parity and equality of opportunity \cite{ding2021retiring}.
\begin{itemize}
    \item The demographic parity measure is defined as:
\[ \Big|{\Pr\big[\hat y = 1 \mid g = \text{black}\big] - \Pr\big[\hat y = 1 \mid g = \text{white}\big]}\Big|, \]
which measures the difference in the positive rates between white and African American demographic groups.

    \item The equality of opportunity measure is defined as:
\[ \Big|{\Pr\big[\hat y = 1 \mid y = 1, g = \text{black}\big] -  \Pr\big[\hat y = 1 \mid y = 1, g = \text{white}\big]}\Big|, \]
which measures the difference in the true positive rates between the two groups.
\end{itemize}

We consider the binary classification tasks with multiple subpopulation groups.
Table \ref{tab:fairness_measure} shows the comparative results. 
First, similar to the worst-group accuracy results, we find that multitask approaches (including ours and previous methods) decrease the violation of both fairness measures compared to ERM, suggesting the benefit of combining related datasets. 
Second, our approach consistently reduces both fairness measure violations more by \textbf{1.26\%} and \textbf{2.31\%} on average than previous multitask learning approaches, respectively.

\begin{table}[h!]
\centering
\caption{Violation of two fairness-related measures (demographic parity and equality of opportunity) on six multi-group learning tasks with tabular features, averaged over ten random seeds. \textbf{Lower is better.}}\label{tab:fairness_measure}
\begin{footnotesize}
\begin{tabular}{@{}lccccccccccc@{}}
\toprule
\makecell[l]{Demographic parity} & HI & KS & LA & NJ & NV & SC  \\ \midrule
STL & 12.95$\pm$1.76 & 4.09$\pm$1.15 & 26.30$\pm$1.21 & 26.06$\pm$0.53 & 12.62$\pm$1.99 & 22.51$\pm$0.47  \\
Naive MTL  & 8.25$\pm$1.31 & 4.06$\pm$1.17 & 21.24$\pm$0.66 & 27.73$\pm$0.94 & 13.35$\pm$0.51 & 18.83$\pm$0.80   \\
HOA  & 8.63$\pm$2.95 & 6.15$\pm$3.00 & 22.83$\pm$0.53 & 26.14$\pm$0.29 & 13.15$\pm$0.64 & 19.39$\pm$1.05 \\
TAG  & 8.93$\pm$2.35 & 3.97$\pm$0.61 & 20.72$\pm$0.86 & 25.21$\pm$0.68 & 12.24$\pm$0.82 & 18.77$\pm$0.85 \\
TAWT  & 18.12$\pm$1.80 & 4.84$\pm$0.71 & 25.77$\pm$0.94 & 25.66$\pm$0.38 & 12.40$\pm$0.74 & 23.16$\pm$0.42   \\
\midrule
\textbf{Alg. \ref{alg:task_modeling} (Ours)}  & \textbf{7.63$\pm$2.12} & \textbf{1.06$\pm$0.62} & \textbf{17.25$\pm$1.13} & \textbf{24.96$\pm$0.63} & \textbf{11.34$\pm$1.31} & \textbf{17.66$\pm$0.80}   \\\midrule \midrule
\makecell[l]{Equality of opportunity} & HI & KS & LA & NJ & NV & SC \\ \midrule
STL            & 9.86$\pm$1.29 & 1.43$\pm$3.62 & 29.64$\pm$3.24 & 22.43$\pm$1.02 & 13.61$\pm$3.67 & 29.93$\pm$0.77   \\
Naive MTL            & 3.86$\pm$0.84 & 2.03$\pm$2.11 & 21.26$\pm$1.35 & 24.43$\pm$1.49 & 12.14$\pm$2.21 & 21.22$\pm$1.75  \\
HOA  &3.55$\pm$2.85 & 4.34$\pm$3.18 & 22.88$\pm$1.72 & 22.98$\pm$1.18 & 12.92$\pm$2.23 & 23.31$\pm$1.77 \\
TAG  & 4.27$\pm$0.25 & 1.18$\pm$0.97 & 20.66$\pm$1.43 & 21.89$\pm$0.69 & 11.66$\pm$1.58 & 19.89$\pm$1.10  \\
TAWT  & 4.21$\pm$2.25 & 1.40$\pm$2.14 & 30.38$\pm$2.17 & 23.26$\pm$0.30 & 11.77$\pm$1.01 & 30.86$\pm$0.84  \\
\midrule
\textbf{Alg. \ref{alg:task_modeling} (Ours)}  & \textbf{0.24$\pm$1.32} & \textbf{0.21$\pm$1.34} & \textbf{14.14$\pm$2.32} & \textbf{21.48$\pm$0.90} & \textbf{9.65$\pm$3.49} & \textbf{18.54$\pm$1.61}   \\\bottomrule
\end{tabular}
\end{footnotesize}
\end{table}

\subsection{Ablation Studies for Constructing Surrogate Models}\label{sec_ablate}

\noindent\textbf{Subset size:} 
Recall that we collect training results by sampling $n$ subsets from a uniform distribution over subsets of a constant size.
We evaluate the MSE of task models by varying $\alpha \in \set{2, 5, 10, 20}$.
To control the computation budget the same, we scale the number of subsets $n$ according to $\alpha$. We train $n = 800, 400, 200, 100$ models with $\alpha = 2, 5, 10, 20$, respectively.
We observe similar convergence results as in Figure \ref{fig_varying_alpha}.
Among them, $\alpha = 5$ yields a highest Spearman's correlation of $0.89$ between $f(\cdot)$ and $g(\cdot)$.

\begin{figure*}[h!]
  \begin{subfigure}[b]{0.245\textwidth}
    \centering
    \includegraphics[width=0.98\textwidth]{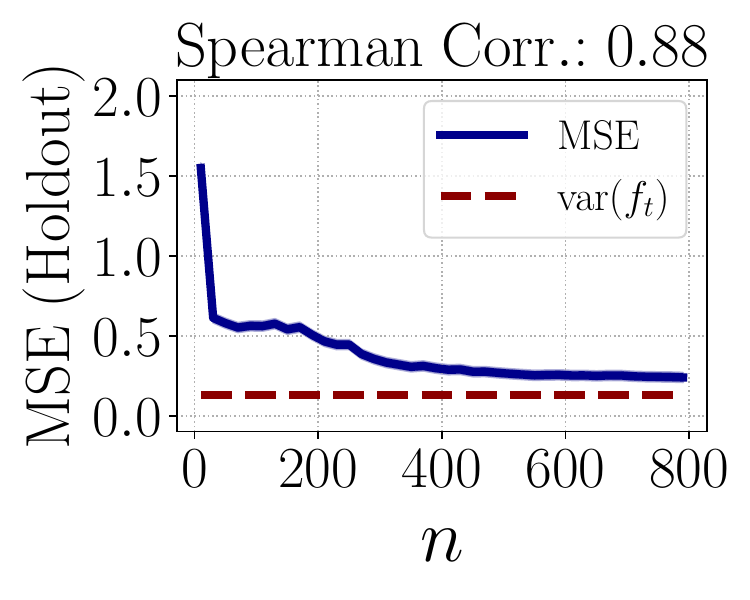}%
    \caption{$\alpha = 2$}
  \end{subfigure}\hfill%
  \begin{subfigure}[b]{0.245\textwidth}
    \centering
    \includegraphics[width=0.98\textwidth]{figures/err_income_2018_HI.pdf}%
    \caption{$\alpha = 5$}    
  \end{subfigure}\hfill
  \begin{subfigure}[b]{0.245\textwidth}
    \centering
    \includegraphics[width=0.98\textwidth]{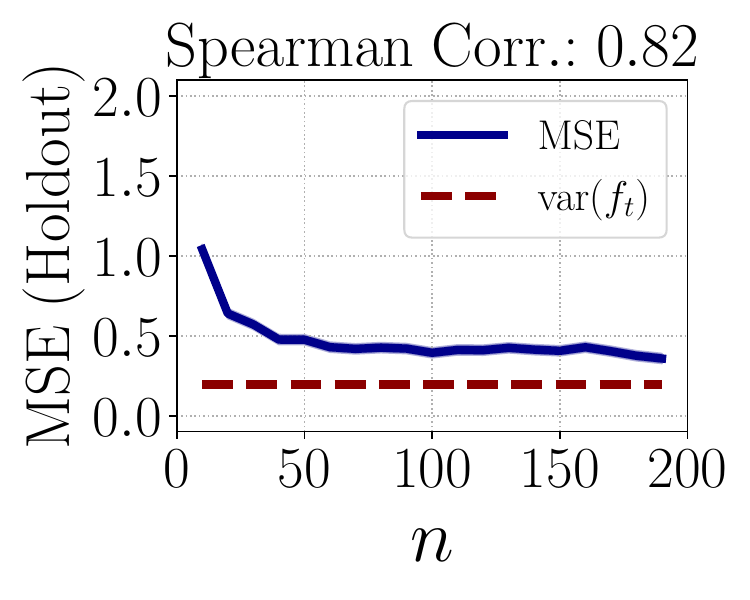}%
    \caption{$\alpha = 10$}    
  \end{subfigure}\hfill
  \begin{subfigure}[b]{0.245\textwidth}
    \centering
    \includegraphics[width=0.98\textwidth]{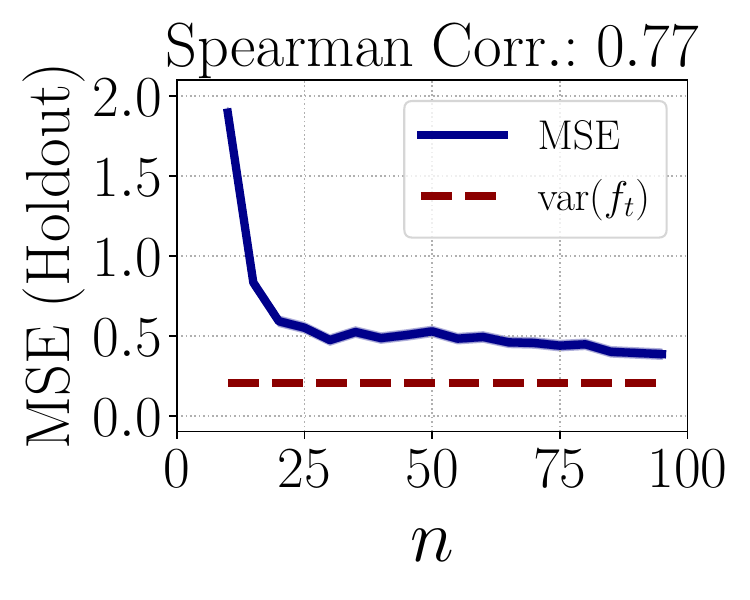}%
    \caption{$\alpha = 20$}    
  \end{subfigure}
    \caption{Fitting surrogate models using different $\alpha$ evaluated on a fixed target task.}\label{fig_varying_alpha}  
\end{figure*}

\medskip

\noindent\textbf{Loss function:} We consider three choices of prediction losses, including zero-one accuracy, cross-entropy loss, and classification margin.
We observe that the classification margin is more effective than the other two metrics. %
The Spearman's correlation of using the margin is $0.86$ on average over two tasks (HI and LA).
In contrast, the Spearman's correlations of using the loss and accuracy are $0.61$ and $0.34$, respectively.
Besides, we compare the task selection using the three metrics in Table \ref{tab_ablation_choices_of_f}. We find that using the margin outperforms the other two by 0.37\% on average over the six target tasks in terms of worst-group accuracy.

\begin{table*}[h!]
\centering
\caption{Choosing different loss functions $\ell$ for six target tasks in the multi-group learning setting.}\label{tab_ablation_choices_of_f}
\begin{footnotesize}
\begin{tabular}{@{}lcccccccccc@{}}
\toprule
  & HI & KS & LA & NJ & NV & SC \\ \midrule
$f$ uses zero-one accuracy  & 75.16$\pm$0.70 & 76.39$\pm$1.09 & 75.15$\pm$0.43 & 77.40$\pm$0.49 & 74.34$\pm$1.81 & 77.29$\pm$0.19  \\
$f$ uses cross-entropy loss      & 75.33$\pm$0.80 & 75.82$\pm$0.60 & 74.19$\pm$1.37 & 77.51$\pm$0.35 & 74.55$\pm$1.60 & 77.21$\pm$0.27  \\
$f$ uses classification margin    & {75.47$\pm$0.73} & {76.96$\pm$0.69} & {75.62$\pm$0.11} 
 & {78.17$\pm$0.36}  & {75.21$\pm$0.52} & {77.62$\pm$0.34} \\
 \bottomrule
\end{tabular}
\end{footnotesize}
\end{table*}

\medskip
\noindent\textbf{Number of sampled subsets:} Lastly, we show that task selection remains stable under different values of $n$. 
We measure the effect on two tasks (HI and LA) by comparing the 10 tasks with the smallest coefficients estimated from $n = 100, 200, 400$ subsets. 
We observe that using $100$ subsets identifies 7/10 source tasks compared with $n = 400$. Increasing $n$ to $200$ further identifies 9/10 source tasks compared with $n=400$.

\end{document}